\documentclass{article}

 \usepackage[nonatbib,preprint]{neurips_2026}

\usepackage[utf8]{inputenc} 
\usepackage[T1]{fontenc}    
\usepackage{hyperref}       
\usepackage{url}            
\usepackage{booktabs}       
\usepackage{amsfonts}       
\usepackage{nicefrac}       
\usepackage{microtype}      
\usepackage{xcolor}         
\usepackage[table,dvipsnames]{xcolor}

\usepackage[numbers,compress]{natbib}
\usepackage{amsmath}
\usepackage{thm-restate}
\usepackage{adjustbox}
\usepackage{graphicx}

\usepackage{amsmath,amsfonts,bm}

\def\eqref#1{Eq.~\ref{#1}}

\def\1{\bm{1}}

\DeclareMathAlphabet{\mathsfit}{\encodingdefault}{\sfdefault}{m}{sl}
\SetMathAlphabet{\mathsfit}{bold}{\encodingdefault}{\sfdefault}{bx}{n}

\newcommand{\E}{\mathbb{E}}

\DeclareMathOperator*{\argmin}{arg\,min}

\newcommand{\mask}{0}

\newcommand{\matrixb}{\left[ \begin{array}}
\newcommand{\matrixe}{\end{array} \right]}

\usepackage{xspace}
\makeatletter
\DeclareRobustCommand\onedot{\futurelet\@let@token\@onedot}
\def\@onedot{\ifx\@let@token.\else.\null\fi\xspace}

\def\eg{\emph{e.g}\onedot} 
\def\ie{\emph{i.e}\onedot}

\def\etal{\emph{et al}\onedot}
\makeatother

\renewcommand{\paragraph}[1]{\vspace{1mm}\noindent\textbf{#1}\,\,\,}

\usepackage{enumitem}
\setlist[itemize]{align=parleft,left=0pt}

\usepackage[most]{tcolorbox}
\usepackage{caption}
\newtcolorbox{findingbox}{
  colback=blue!5,
  colframe=blue!60!black,
  boxrule=0.9pt,
  arc=2.5mm,
  left=1.2mm,
  right=1.2mm,
  top=1mm,
  bottom=1mm
}
\usepackage{wrapfig}
\usepackage{booktabs}
\usepackage{multirow}
\usepackage{amsthm}
\usepackage{mathtools}

\usepackage{tabularx}
\theoremstyle{plain}
\usepackage[table]{xcolor}
\newtheorem{theorem}{Theorem}[section]
\newtheorem{proposition}[theorem]{Proposition}
\newtheorem{lemma}[theorem]{Lemma}

\theoremstyle{definition}
\newtheorem{definition}[theorem]{Definition}

\theoremstyle{remark}

\newcommand{\x}{\mathbf{x}}

\def\mask{{m}}
\def\Mask{{\mathbf{M}}}

\def\x{{\mathbf{x}}}
\def\e{{\mathbf{e}}}

\def\model{{\mathbf{\mu}_\theta^{i}}}

\newcommand{\red}[1]{{\color{BrickRed}{#1}}}
\newcommand{\blue}[1]{{\color{MidnightBlue}{#1}}}
\def\eqref#1{Eq.~(\ref{#1})}

\usepackage{subcaption}

\title{Understanding and Accelerating the Training of Masked Diffusion Language Models}

\author{%
Chunsan Hong\textsuperscript{1}\thanks{Equal contribution.}
\quad
Sanghyun Lee\textsuperscript{1}$^*$
\quad
Chieh-Hsin Lai\textsuperscript{2}
\quad
Satoshi Hayakawa\textsuperscript{3}
\\
\textbf{Yuhta Takida\textsuperscript{2}}
\quad
\textbf{Yuki Mitsufuji\textsuperscript{2,4}}
\quad
\textbf{Seungryong Kim\textsuperscript{1}\thanks{Corresponding authors.}}
\quad
\textbf{Jong Chul Ye\textsuperscript{1}$^\dagger$}
\\[2mm]
\textsuperscript{1}KAIST
\quad
\textsuperscript{2}Sony AI
\quad
\textsuperscript{3}The University of Tokyo
\textsuperscript{4}Sony Group Corporation
}

\begin{document}

\maketitle

\begin{abstract}
Masked diffusion models (MDMs) have emerged as a promising alternative to autoregressive models (ARMs) for language modeling. However, MDMs are known to learn substantially more slowly than ARMs, which may become problematic when scaling MDMs to larger models. Therefore, we ask the following question: how can we accelerate standard MDM training while maintaining its final performance? To this end, we first provide a detailed analysis of why MDM training is slow. We find that the main factor is the locality bias of language: the predictive information for a token is concentrated in nearby positions. We further investigate how this bias slows learning and suggest a simple yet effective remedy: bell-shaped time sampling as a training strategy. Notably, MDMs trained with our training recipe reach the same validation negative log-likelihood (NLL) up to $\sim4\times$ faster than standard training on One Billion Word Benchmark (LM1B). We also show faster improvements in generative perplexity, zero-shot perplexity, and downstream task performance on various benchmarks. 
\end{abstract}

\begin{figure}[h]
    \centering
    \vspace{-0.3cm}
    \includegraphics[width=0.9\linewidth]{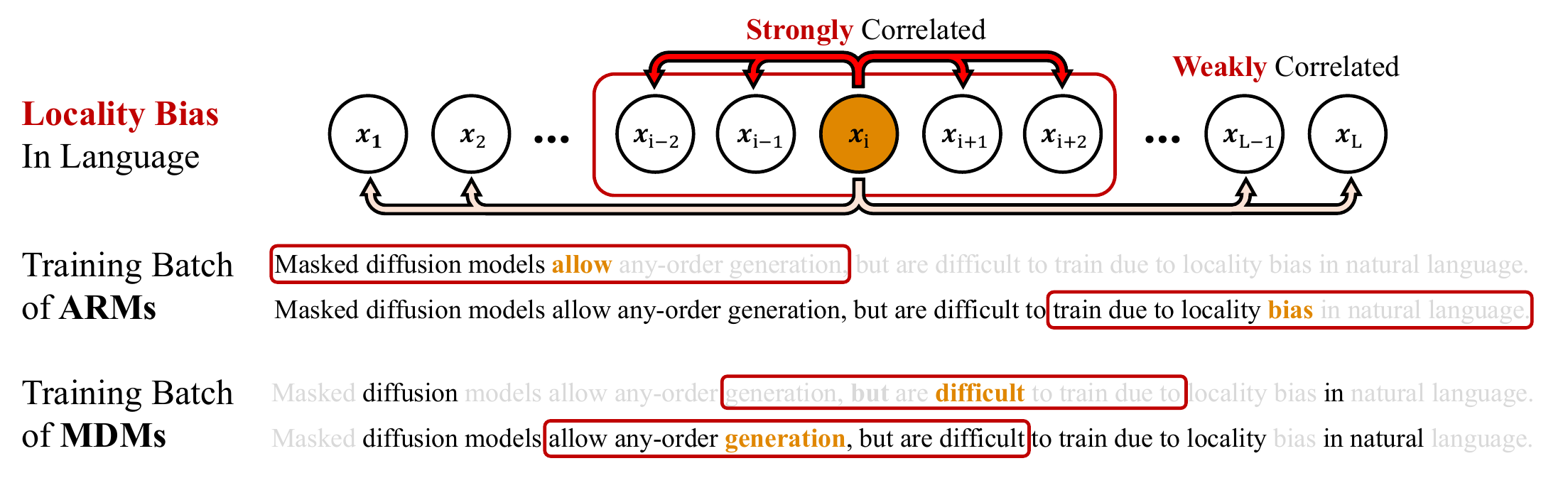}
    \caption{
    \textit{Locality bias} in language makes MDMs slow learners. The \textcolor{BrickRed}{red box} illustrates the concept of locality bias: a token is influenced more strongly by nearby tokens and more weakly by distant ones. 
    During training, ARMs learn to predict the \textcolor{orange!85!black}{target token} from left-filled sequences, where an appropriate amount of local information is always available; in contrast, MDMs learn to predict the \textcolor{orange!85!black}{target token} from arbitrary masked contexts, which may include cases where local information is either extremely scarce or extremely abundant, as highlighted by the \textcolor{BrickRed}{red boxes}.
    Our analysis suggests that these two regimes may be among the critical factors underlying the slow training of MDMs.
    }
    \label{fig:intro}
    \vspace{-0.2cm}
\end{figure}

\section{Introduction}

Recently, discrete diffusion models~\citep{lou2024discrete,Austin2021} have rapidly emerged as a compelling framework for language modeling. They can be broadly divided into uniform discrete diffusion models~\citep{lou2024discrete,sahoo2025diffusionduality} and masked diffusion models (MDMs)~\citep{shi2025simplified,lou2024discrete,sahoo2024simple,ou2024RADD}, with various frameworks~\citep{xie2025vadd,liu2025discrete,hayakawa2025distillation,kim2026flexMDM,zheng2026cadd,zhou2025ccdd,pynadath2025candi,peng2025papl,hong2026lomdm} further developed from these two families. Among this wide range of studies, \textit{the most basic MDM framework}~\citep{ou2024RADD,sahoo2024simple} \textit{remains the most widely adopted} in real-world applications precisely because of its simplicity. Large-scale models, such as LLaDA~\citep{nie2025llada}, Seed Diffusion~\citep{song2025seed}, and Dream~\citep{ye2025dream}, ultimately follow the most standard generation scheme, in which model predictions are used to gradually convert a fully masked sentence into tokens. Various inference-time techniques~\citep{lee2025lookahead,hayakawa2025demystifyingmaskgitsamplerbeyond}, reinforcement learning methods~\citep{hong2025improvingdiscretediffusionunmasking,zhao2025d1scalingreasoningdiffusion}, and even retrieval-augmented generation~\citep{yu2026diffusionRag1,kim2026diffusionRag2} are built upon this basic MDM framework.

Recent evidence suggests standard MDMs can become competitive with autoregressive models (ARMs). For general-purpose models, MDMs such as LLaDA~\citep{nie2025llada} and Dream~\citep{ye2025dream} have surpassed same-sized ARM counterparts such as LLaMA 3 8B~\citep{grattafiori2024llama3herdmodels}, while for coding models, DiffuCoder~\citep{gong2026diffucoder} and Dream-Coder~\citep{xie2025dreamcoder} have matched same-sized Qwen2.5-Coder~\citep{hui2024qwen25coder}. Prior work~\citep{prabhudesai2025LimitedData} further suggests that given sufficiently long training, MDMs can eventually match the performance of ARMs. 

However, the main bottleneck of MDMs is that they learn much more slowly than ARMs. Specifically, MDMs are known to require$\sim16\times$ training compute to reach the same validation loss of ARMs~\citep{prabhudesai2025LimitedData,sahoo2026scaling}.
This becomes a major obstacle when scaling MDMs to real-world LLMs, since the training cost of LLMs is extremely high; for example, even ARM-based LLaMA 3.1 405B spent $\sim$30M H100 GPU hours for training. Differences in training efficiency can translate into enormous differences in both cost and time. In this context, we raise a key question that has been overlooked in prior works:

\begin{center}
\textit{How can we accelerate standard MDM training while maintaining its final performance?}
\end{center}

To this end, we first \textbf{analyze why MDM training is slow.} Stepping back from MDM, we begin by separating two possible causes of slow any-order learning. The first is the locality bias of language: each token in natural language is influenced much more strongly by nearby tokens than by distant ones~\citep{futrell2019localitybias}. Since any-order generation includes orders beyond left-to-right (L2R), locality bias can shape its training dynamics differently. The second is the large order-space complexity of the any-order generation problem: unlike a fixed L2R order, models must learn a vastly larger family of generation problems. We show that the former factor, locality bias, has a substantially greater impact on training speed. Furthermore, we analyze why the locality bias slows MDM training, and find that both high-context (almost no masked) and low-context (almost fully masked) samples waste training budget. 
Although L2R bias, closely related to locality bias, has been pointed out as a challenge in many prior works~\citep{kim2025trainworstplanbest,xue2025anyordergpt}, there has been little detailed analysis of \textit{how} it slows down training. 

Our second contribution is to introduce a \textbf{simple, effective, and generalizable technique for accelerating MDM training}. Motivated by our analysis of why MDMs learn slowly, we propose \textit{bell-shaped time sampling}, which increases the probability of sampling $t$ from the middle-context region, and show that this significantly speeds up MDM training. Notably, bell-shaped time sampling achieves up to \textbf{$\mathbf{\sim4\times}$ faster training} in reaching the same NLL on LM1B~\citep{chelba2013lm1b}. We show that it also leads to faster improvements in generative perplexity (PPL), and zero-shot PPL for models trained on OpenWebText~\citep{Gokaslan2019owt}.
Finally, we show that our method remains effective even in the continual pretraining with ARM initialization~\citep{gong2024scaling} setting commonly adopted for large-scale MDMs~\citep{bie2025llada2}, and can further improve downstream task performance.

\section{Background}
\subsection{Preliminaries}
To analyze why MDMs are slow learners, we compare various classes of language models.
We briefly explain those frameworks and objectives. Refer to Appendix~\ref{sec:related_works} for details and related works.

\noindent\textbf{Notation.} 
Let $\x$ be a categorical sample with $L$ dimensions and $x^i\in\{1,\dots,V\}$ be the $i$-th dimension of $\x$. We define special mask state as $\mask=V+1$.
Let $\e$ denote the standard basis of $\mathbb{R}^{V+1}$.
Thus, each token is represented by a one-hot vector $\e_i$ in $\mathbb{R}^{V+1}$.
Let $\mathrm{Cat}(\boldsymbol{\pi})$ denote the categorical distribution over $V\!+\!1$ classes with $\boldsymbol{\pi}\in\Delta^{V+1}$, the $(V+1)$-simplex.
We denote a noised sequence at time $t$ by $\x_t$. 
We further define a mask indicator $\Mask\in\{0,1\}^L$ and the masked sequence $\x[\Mask]$ that is obtained by replacing $x^i$ with $\mask$ whenever $M^i=1$, while leaving it unchanged whenever $M^i=0$. 
Let $\Mask_n$ denote a mask indicator with mask number of $n$ and $\mathcal{M}_{n}$ as set of all possible combinatorial $\Mask_n$.
For ARM, let $\x^{<i}:=(x^1,\ldots,x^{i-1})$. We write uniform distribution as $\mathrm{U}(\cdot)$. 

\noindent\textbf{Masked diffusion models.}
MDM~\citep{sahoo2024simple,shi2025simplified} defines the forward corruption process using the absorbing mask strategy: once a token is masked, it remains masked throughout the remaining process.
For the diffusion process, define the time interval as $t\in\mathcal{T}=[0,1]$ where we corrupt the data from $t=0$ (original data) to $t=1$ (completely masked).  
Formally, the forward process at time $t$ is defined as $q_{t|0}(x_t^i \mid \x)= \mathrm{Cat}(\cdot; \alpha_t \e_{x^i} + (1 - \alpha_t)\e_\mask)$,
where the forward process gradually adds noise as $t$ grows. In this regard, the noise scheduler 
is typically set to $\alpha_t = 1-t$. 
To mimic the true posterior induced from the forward process, the parametrized reverse process is designed as $p_{\theta}(\x_s^{i} \mid \x_t)
=
\mathbf{1}(x_t^i \neq \mask)\cdot\mathrm{Cat}(\e_{x_t^i})
+ \mathbf{1}(x_t^i = \mask)\cdot\mathrm{Cat}(\tfrac{1-\alpha_s}{1-\alpha_t}\e_\mask + \tfrac{\alpha_s - \alpha_t}{1 - \alpha_t}\model(\x_t, t))$
where $s<t$ and $\mathbf{1}(\cdot)$ denotes indicator function. The diffusion network $\model(\x_t, t)\in\Delta^{V+1}$ here predicts the clean token $x^i$ over the vocabulary. The reverse process is defined as a token-wise conditionally independent distribution, \ie,
$p_\theta(\x_s \mid \x_t)
= \prod_{i=1}^L p_\theta(\x_s^{i} \mid \x_t)$. 
The NELBO is then:
\begin{align}
\textstyle\mathcal{L}_{\mathrm{MDM}}=\int_{0}^{1}
\tfrac{-\alpha_t'}{1-\alpha_t}\mathbb{E}_{\x\sim p_{\mathrm{data}},\x_t\sim q_{t|0}}
\Bigl[
\sum_{i:x_t^i=\mask}
-\log p_\theta(x^i|\x_t,t)\Big]dt.\label{eq:loss_mdlm}
\end{align}

\noindent\textbf{Autoregressive models.}
A standard ARM factorizes the joint distribution in the L2R order:
\begin{align}
    \textstyle p_{\theta}^{\textrm{ARM}}(\x)=\prod_{i=1}^{L} p_{\theta}^{\textrm{ARM}}\!\left(x^i \mid \x^{<i}\right),\quad \mathcal{L}_{\mathrm{ARM}}=\mathbb{E}_{\x\sim p _{\mathrm{data}}}[\sum_{i=1}^L-\log p_{\theta}^{\textrm{ARM}}(x^i|\x^{<i})].\label{eq:loss_ARM}
\end{align}

\noindent\textbf{$\sigma$-ARMs and any-order ARMs.} 
Typical ARMs factorize the joint distribution in the fixed L2R order, yet we consider a fixed-permutation ARM, where a single random permutation $\sigma$ is sampled once and kept fixed throughout training. 
For brevity, we refer to this variant as $\sigma$-ARM.
Formally, fix any random permutation $\sigma\in S_L$ where $S_L$ denotes the set of all possible permutations of $[L]$. $\sigma$-ARMs then factorize joint probability as $
    p_{\theta}^{\sigma\textrm{-ARM}}(\x)
=
\prod_{r=1}^{L}
p_{\theta}^{\sigma\textrm{-ARM}}\!\left(x^{\sigma_r} \mid \x^{<\sigma_{r}}\right)$. Besides, Any-order ARMs (AoARMs)~\citep{xue2025anyordergpt} factorize joint probability with \emph{variable (not fixed)} random orders as $p_\theta^{\textrm{AoARM}}(\x)=\mathbb{E}_{\sigma\sim U(S_L)}\prod_{r=1}^{L}p_{\theta}^{\textrm{AoARM}}\!\left(x^{\sigma_r} \mid \x^{<\sigma_{r}}, \sigma\right)$. The objective of AoARM is then:
\begin{gather}
\textstyle \mathcal{L}_{\mathrm{AoARM}}=
\mathbb{E}_{\x\sim p_{\mathrm{data}},\sigma\sim U(S_L)}
\left[
\sum_{r=1}^{L}
-\log p_{\theta}^{\textrm{AoARM}}\!\left(x^{\sigma_r} \mid \x^{<\sigma_{r}}, \sigma\right)
\right].\label{eq:loss_aogpt}
\end{gather}

Both MDM and AoARM share the same philosophy of performing \emph{any-order generation}. This connection also extends to the NELBO~\citep{kim2025trainworstplanbest}: if the MDM model is time-agnostic and the AoARM model is $\sigma$-agnostic, then $\mathcal{L}_{\mathrm{AoARM}}$ (\eqref{eq:loss_aogpt}) is theoretically equivalent to $\mathcal{L}_{\mathrm{MDM}}$ 
(\eqref{eq:loss_mdlm}).  
The practical difference is that AoARM utilizes causal attention such that the model is aware of its currently generated order, while MDM utilizes full attention, which is agnostic of generated order. 

\subsection{Shared Format of Learning Objectives Across Various Frameworks}
Although MDM, ARM, $\sigma$-ARM, and AoARM arise from different frameworks, all of these objectives can be viewed through the same lens: first choose the number of context (\ie, observed) tokens, then construct a partially observed context, and finally predict one target token from that context. 

Formally, let $N_c\in\{0,\dots,L-1\}$ denote the number of context tokens, let $\mathbf{z}$ denote the resulting partially observed context, and let $p_{\mathcal{F}}(\mathbf{z}\mid \mathbf{x},N_c)$ and $p_{\mathcal{F}}(i\mid \mathbf{x},\mathbf{z}_{N_c})$ denote the framework-specific distributions for constructing the context and selecting the target position, respectively. Then, for each $\mathcal{F}\in\{\mathrm{MDM},\mathrm{ARM},\sigma\mathrm{-ARM},\mathrm{AoARM}\}$, the loss can be written in the shared form:
\begin{align}
\textstyle\mathcal{L}_{\mathcal{F}}
=\sum_{N_c=0}^{L-1}\mathcal{L}_{\mathcal{F}}^{N_c}
,\quad\mathcal{L}_{\mathcal{F}}^{N_c}\coloneqq
\mathbb{E}_{\mathbf{x}\sim p_{\mathrm{data}},\mathbf{z}_{N_c}\sim p_{\mathcal{F}}(\mathbf{z}\mid \mathbf{x},N_c),i\sim p_{\mathcal{F}}(i\mid \mathbf{x},\mathbf{z}_{N_c})}
[-\log p_{\theta}(x^i\mid \mathbf{z}_{N_c})].
\label{eq:loss_shared}
\end{align}
Refer to Appendix~\ref{sec:shared_loss} for a full explanation. Therefore, the essential distinction between these frameworks does not lie in the token-level cross-entropy itself, but in how they define the distribution over contexts and target positions. For example, in ARM, $\mathbf{z}_{N_c}$ is the prefix $\x^{<N_c+1}$ and $i$ is deterministically the next position, $N_c+1$.
In MDM, $\mathbf{z}_{N_c}$ is a masked sequence with $N_c$ context tokens ($L-N_c$ masks), while $i$ is sampled from $U(i:z_{N_c}^i=m)$. For simplicity, we omit the framework-specific sampling distributions in $\mathcal{L}_{\mathcal{F}}^{N_c}$ and write $\mathbb{E}_{\x,\mathbf{z}_{N_c},i}[-\log p_{\theta}(x^i\mid \mathbf{z}_{N_c})]$. 

\section{Why Masked Diffusion Models Learn Slowly}\label{sec:slow_learner}
We now analyze why MDMs are slow learners. While many prior works focus on the L2R bias of language, we instead consider the broader notion of \textit{locality bias}~\citep{futrell2019localitybias}: in natural language, each token is influenced much more strongly by nearby tokens than by distant ones. We show here that this locality bias is indeed a critical factor in the slow training of MDMs. Furthermore, we show that low- and high-context samples impair MDM training for different reasons, both arising from locality bias. All experimental details for analysis in this section are in Appendix~\ref{sec:omitted_experimental_details}.

\subsection{Why Learning Any-order Language Generation is Hard}
\begin{wrapfigure}{r}{0.40\textwidth}
    \vspace{-1em}
    \centering
    \includegraphics[width=0.39\textwidth]{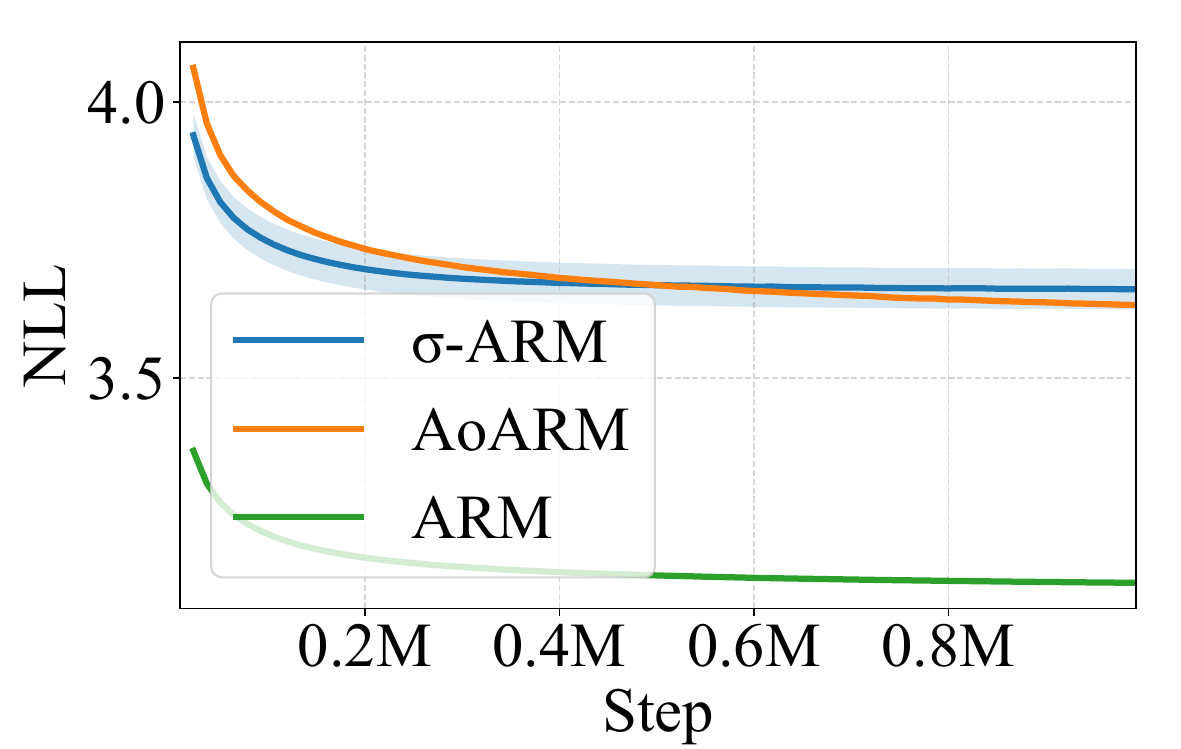}
    \vspace{-0.5em}
    \caption{Training curve on LM1B. Blue region for $\sigma$-ARM is 1-sigma band for 5 distinct $\sigma$, and the solid line is the mean.}
    \label{fig:prelim_dim_vs_locality}
    \vspace{-1.5em}
\end{wrapfigure}
Before analyzing why MDMs train slowly, we first ask a broader question: why is learning any-order language generation difficult in the first place? Among several possible factors, we focus on two main possibilities: whether the difficulty primarily arises from the large order-space complexity of any-order generation itself, or from the locality bias of natural language. Since MDMs differ fundamentally from ARMs in architecture, we isolate this question here by comparing ARM, $\sigma$-ARM, and AoARM instead: 
ARM follows the L2R order and is therefore relatively aligned with the locality structure of language. 
In contrast, $\sigma$-ARM uses a single fixed random order, which removes the large order-space complexity of any-order generation but still forces the model to predict tokens from contexts that are often misaligned with local linguistic dependencies. 
AoARM further exposes the model to unfixed random orders, and therefore combines locality mismatch with the full order-space complexity of any-order generation.

We compare ARM, $\sigma$-ARM and AoARM on LM1B~\citep{chelba2013lm1b} with sequence length 128 for 1M training steps. For $\sigma$-ARM, we fix a single random permutation throughout training, 
while AoARM is trained with \textit{non-fixed} random permutations and is therefore exposed to a much larger family of generation orders. 
The results are shown in Fig.~\ref{fig:prelim_dim_vs_locality}. If the primary difficulty of any-order generation came mainly from the order-space complexity, then $\sigma$-ARM should be substantially easier to optimize than AoARM. 
However, while AoARM appears slightly slower in the early stage of training, it eventually matches $\sigma$-ARM, suggesting that order-space complexity might not be the main problem. Meanwhile, ARM \textit{substantially outperforms} both models. These results lead to the following conclusion:

\begin{findingbox}
\textbf{Finding 1.} Locality bias appears to be an important factor in the difficulty of learning any-order generation, beyond the large order-space complexity of the generation problem.
\end{findingbox}

\subsection{Why Locality Bias in the Low-Context Region Makes MDM a Slow Learner}\label{sec:low-context}
We next analyze why MDMs are slow learners, and why locality bias makes MDMs particularly difficult to optimize.
A natural intuition is that locality bias slows down MDM training because MDMs inherently include the task of predicting targets in the low-context regime, where little local information is available.
More specifically, one might expect training to be difficult because $\mathcal{L}_{\mathrm{MDM}}^{N_c}$ is hard to reduce when $N_c$ is small.
However, we find the opposite.
The low-context region is, in fact easy to learn; rather, the problem is that \textit{MDMs waste training budget on easy low-context regions}.

A common misconception is that a task yielding high loss means the task itself is difficult to train. However, the problem should be separated: (1) what is the \textbf{\red{optimal lower bound of the $\mathcal{L}^{N_c}_{\mathcal{F}}$}}, and (2) what is \textbf{\blue{the gap between observed loss and optimal loss}}. We suggest that the \textbf{\blue{difficulty of a task}} is determined by the latter, not the former. With this distinction in mind, we further provide the following statement:

\begin{restatable}{proposition}{Entropy}\label{prop:entropy}
With infinite model capacity, the optimal loss for any framework $\mathcal{F}\in\{\mathrm{ARM,MDM},\sigma\mathrm{-ARM,AoARM}\}$ is equal to the entropy of the data distribution:
\begin{align}
   \min_{\theta}\mathcal{L}_{\mathcal{F}}=
    H(X),\quad H(X):=-\mathbb{E}_{\x\sim p_{\mathrm{data}}}\big[\log p_{\mathrm{data}}(\x)\big].
\end{align}
Furthermore, letting $I$ denote the random target index, $\mathcal{L}_{\mathcal F}^{N_c}$ can be decomposed as
\begin{align*}
    \mathcal{L}_{\mathcal F}^{N_c}(\theta)
=
\red{H_{\mathcal F}(X^I\mid Z_{N_c},I)}
+
\blue{\mathbb{E}_{Z_{N_c},I}
\mathrm{KL}\!\left(
p_{\mathrm{data},\mathcal F}(\cdot\mid Z_{N_c},I)
\,\|\,
p_\theta(\cdot\mid Z_{N_c},I)
\right)}.
\end{align*}
where the conditional entropy is induced by $p_{\mathrm{data}}$.
\end{restatable}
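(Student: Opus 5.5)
I would prove the decomposition first and derive the first claim from it. Fix a framework $\mathcal F$ and a context size $N_c$. The sampling procedure $\x\sim p_{\mathrm{data}}$, $\mathbf{z}_{N_c}\sim p_{\mathcal F}(\mathbf{z}\mid\x,N_c)$, $i\sim p_{\mathcal F}(i\mid \x,\mathbf{z}_{N_c})$ defines a joint law over $(X, Z_{N_c}, I)$. Let $p_{\mathrm{data},\mathcal F}(\cdot\mid Z_{N_c},I)$ denote the induced conditional law of $X^I$. Conditioning on $(Z_{N_c},I)=(\mathbf{z},i)$, I would write
\[
-\log p_\theta(x^i\mid \mathbf{z}) = -\log p_{\mathrm{data},\mathcal F}(x^i\mid \mathbf{z},i) + \log\frac{p_{\mathrm{data},\mathcal F}(x^i\mid \mathbf{z},i)}{p_\theta(x^i\mid \mathbf{z},i)}.
\]
Taking the inner expectation over $X^I$ given $(\mathbf{z},i)$ turns the first term into the conditional entropy and the second into the KL divergence. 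The tower rule then gives the stated decomposition.

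The one point that needs care is that the model conditions on $\mathbf{z}$ and not explicitly on $i$. In MDM and $\sigma$-ARM the target index is either readable from $\mathbf{z}$ (it is a masked position the network outputs at) or deterministic. In AoARM, $\sigma$ is given as input. So $p_\theta(\cdot\mid\mathbf{z},i)$ is simply the network's output at position $i$, and this is how I would interpret the notation.

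For the first claim, I would sum over $N_c$. Nonnegativity of KL gives $\mathcal L_{\mathcal F}(\theta)\ge \sum_{N_c} H_{\mathcal F}(X^I\mid Z_{N_c},I)$. With infinite capacity, equality is attained by setting $p_\theta(\cdot\mid\mathbf{z},i)=p_{\mathrm{data},\mathcal F}(\cdot\mid\mathbf{z},i)$ for every $N_c$ simultaneously. This is possible because distinct $N_c$ give distinguishable contexts (they have different numbers of unmasked tokens), so the conditionals never conflict. It remains to show that $\sum_{N_c}H_{\mathcal F}(X^I\mid Z_{N_c},I)=H(X)$.

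\textbf{ARM.} Here $Z_{N_c}=X^{<N_c+1}$ and $I=N_c+1$, so the sum is $\sum_i H(X^i\mid X^{<i})=H(X)$ by the chain rule.

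\textbf{$\sigma$-ARM.} The same argument applies along the fixed order $\sigma$.

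\textbf{AoARM.} The sum equals $\E_\sigma\sum_r H(X^{\sigma_r}\mid X^{\sigma_{<r}})$. This is $\E_\sigma H(X)=H(X)$, since for each fixed $\sigma$ the chain rule holds and conditioning on the independent $\sigma$ does not change the conditional law.

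\textbf{MDM.} This case is the main obstacle, because the weighting in \eqref{eq:loss_mdlm} must be matched to the shared form \eqref{eq:loss_shared}. I would use the equivalence of $\mathcal L_{\mathrm{MDM}}$ and $\mathcal L_{\mathrm{AoARM}}$ for a time-agnostic model cited earlier~\citep{kim2025trainworstplanbest}. With $\alpha_t=1-t$, the number of masks is binomial. Integrating $\frac{1}{t}\binom{L}{n}t^n(1-t)^{L-n}$ over $t$ gives $\frac1n$, which cancels the sum over the $n$ masked positions. The result is a uniform average over mask sets of size $n$ and over one target among them. This is exactly the AoARM law of $(\text{context set},\text{target})$ at $N_c=L-n$, because a uniform random permutation's prefix is a uniform subset and its next element is uniform on the remainder.

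The optimal time-agnostic predictor is the true conditional, and the time-dependent optimum is no better, since $X$ is independent of $t$ given $\x_t$. Hence the MDM minimum equals the AoARM minimum, which is $H(X)$. Combined with the decomposition, this gives the claim.
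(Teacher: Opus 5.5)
Your proof is correct. The decomposition step is the paper's own argument: entropy plus KL under the framework-induced joint law of $(\mathbf{x},\mathbf{z}_{N_c},i)$, with the same reading of $p_\theta(\cdot\mid\mathbf{z}_{N_c},i)$ as the network output at position $i$.

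Where you differ is the first claim. You obtain it as a corollary of the decomposition. Summing over $N_c$ and using $\mathrm{KL}\ge 0$, the minimum becomes $\sum_{N_c}H_{\mathcal F}(X^I\mid Z_{N_c},I)$. You then identify this sum with $H(X)$ by the chain rule. For MDM, you first check via the $1/n$ Beta integral that the law of (context set, target) coincides with AoARM's.

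The paper instead proves the first claim framework by framework at the sequence level:
\begin{itemize}
\item For ARM, $\sigma$-ARM and AoARM, the loss is literally $-\mathbb{E}\log p_\theta(\mathbf{x})$ (averaged over $\sigma$ for AoARM). Gibbs' inequality then gives $\ge H(X)$ with no per-context bookkeeping.
\item For MDM, the lower bound comes for free from the NELBO being an upper bound on $\mathbb{E}[-\log p_\theta(\mathbf{x})]$. This holds for any $\theta$, including time-variant networks. The Kim et al.\ any-order rewrite of $\mathcal{L}_{\mathrm{MDM}}$ is used only for achievability, i.e.\ to show that plugging in the true conditionals gives exactly $H(X)$.
\end{itemize}

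Your route is more unified. It also makes explicit the fact the main text leans on: the per-$N_c$ floors $H_{\mathcal F}(X^I\mid Z_{N_c},I)$ are distributed differently across frameworks but always sum to $H(X)$. In the paper this is only implicit, from combining the two parts of the proof.

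The price is twofold. For MDM you need the shared-form rewrite for the lower bound as well, not just for attainment. You also need a separate pointwise argument (your remark that $X$ is independent of $t$ given $\mathbf{x}_t$) to cover time-variant networks, which the ELBO route handles automatically.
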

See Appendix~\ref{sec:proof-prop31} for the full proof.
From the proposition, \textit{ARM and MDM share the same optimal loss}.
What differs, however, is how this optimal loss is distributed across the number of context tokens.
Depending on the generation order in language modeling, the lower bound of $\mathcal{L}_{\mathcal{F}}^{N_c}$, namely the \textbf{\red{conditional entropy of data}}, varies with $N_c$, while only the lower bound of their sum remains fixed.

\noindent\textbf{Which region is \blue{hard to train}?}
A high conditional loss at small $N_c$ does not by itself imply that the region is hard to learn; rather, it means that the \textbf{\red{irreducible conditional entropy of data}} is higher.
Intuitively, the low-context region lacks visible context, and therefore there is \textit{no deterministic answer} that would make the \textbf{\red{conditional entropy}} low.
The \blue{\textbf{hard-to-train}} region is instead the region where the \textbf{\blue{difference between the true data and the model distribution}} remains high.
Even if the observed loss is smaller than in other regions, if there remains a large gap between the \textbf{\red{conditional entropy}} and the observed loss, then the region should be regarded as hard to learn.

\begin{figure}
    \centering
    \includegraphics[width=\linewidth]{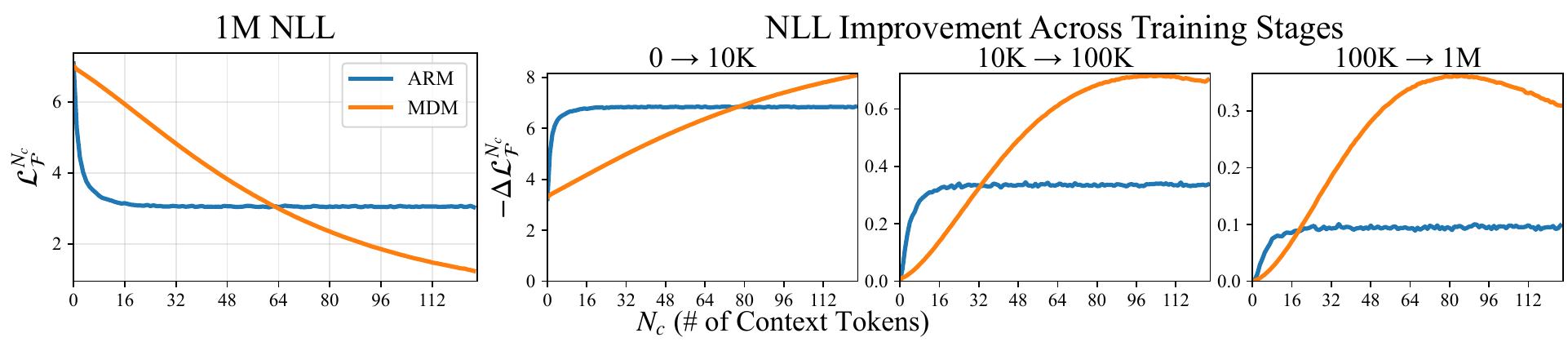}
    \caption{
    \textit{Left:} NLL per number of context tokens on LM1B after 1M training steps. ARM shows an approximately uniform distribution, whereas MDM shows a clearly uneven one.
    \textit{Right:} NLL drop across context token numbers for ARM and MDM over different training stages on LM1B.
    \textbf{For $N_c=0$, MDM has already converged to the optimal loss by 10K steps}.
    Since MDM models the joint distribution as a product of independent conditional distributions, the optimal prediction for a fully masked sentence reduces to the dataset vocabulary distribution.
    Here, the NLL of the vocabulary distribution exactly matches the NLL of the 10K-step MDM on fully masked sentences, namely 7.02.
    }
    \label{fig:uneven_nlls}
    \vspace{-0.55cm}
\end{figure}

\noindent\textbf{Empirical observations.}
We first identify where the \textbf{\red{lower bound of $\mathcal{L}_{\mathcal{F}}^{N_c}$ is high}}.
In Fig.~\ref{fig:uneven_nlls} (\textit{left}), we plot $\mathcal{L}_{\mathcal{F}}^{N_c}$ using ARM and MDM trained on LM1B for 1M steps.
Although these models are not fully optimal, the NLL improvement from 100K to 1M steps is small relative to the 1M-step NLL itself, suggesting that the 1M-step curves capture the overall shape of the lower bound.
ARM exhibits an approximately uniform curve, whereas MDM shows a highly uneven one: its loss is much larger in the low-context region and decreases as $N_c$ increases.
We attribute this difference to locality bias.
As illustrated in Fig.~\ref{fig:intro}, ARM always predicts under the same asymmetric context structure, where the left side is observed and the right side is unobserved.
In contrast, MDM changes the local context structure with $N_c$: in the low-context region, nearby tokens on both sides are mostly unobserved, while in the high-context region, they are mostly observed.
Thus, the lower bound of $\mathcal{L}_{\mathrm{ARM}}^{N_c}$ remains nearly flat, whereas that of $\mathcal{L}_{\mathrm{MDM}}^{N_c}$ becomes strongly skewed toward small $N_c$.

We now identify whether \textbf{\blue{the low-context region is difficult to learn or not}}. 
The extreme case $N_c=0$ gives the most reliable and interpretable results: as detailed in the caption of Fig.~\ref{fig:uneven_nlls}, MDM has already reached the optimal fully masked prediction by 10K steps.
Thus, the remaining high loss at $N_c=0$ is not reducible model error, but irreducible conditional entropy.
More broadly, under locality bias, low-context samples often provide little nearby information about the target token, and thus are expected to contain only limited learnable signal beyond this irreducible component.
Consistent with this view, Fig.~\ref{fig:uneven_nlls} (\textit{right}) shows that low-context samples yield almost no further NLL improvement from 10K to 1M steps, while most of the remaining learning occurs in regions with more context.
Therefore, low-context samples have high loss mainly because their conditional entropy is intrinsically high, not because the model still has much to learn from them.
Consequently, standard MDM training repeatedly samples low-context problems that provide little additional learning signal, thereby wasting training budget.
We summarize this observation as follows:
\begin{findingbox}
    \textbf{Finding 2.} Low-context samples are not hard to learn; they are learned early and then waste the training budget of MDM.
\end{findingbox}

\subsection{Why Locality Bias in the High-Context Region Also Makes MDM a Slow Learner}\label{sec:high-context}
We have shown that low-context samples are training-inefficient due to locality bias. We now consider the opposite: we show that high-context samples are also training-inefficient due to locality bias. When learning $p_\theta(x^i|\x_t,t)$, the model learns to predict the masked token while using the observed real tokens to build representations. The problem is that, due to locality bias, real tokens that are far from the mask contribute very little useful information to this learning process.

\begin{table*}[t]
\centering

\begin{minipage}[t]{0.48\textwidth}
\centering
\caption{Cosine similarity between gradients before and after masking 8 tokens within radius.}
\label{tab:grad_analysis}
\vspace{0.2em}
\begin{adjustbox}{scale=0.85,center}
\begin{tabular}{c c c c c c}
\toprule
Radius $r$ & 4 & 8 & 16 & 32 & 128 \\
\midrule
Cos. Sim. & 0.108 & 0.463 & 0.685 & 0.839 & 0.935 \\
\bottomrule
\end{tabular}
\end{adjustbox}
\end{minipage}
\hfill
\begin{minipage}[t]{0.50\textwidth}
\centering
\caption{Cosine similarity between the joint-mask gradient and the sum of single-mask gradients.}
\label{tab:additivity}
\vspace{0.2em}
\begin{adjustbox}{scale=0.85,center}
\begin{tabular}{c c c c c c}
\toprule
\# of Masks & 2 & 4 & 8 & 16 & 32 \\
\midrule
Cos. Sim. & 0.999 & 0.997 & 0.994 & 0.985 & 0.976 \\
\bottomrule
\end{tabular}
\end{adjustbox}
\end{minipage}

\vspace{-0.5em}
\end{table*}

We first show that real tokens far from a mask have little effect on the learning signal. For a controlled experiment, using an MDM trained for 1M steps, we randomly select one center mask and analyze the gradient of this center mask with and without additionally masking nearby real tokens. More specifically, we set several radii $r$ and randomly replace 8 tokens within each region with masks. As shown in Table~\ref{tab:grad_analysis}: the cosine similarity is 0.108 when $r=4$, but increases to 0.935 when $r=128$. This indicates that real tokens farther from the mask have little impact on the learning signal.

We next show that the gradient of high-context samples can be replaced by that of middle-context samples.
Fix the number of masks $k$, and randomly sample $k$ distinct single-mask patterns, \ie, $\Mask_1$.
We then construct $\Mask_k$ by taking the union of all masked positions.
For the same sequence $\x$, we measure the cosine similarity between the sum of the gradients obtained from the $k$ single-mask patterns and the gradient obtained from a single $\Mask_k$, and report the results in Table~\ref{tab:additivity}.
The cosine similarity remains very high across different values of $k$, suggesting that high-context samples are training-inefficient and can be effectively replaced by middle-context samples.

These results imply that many observed tokens in high-context samples do not introduce notable new learning signals, and middle-context samples can replace them. Consequently, even high-context training can be inefficient: although more tokens are observed, \textit{they waste the training budget}: 
\begin{findingbox}
\textbf{Finding 3.} Under locality bias, high-context samples waste the training budget of MDM.
\end{findingbox}
\vspace{-0.2cm}

\section{Mitigating the Locality Bias Problem in Masked Diffusion Models}
\subsection{Training MDM with Bell-Shaped Time Sampling} Sec.~\ref{sec:slow_learner} naturally suggests that MDM training can be improved by focusing more on middle-context problems. We propose the most straightforward way to achieve this: directly sampling more training points around $t=0.5$, \ie, the middle-context region. We first define a bell-shaped distribution:

\begin{definition}\label{def:bell-shape}
A probability distribution \(\pi\) over \([0,1]\) is called bell-shaped if 
$\mathbb{P}_{t\sim\pi}\left(t\in [a,1-a]\right)\ge \mathbb{P}_{t\sim\mathrm{U}([0,1])}\left(t\in [a,1-a]\right)$
for all \(a\in[0,1/2)\), and the inequality is strict for at least one \(a\in[0,1/2)\).
\end{definition}

Given a bell-shaped time distribution $\pi$, we train MDMs with the following objective:
\begin{align}
\textstyle \hat{\mathcal{L}}_{\pi}
=
\mathbb{E}_{\x\sim p_{\mathrm{data}},\,t\sim\pi,\,\x_t\sim q_{t|0}(\cdot\mid\x)}
\Bigl[
-
\sum_{i:x_t^i=\mask}
\log p_\theta(x^i\mid\x_t,t)
\Bigr].
\label{eq:loss_pi}
\end{align}
This objective is clearly different from the theoretical NELBO in \eqref{eq:loss_mdlm}: it samples $t$ from the tilted distribution $\pi$, and it does not include the coefficient $\alpha_t'/(1-\alpha_t)$ in the integrand.
Nevertheless, the change does not alter the target optimizer under the standard time-agnostic parameterization~\citep{sahoo2024simple,nie2025llada,ye2025dream,hong2026lomdm}.
In fact, the following statement is stronger: the optimizer is preserved not only for bell-shaped distributions, but for any time-sampling distribution with nonzero mass on $(0,1)$:

\begin{restatable}{proposition}{EqualOptimizer}\label{prop:equal_optimizer}
With infinite-capacity and time-agnostic model $\theta$, for any probability distribution $\rho$ over $[0,1]$ satisfying $\mathbb{P}_{t\sim\rho}\left(t\in (0,1)\right)>0$, the minimizers of $\mathcal{L}_{\mathrm{MDM}}$ and $\hat{\mathcal L}_{\rho}$ are equal.
\end{restatable}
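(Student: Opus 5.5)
The plan is to rewrite both objectives as nonnegative mixtures, over the same family of masking patterns, of per-pattern cross-entropies. The minimizer set is then determined only by which patterns receive positive weight.

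\textbf{Step 1: condition on the mask pattern.} Fix a time-agnostic model $p_\theta(\cdot\mid \x_t)$. Under $q_{t|0}$ with $\alpha_t=1-t$, every position is masked independently with probability $t$. Conditioning on the mask indicator $\Mask$ with $n$ masks gives
\begin{align*}
\mathbb{E}_{\x,\x_t\sim q_{t|0}}\Big[\sum_{i:x_t^i=\mask}-\log p_\theta(x^i\mid\x_t)\Big]
=\sum_{n=0}^{L} \binom{L}{n} t^n(1-t)^{L-n}\, \ell_n(\theta),
\end{align*}
where
\begin{align*}
\ell_n(\theta):=\frac{1}{\binom{L}{n}}\sum_{\Mask\in\mathcal{M}_n}\mathbb{E}_{\x}\Big[\sum_{i:M^i=1}-\log p_\theta(x^i\mid \x[\Mask])\Big].
\end{align*}
The key point is that $\ell_n$ does not depend on $t$, because the model is time-agnostic. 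Integrating against $t\sim\rho$ for $\hat{\mathcal L}_\rho$, and against $\tfrac{-\alpha_t'}{1-\alpha_t}dt=\tfrac{1}{t}dt$ for $\mathcal{L}_{\mathrm{MDM}}$, yields
\begin{align*}
\hat{\mathcal L}_\rho=\sum_{n=0}^L w_n^\rho\,\ell_n,\qquad \mathcal{L}_{\mathrm{MDM}}=\sum_{n=0}^L w_n\,\ell_n,
\end{align*}
with $w_n^\rho=\binom{L}{n}\mathbb{E}_\rho[t^n(1-t)^{L-n}]$ and $w_n=\binom{L}{n}\int_0^1 t^{n-1}(1-t)^{L-n}dt$. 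The $n=0$ term contributes nothing, since no positions are masked.

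\textbf{Step 2: positivity of the weights.} For $n\ge1$, the weight $w_n$ is a finite Beta integral and is strictly positive. For $\rho$, the function $t^n(1-t)^{L-n}$ is strictly positive on $(0,1)$. The hypothesis $\mathbb{P}_\rho(t\in(0,1))>0$ therefore gives $w^\rho_n>0$ for every $n\in\{1,\dots,L\}$. This is exactly where the assumption is used. Mass at $t\in\{0,1\}$ only produces the patterns $n=0$ or $n=L$, which would leave intermediate $n$ unweighted.

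\textbf{Step 3: identify the minimizers.} Each $\ell_n$ is a sum, over patterns $\Mask$ and masked positions $i$, of the expected cross-entropy of $p_\theta(\cdot\mid \x[\Mask])$ at position $i$ against $p_{\mathrm{data}}(x^i\mid \x[\Mask])$. This term can be decomposed as in Proposition~\ref{prop:entropy}: a conditional entropy plus an expected KL divergence.

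With infinite capacity, the predictions for distinct inputs $(\x[\Mask],i)$ are independent free parameters. A masked input $\x[\Mask]$ determines $\Mask$, so no input is shared across different $n$.

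It follows that both objectives equal a constant plus a strictly positively weighted sum of the same nonnegative KL terms, taken over all inputs in the support. Both are therefore minimized exactly when every KL term vanishes, that is, when $p_\theta(\cdot\mid \x[\Mask])=p_{\mathrm{data}}(x^i\mid \x[\Mask])$ for every $\x[\Mask]$ with positive probability and every masked $i$. Hence the two sets of minimizers coincide. The set is the same for any $\rho$, up to values on zero-probability inputs, which are unconstrained in both objectives.

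\textbf{Main obstacle.} The step needing care is the interchange and finiteness issue in Step 1. The MDM weight $1/t$ diverges at $t=0$, so I would check that $\mathcal{L}_{\mathrm{MDM}}$ is finite. It is: for small $t$ the expectation is $O(t)$ when the losses are finite. I would then justify exchanging the sum over $n$ with the integral by Tonelli's theorem, since all terms are nonnegative.

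A second subtlety is the degenerate case where the minimum is infinite. I would restrict to models with finite loss, or simply note that the optimizer is the same true conditional in both cases.
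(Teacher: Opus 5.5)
Your proposal is correct and follows essentially the same route as the paper: rewrite $\hat{\mathcal L}_\rho$ and $\mathcal{L}_{\mathrm{MDM}}$ as sums over mask counts of the context-level losses, with $\rho$-weights proportional to $\binom{L}{n}\mathbb{E}_\rho[t^n(1-t)^{L-n}]$, and use $\mathbb{P}_\rho(t\in(0,1))>0$ to make every weight for $n\ge 1$ strictly positive. Then conclude, via disjointness of masked inputs across mask counts and the entropy-plus-KL decomposition, that both minimizer sets are exactly the true data conditionals. The only difference is cosmetic: you derive the decomposition of $\mathcal{L}_{\mathrm{MDM}}$ directly from the $1/t$ weight (the Beta integral gives $w_n=1/n$) and address integrability via Tonelli, whereas the paper cites Kim et al.\ for that identity.
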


\begin{table}[t]
\caption{Various time-sampling distributions on $[0,1]$ explored for MDM training. 
For Dirac-delta, Gaussian, and Laplace, they satisfy the definition of a bell-shaped distribution when $\mu=1/2$.}
\vspace{0.05cm}
\label{tab:time_sampling_distributions}
\begin{adjustbox}{scale=0.85,center}
\centering
\small
\begin{tabular}{llll}
\toprule
Uniform & Dirac-delta & Truncated Gaussian & Truncated Laplace \\
\midrule
$\pi(t)=1$ 
& $\pi(t)=\delta_\mu$
& $\pi(t)\propto \exp\!\left(-\frac{(t-\mu)^2}{2\sigma^2}\right)$
& $\pi(t)\propto \exp\!\left(-\frac{|t-\mu|}{b}\right)$ \\
\bottomrule
\end{tabular}
\end{adjustbox}
\vspace{-0.4cm}
\end{table}

See Appendix~\ref{sec:proof-cor41} for proof and note that $\mathbb{P}_{t\sim\pi}\left(t\in (0,1)\right)>0$ follows from Definition~\ref{def:bell-shape}.
In this regard, our objective is to obtain $\argmin_\theta\mathcal{L}_{\mathrm{MDM}}$ by learning $\mathcal{\hat L}_{\pi}$, which magnifies a learning signal on the middle-context region. As shown in Table~\ref{tab:time_sampling_distributions}, we consider several simple distributions with varying hyperparameters to understand which time sampling laws are beneficial.

\subsection{Theoretical Analysis}
\begin{figure}[t]
    \centering
    \includegraphics[width=0.7\linewidth]{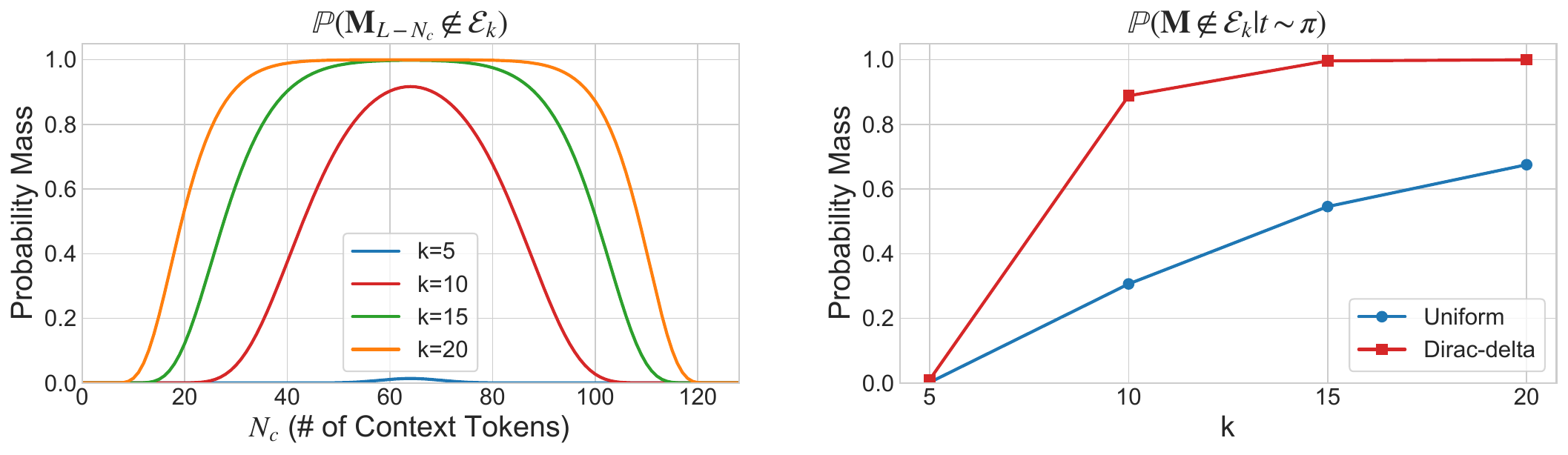}
    \caption{
    \textit{Left:} Probability that a uniformly sampled mask pattern with exactly $L-N_{c}$ masks avoids $k$-inefficient set $\mathcal{E}_k$.
    \textit{Right:} Probability that each uniform and Dirac-delta time sampling avoids $\mathcal{E}_k$.
    }
    \label{fig:theory}
\end{figure}

We now investigate which training samples are emphasized by bell-shaped time sampling. Recall that $\Mask_n$ is a mask indicator with $n$ masked positions, and $\mathcal{M}_{n}$ denotes the set of all possible $\Mask_n$.

\begin{definition}[$k$-inefficient set $\mathcal{E}_k$]
    If $\Mask$ contains $k$ consecutive masks or $k$ consecutive non-masks, we call $\Mask$ a $k$-inefficient mask pattern. We denote the set of all such $k$-inefficient patterns by $\mathcal{E}_k$.
\end{definition}
The value of meaningful $k$ may vary depending on 1) the intrinsic local bias of the language data, or 2) the model's capacity to exploit local information. Under locality bias, we regard samples from $\mathcal{E}_k$ as \textit{training-inefficient}: if we flip masks on the samples from $\mathcal{E}_k$, we might obtain a sample that gives a more diverse training signal. 
This is supported by Sec.~\ref{sec:low-context}, where we show that low-context samples converge quickly, and Sec.~\ref{sec:high-context}, where we directly show that middle-context samples can effectively replace high-context samples.
We visualize $\mathbb{P}(\Mask_{L-N_c}\notin\mathcal{E}_k\mid\Mask_{L-N_c}\sim\mathrm{U}(\mathcal{M}_{L-N_c}))$ in Fig.~\ref{fig:theory}~(\textit{left}), where $\mathcal{E}_k$ has substantial probability mass for low- and high-context regions.
Now we show the following:

\begin{restatable}{proposition}{BellShape}\label{prop:bell_shape} For any $L$, $2\le k\le L$, and any bell-shaped time distribution $\pi$, the following statement always holds:
    \begin{align*}
        \textstyle\mathbb{P}\big(\Mask\notin\mathcal{E}_k\mid t\sim\mathrm{U}([0,1]), M^i\sim \mathrm{Bernoulli}(t)\big)<\mathbb{P}\big(\Mask\notin\mathcal{E}_k\mid t\sim\pi, M^i\sim \mathrm{Bernoulli}(t)\big).
    \end{align*}
\end{restatable}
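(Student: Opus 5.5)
The plan is to reduce the statement to a one-dimensional comparison of expectations. Let $f(t):=\mathbb{P}(\Mask\notin\mathcal{E}_k\mid M^i\overset{\text{iid}}{\sim}\mathrm{Bernoulli}(t))$. Then the two sides of Proposition~\ref{prop:bell_shape} are $\mathbb{E}_{t\sim\mathrm{U}([0,1])}[f(t)]$ and $\mathbb{E}_{t\sim\pi}[f(t)]$.

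First I record the elementary properties of $f$. It is a polynomial, hence continuous. It is symmetric, $f(t)=f(1-t)$, because exchanging masks and non-masks maps $\mathcal{E}_k$ to itself. At the endpoints $f(0)=f(1)=0$, since $k\le L$ forces a run of length $\ge k$. Finally $f(1/2)>0$, since the alternating pattern avoids $\mathcal{E}_k$ whenever $k\ge 2$. Write $f(t)=g(|t-1/2|)$ with $g$ defined on $[0,1/2]$. The central claim to establish is that $g$ is nonincreasing, and strictly decreasing on some interval. The second part follows from the first, because $g$ is a nonconstant polynomial in $s$ (it goes from $f(1/2)>0$ to $0$), so it cannot be locally constant.

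Given monotonicity, I use a layer-cake representation:
\[
f(t)=\int_{(0,1/2]}\mathbf{1}\{|t-1/2|<s\}\,d\nu(s),\qquad \nu:=-dg\ge 0 .
\]
Using $\{|t-1/2|<s\}=(1/2-s,1/2+s)$, this gives
\[
\mathbb{E}_\pi f-\mathbb{E}_{\mathrm{U}} f=\int\big[\mathbb{P}_\pi(t\in(a,1-a))-\mathbb{P}_{\mathrm{U}}(t\in(a,1-a))\big]_{a=1/2-s}\,d\nu(s).
\]
Definition~\ref{def:bell-shape} is stated with closed intervals $[a,1-a]$. Writing $(a,1-a)=\bigcup_{a'\downarrow a}[a',1-a']$ and using monotone limits, the bell-shaped inequality passes to open intervals. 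This shows the integrand is $\ge 0$.

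For strictness, suppose the bell-shaped inequality is strict at some $a_0$. The map $a\mapsto\mathbb{P}_\pi(t\in[a,1-a])$ is left-continuous (a decreasing intersection of closed sets), and the uniform side is continuous. Hence the inequality is strict on some interval $(a_0-\varepsilon,a_0]$. Combined with the fact that $\nu$ is absolutely continuous with a density that is positive almost everywhere (as $g$ is a strictly decreasing polynomial), this yields a strict inequality overall. The open-versus-closed endpoint issue only affects a $\nu$-null set, and I will handle it carefully.

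The main obstacle is proving that $g$ is nonincreasing, i.e.\ that $f$ is nondecreasing on $[0,1/2]$. I would condition on the number of masks: $f(t)=\mathbb{E}_{m\sim\mathrm{Bin}(L,t)}[h(m)]$, where $h(m):=a_m/\binom{L}{m}$ and $a_m$ counts patterns with $m$ masks and all runs shorter than $k$. The argument then splits into two steps. Step (i): $h$ is symmetric and nondecreasing in $m$ for $m\le L/2$, which is the shape seen in Fig.~\ref{fig:theory} (\emph{left}). I would try to prove this with a switching or injection argument: from a uniformly random $m$-mask pattern, flip a uniformly random non-mask to a mask, and show this does not increase the probability of containing a long run when $m<L/2$. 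Alternatively, I would use the explicit composition formula for $a_m$ (compositions of $m$ and $L-m$ into parts $<k$ with interleaving run counts) to compare consecutive ratios. Step (ii): for $t\in[1/2,1]$, the folded variable $|\mathrm{Bin}(L,t)-L/2|$ is stochastically increasing in $t$. This is a standard consequence of symmetry and the monotone likelihood ratio of the binomial family. Together, (i) and (ii) give the required monotonicity of $g$. If step (i) resists a clean combinatorial proof, my fallback is to differentiate $f$ directly, write $f'(t)=\sum_m a_m t^{m-1}(1-t)^{L-m-1}(m-Lt)$, and pair the terms $m$ and $L-m$ to show $f'\ge 0$ on $[0,1/2]$.
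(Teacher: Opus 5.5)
\paragraph{Verdict: the crux of the proposition is left unproven.} Your outer framework matches the paper and is sound:
\begin{itemize}
\item the reduction to comparing $\int f\,d\pi$ with $\int_0^1 f\,dt$;
\item symmetry, $f(0)=0$ and $f(1/2)>0$;
\item strict monotonicity of a monotone nonconstant polynomial;
\item the layer-cake step (your open-interval version with $\nu=-dg$ is a harmless variant of the paper's $R(t)=R(0)+\int_0^{1/2}R'(a)\mathbf 1\{t\in[a,1-a]\}\,da$) and the strictness argument near $a_0$;
\item step (ii), which is the paper's Bernstein-derivative pairing in probabilistic language.
\end{itemize}

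\paragraph{The gap is step (i).} You need $h(m)\le h(m+1)$ for $m<L/2$. This is equivalent to $(m+1)a_{m+1}\ge (L-m)a_m$: efficient $(m+1)$-mask patterns with a marked mask must be at least as numerous as efficient $m$-mask patterns with a marked non-mask. This inequality is the entire combinatorial content of the proposition, and you only name candidate methods for it.
\begin{itemize}
\item \emph{Switching.} Under your flip coupling, the switching claim is just a restatement of the inequality. A single flip can create a long run ($1\,0\,1\to 1\,1\,1$ for $k=3$) as well as destroy one, so no pointwise argument is available, and you supply no injection.
\item \emph{Composition formula.} This route is also not carried out. $a_m$ is a sum over run counts of products of restricted-composition numbers, and comparing consecutive ratios needs some structural input that you do not identify.
\item \emph{Fallback via $f'$.} This does not bypass (i). After pairing $m$ with $L-m$, the bracket multiplying $a_m$ in $f'(t)$ is not nonnegative on $[0,1/2]$; for $L=10$, $m=1$, $t=0.4$ it is about $-0.047$. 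Summing by parts instead gives $f'(t)=L\sum_n\binom{L-1}{n}\bigl(h(n+1)-h(n)\bigr)t^n(1-t)^{L-1-n}$, whose sign again requires (i).
\end{itemize}

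\paragraph{How the paper closes this step.} It uses a bivariate generating function.
\begin{itemize}
\item Let $P_k(z)=z+\dots+z^{k-1}$, $A_k=1+P_k$, and let $N(a,b)$ count efficient patterns with $a$ masks and $b$ non-masks, so $a_m=N(m,L-m)$.
\item Then $F(x,y)=\sum_{a,b}N(a,b)x^ay^b=\frac{P_k(x)+P_k(y)+2P_k(x)P_k(y)}{1-P_k(x)P_k(y)}$.
\item The target quantity is $[x^ny^{L-n-1}](\partial_xF-\partial_yF)=(n+1)N(n+1,L-n-1)-(L-n)N(n,L-n)$.
\item This factors as $\partial_xF-\partial_yF=B\,K$, with $B=A_k(y)^2P_k'(x)-A_k(x)^2P_k'(y)$ and $K=(1-P_k(x)P_k(y))^{-2}$.
\item The coefficients of $B$ are antisymmetric and nonnegative above the diagonal, because $[z^i]P_k'/[z^i]A_k^2$ is nonincreasing in $i$.
\item Along each antidiagonal, the coefficients of $K$ increase toward the diagonal, by log-concavity of the coefficients of $P_k^r$.
\item Pairing the $(i,j)$ and $(j,i)$ terms then gives nonnegativity whenever $n\le L-n-1$.
\end{itemize}
An argument of this strength is what your proposal is missing.
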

See Appendix~\ref{sec:proof-prop43} for the full proof. 
The above proposition implies that, when sampling with any bell-shaped distribution $\pi$, \textit{training observes fewer $k$-inefficient samples} than under conventional MDM sampling.
We further visualize two variants, $\pi=\mathrm{U}([0,1])$ and $\pi=\delta_{0.5}$, in Fig.~\ref{fig:theory}. 
As shown in Fig.~\ref{fig:theory}~(\textit{right}), $\mathbb{P}(\Mask \notin\mathcal{E}_k\mid t\sim\delta_{0.5})$ is much higher than with uniform sampling, suggesting that bell-shaped time sampling significantly filters out $k$-consecutive inefficient samples.

\section{Experiments}\label{sec:results}
\subsection{Training Dynamics of MDM with Various Time Sampling Distributions}\label{sec:training_dynamics}
\begin{figure}
    \centering
    \includegraphics[width=0.98\linewidth]{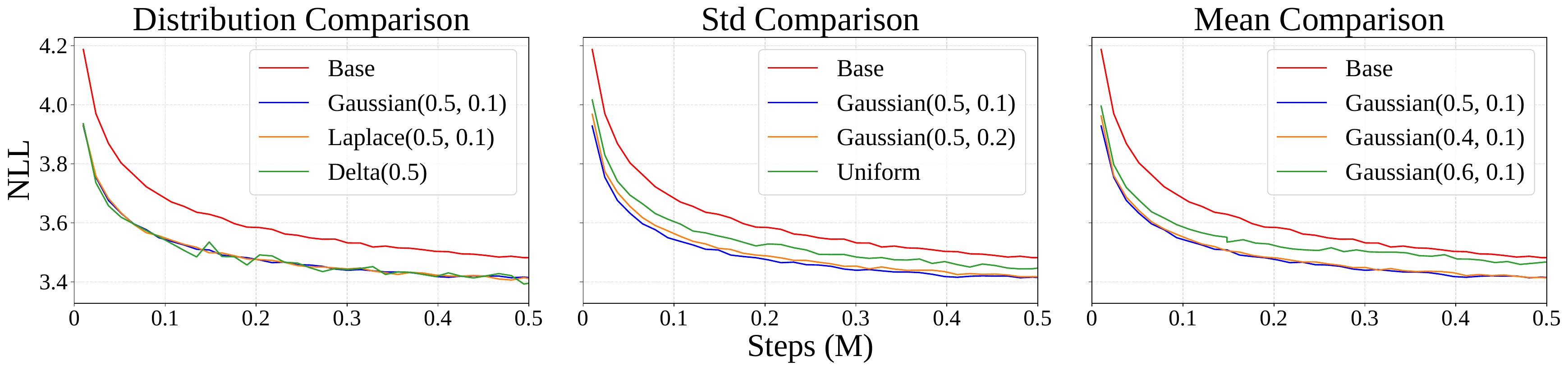}
    \caption{Validation NLL curves on LM1B for models trained with various time distributions listed in Table~\ref{tab:time_sampling_distributions}. \textit{Left}: comparison across distribution families ($\mu{=}0.5$). \textit{Middle}: effect of variance where uniform can be treated as an extreme case. \textit{Right}: effect of the mean $\mu$ under fixed $\sigma{=}0.1$. Here, ``Base'' denotes the conventional MDM trained with $\mathcal{L}_{\mathrm{MDM}}$~(\eqref{eq:loss_mdlm}) and is shown in red in all panels. Unlike ``Uniform'' with $\mathcal{\hat L}_\pi$ (\eqref{eq:loss_pi}), ``Base'' includes the $\alpha_t'/(1-\alpha_t)$ weighting in the loss integrand.}
    \label{fig:nll_curves}
\end{figure}

To confirm our understanding, we first analyze the trends exhibited by the various time sampling distributions in Table~\ref{tab:time_sampling_distributions}.  We train models on sentence-packed LM1B~\citep{chelba2013lm1b} and measure validation NLL bound based on \eqref{eq:loss_mdlm}. We follow prior works~\citep{sahoo2024simple,shi2025simplified} for the model architecture ($\sim$100M scale).

The results are shown in Fig.~\ref{fig:nll_curves}. In the left panel, \textbf{all bell-shaped distributions substantially outperform the standard MDM baseline}, while the specific class of distribution is not particularly important. The Dirac-delta, however, shows less stable dynamics.
In the middle panel, the smaller variance ($\sigma=0.1$) outperforms $\sigma{=}0.2$ and uniform, indicating the power of bell-shaped time sampling.
In the right panel, shifting the mean toward the low-context region ($\mu{=}0.6$) yields the worst result.
Based on these observations, we adopt a Gaussian with $\mu{=}0.5, \sigma{=}0.1$ as our default, offering stable training and natural centering at the middle-context region. Refer to Appendix~\ref{sec:omitted_experimental_details} for detailed experimental settings and Appendix~\ref{sec:ablation_t_sampling} for further results with various hyperparameters.

\subsection{Evaluating Bell-Shaped Time Sampling Across Multiple Pretraining Benchmarks}\label{sec:result_various_benchmark}
\begin{figure}
    \centering
    \includegraphics[width=0.98\linewidth]{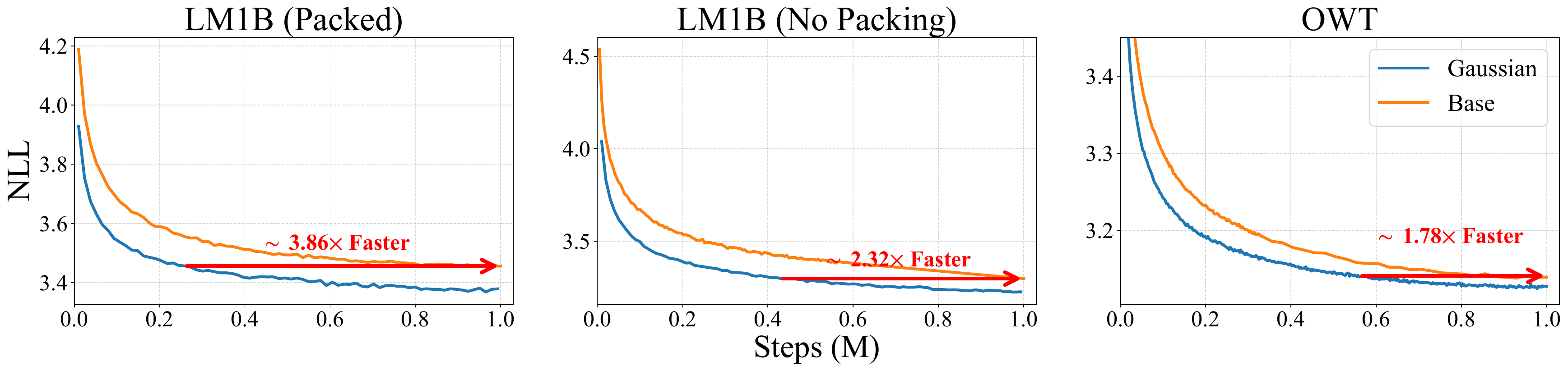}
    \caption{Validation NLL curves on various language modeling benchmarks. ``Base'' denotes base (standard) MDM, and ``Gaussian'' denotes MDM trained with $\mathcal{\hat L}_{\mathcal{N}(0.5,0.1)}$.}
    \label{fig:nll_speedup}
\end{figure}

\begin{table*}[t]
    \label{tab:downstream}
    \centering
    \caption{Language understanding and completion benchmark results for continual pretraining (CPT)-only and supervised fine-tuned (SFT) models. CPT was performed on GPT-2 Large (0.7B) for 100K steps, and SFT was performed for 10K steps. Gray rows indicate models learned by CPT with Gaussian time sampling. Boldface denotes the best result within each section. The colored values in parentheses indicate the absolute change relative to the corresponding Base row within each section. Results are averaged over evaluation with 5 random seeds, where $\pm$ denotes one standard deviation.}
    \label{tab:cpt_sft_results}
    \renewcommand{\arraystretch}{1.15}
    \begin{adjustbox}{width=\textwidth}
    \begin{tabular}{llcccccccc}
    \toprule
    \multicolumn{2}{c}{Training} & \multicolumn{8}{c}{Benchmarks} \\
\cmidrule(lr){1-2} \cmidrule(lr){3-10}
CPT & SFT & LAMBADA (\%) & Obqa (\%) & Wino (\%) & PIQA (\%) & SIQA (\%) & Infilling & TriQA(\%) & HellaSwag (\%) \\
\midrule

\multicolumn{9}{@{}l}{\textit{Continual Pretraining}} \\

Base & ---      
& 2.15 {\scriptsize$\pm$ 0.43}
& 17.00 {\scriptsize$\pm$ 0.472}
& \textbf{49.49} {\scriptsize$\pm$ 1.20}
& 52.12 {\scriptsize$\pm$ 0.36}
& 35.72 {\scriptsize$\pm$ 0.46}
& 6.94 {\scriptsize$\pm$ 0.09}
& 0.8 {\scriptsize$\pm$ 0.09}
& 28.87 {\scriptsize$\pm$ 1.31} \\

\rowcolor{gray!15}
Gaussian & ---      
& \textbf{10.25} {\scriptsize$\pm$ 0.29} (\textcolor{ForestGreen}{+8.10})
& \textbf{20.00} {\scriptsize$\pm$ 1.68} (\textcolor{ForestGreen}{+3.00})
& \textbf{49.49} {\scriptsize$\pm$ 0.33} (\textcolor{ForestGreen}{+0.00})
& \textbf{53.75} {\scriptsize$\pm$ 1.10} (\textcolor{ForestGreen}{+1.63})
& \textbf{37.05} {\scriptsize$\pm$ 1.11} (\textcolor{ForestGreen}{+1.33})
& \textbf{7.31} {\scriptsize$\pm$ 0.20} (\textcolor{ForestGreen}{+0.37})
& \textbf{1.3} {\scriptsize$\pm$ 1.02}
(\textcolor{ForestGreen}{+0.5})
& \textbf{34.70} {\scriptsize$\pm$ 1.44} (\textcolor{ForestGreen}{+5.83}) \\
    \midrule
    \multicolumn{9}{@{}l}{\textit{Continual Pretraining + Supervised Fine-tuning}} \\
Base     & Base     
& 5.82 {\scriptsize$\pm$ 0.15}
& 18.20 {\scriptsize$\pm$ 0.28}
& \textbf{50.59} {\scriptsize$\pm$ 0.67}
& 56.80 {\scriptsize$\pm$ 0.23}
& 38.95 {\scriptsize$\pm$ 0.80}
& 9.87 {\scriptsize$\pm$ 0.15}
& 6.4 {\scriptsize$\pm$ 0.19}
& 32.59{\scriptsize$\pm$ 1.24} \\

Base     & Gaussian 
& 5.65 {\scriptsize$\pm$ 0.05}
& 20.00 {\scriptsize$\pm$ 0.42}
& 48.15 {\scriptsize$\pm$ 0.39}
& 56.47 {\scriptsize$\pm$ 0.92}
& 38.54 {\scriptsize$\pm$ 0.47}
& 9.48 {\scriptsize$\pm$ 0.21}
& 7.6 {\scriptsize$\pm$ 0.06}
& 32.95{\scriptsize$\pm$ 1.14} \\

\rowcolor{gray!15}
Gaussian & Base     
& 15.54 {\scriptsize$\pm$ 0.36}
& \textbf{24.6} {\scriptsize$\pm$ 0.57}
& 50.36 {\scriptsize$\pm$ 1.95}
& \textbf{59.30} {\scriptsize$\pm$ 0.38}
& 41.04 {\scriptsize$\pm$ 0.18}
& 12.18 {\scriptsize$\pm$ 0.21}
& 27.4 {\scriptsize$\pm$ 0.07}
& 39.29{\scriptsize$\pm$ 1.33} \\

\rowcolor{gray!15}
Gaussian & Gaussian 
& \textbf{17.89} {\scriptsize$\pm$ 0.37} (\textcolor{ForestGreen}{+12.07})
& 23.4 {\scriptsize$\pm$ 1.41} (\textcolor{ForestGreen}{+5.2})
& 49.80 {\scriptsize$\pm$ 1.12} (\textcolor{red}{-0.79})
& \textbf{59.30} {\scriptsize$\pm$ 0.88} (\textcolor{ForestGreen}{+2.50})
& \textbf{42.07} {\scriptsize$\pm$ 0.04} (\textcolor{ForestGreen}{+3.12})
& \textbf{12.62} {\scriptsize$\pm$ 0.38} (\textcolor{ForestGreen}{+2.75})
& \textbf{27.8} {\scriptsize$\pm$ 0.06} (\textcolor{ForestGreen}{+21.4})
& \textbf{39.84}{\scriptsize$\pm$ 1.21} (\textcolor{ForestGreen}{+7.25}) \\
\bottomrule
    \end{tabular}
    \end{adjustbox}
\end{table*}

\noindent\textbf{Experimental settings.} 
We train an MDM with Gaussian time sampling and a base MDM on three datasets, including One Billion Words dataset (LM1B)~\citep{chelba2013lm1b} with/without sentence packing, and OpenWebText (OWT)~\citep{Gokaslan2019owt}. We generally follow the widely adopted experimental settings in discrete diffusion language modeling~\citep{sahoo2025diffusionduality,arriola2025bd3lm}: 1) validation NLL for models trained on LM1B and OWT, and 2) zero-shot PPL and generative PPL for models trained on OWT. 
See Appendix~\ref{sec:omitted_experimental_details} for details.

\noindent\textbf{Results.}
The main results are shown in Fig.~\ref{fig:nll_speedup}. 
As shown in Fig.~\ref{fig:nll_speedup}, Gaussian time sampling consistently accelerates MDM training across all language modeling benchmarks. Using the validation NLL achieved by the base MDM after 1M training steps as the target, Gaussian time sampling reaches the same NLL about $\mathbf{3.86\times}$ faster on sentence-packed LM1B, $\mathbf{2.32\times}$ faster on LM1B without packing, and $\mathbf{1.78\times}$ faster on OWT. These results indicate that the benefit of bell-shaped time sampling is robust across different data settings rather than being limited to a specific benchmark.
The same trend also appears in zero-shot PPL and generative PPL evaluations; refer to Appendix~\ref{sec:zero_and_gen_ppl}.

\subsection{Scaling Up to Billion-Scale Masked Diffusion Models}\label{sec:result_cpt}

Recent large-scale MDM~\citep{ye2025dream,bie2025llada2,fan2026stable,tae2025tess2} training recipes have increasingly shifted toward continual pretraining (CPT) from ARM. In brief, this approach~\citep{gong2024scaling} initializes the MDM with weights from the pretrained ARM and then continues with MDM pretraining. In this regard, we show that our method can scale to a real-world MDM training recipe as well. We adopt the most conventional CPT~\citep{gong2024scaling} and use pretrained GPT-2 Large at $\sim$0.7B parameter scale. Details are in Appendix~\ref{sec:omitted_experimental_details}.

\noindent\textbf{Language understanding and completion benchmarks.}
We evaluate on multiple-choice benchmarks such as  HellaSwag~\citep{zellers2019hellaswag}, OpenBookQA(Obqa)~\citep{mihaylov2018can}, PIQA~\citep{bisk2020piqa}, TriviaQA(TriQA)~\citep{joshi2017triviaqa}, Social IQA(SIQA)~\citep{sap2019social}, and WinoGrande~\citep{sakaguchi2021winogrande}, as well as last-word prediction on LAMBADA~\citep{paperno2016lambada} and story infilling on ROCStories~\citep{mostafazadeh2016corpus}, evaluated using ROUGE score~\citep{lin2004rouge}. We report results both after CPT only and after subsequent supervised fine-tuning (SFT) in Table~\ref{tab:cpt_sft_results}. Gaussian time sampling significantly outperforms the baseline across most tasks both for CPT only and SFT setup.

\begin{wraptable}{r}{0.37\textwidth}
  \vspace{-1.0em}
  \caption{Win rate of Gaussian time sampling compared to base MDM, judged by GPT-4o. Results are averaged over 5 random seeds, with $\pm$ denoting one standard deviation.}
  \centering
  \small
  \setlength{\tabcolsep}{4pt}
  \renewcommand{\arraystretch}{1.1}
  \begin{tabularx}{\linewidth}{X r}
    \toprule
    \textbf{Benchmark} & \textbf{WR (\%)} \\
    \midrule
    AlpacaEval 2.0(WR)   & $78.27\pm 1.37$ \\
    AlpacaEval 2.0(LC)   & $78.61\pm 1.42$ \\
    MT-Bench (non-tie)    & $84.34\pm7.26$ \\
    \bottomrule
  \end{tabularx}
  \label{tab:judge}
  \vspace{-1.0em}
\end{wraptable}

\noindent\textbf{Open-ended instruction-following benchmark.}
Beyond NLL-based metrics and multiple-choice QA accuracy, we further evaluate whether Gaussian time sampling improves the actual conditional generation quality. To this end, we evaluate models on two instruction-following benchmarks, MT-Bench~\citep{zheng2023judging} and AlpacaEval 2.0~\citep{li2023alpacaeval}. We report win rates (WR) judged by GPT-4o~\citep{hurst2024gpt} under the non-tie protocol; for AlpacaEval 2.0, we additionally report the length-controlled win rate (LC). We fine-tune the model on the ShareGPT dataset to equip it with conversational ability. Table~\ref{tab:judge} reports the results, where the model trained with Gaussian time sampling significantly outperforms the baseline. This indicates that the benefits of bell-shaped time sampling extend beyond NLL-based metrics and translate into improved generation quality required in the real world. Qualitative examples are in Appendix~\ref{sec:qualitative_examples}.

To sum up, our bell-shaped time sampling can (i) scale to billion-parameter-scale models, (ii) generalize to CPT with ARM initialization, and (iii) boost downstream task performance.

\subsection{Other Alternatives Explored for Focusing Training on the Middle-Context Region}\label{sec:other_alternatives}

We analyze other ways to bias MDM training toward the middle-context region.

\begin{wrapfigure}{r}{0.35\textwidth}
    \centering
    \vspace{-0.6em}
    \includegraphics[width=0.35\textwidth]{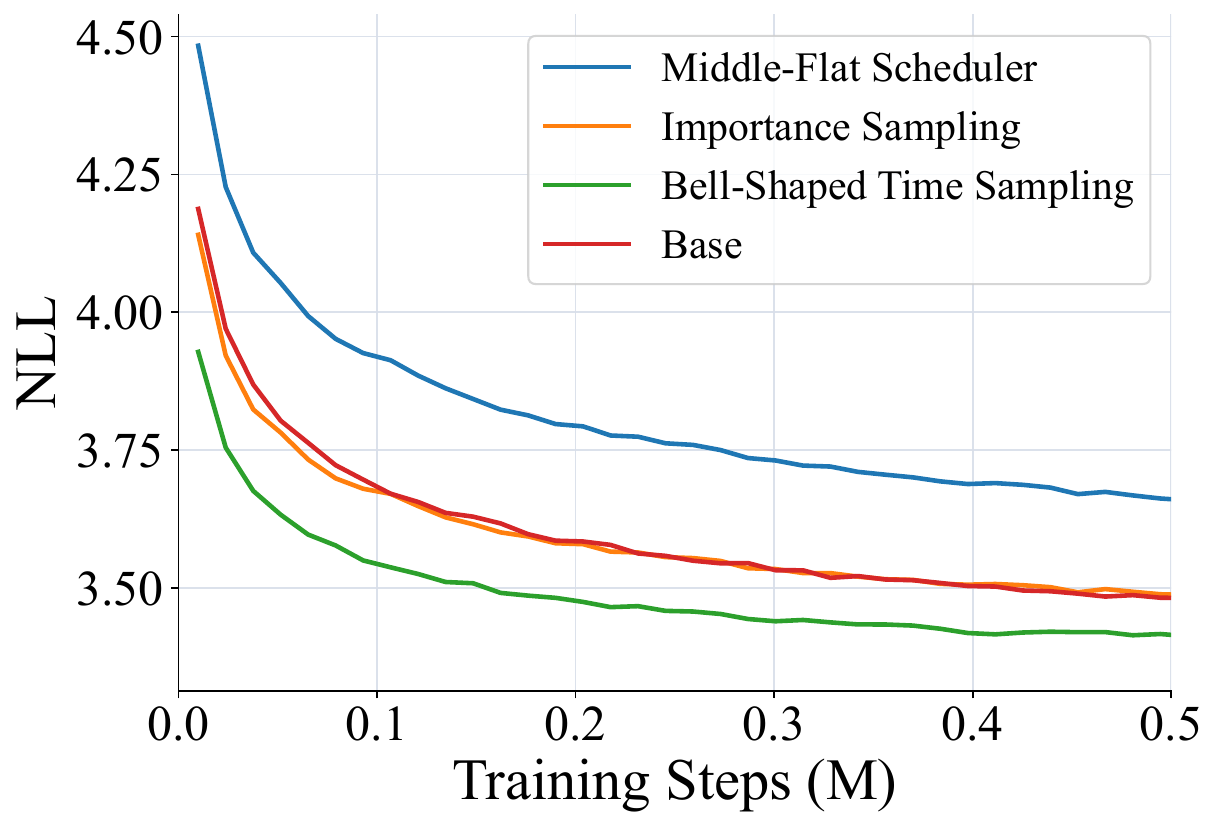}
    \vspace{-1.8em}
    \caption{Training curve on LM1B.}
    \label{fig:other_alternatives}
    \vspace{-1.0em}
\end{wrapfigure}

\noindent\textbf{Middle-flat noise scheduler.}
One can flatten the noise scheduler around $\alpha_t=0.5$ in $\mathcal{L}_{\mathrm{MDM}}$, so that the model naturally encounters more middle-context samples during training. The key difference from bell-shaped sampling is whether the target cross-entropy loss is reweighted. Under the theoretical NELBO, the scaling factor is smallest around $t=0.5$ and increases as $t$ moves away from the center. As a result, the objective is theoretically equivalent to the standard MDM loss with $\alpha_t=1-t$~\citep{shi2025simplified,sahoo2024simple}. Thus, although the model observes more middle-context during training, the learning signal assigned to each $N_c$ remains the same as in the original $\mathcal{L}_{\mathrm{MDM}}$.

\noindent\textbf{Importance sampling.}
Another option is to sample masks uniformly, as in conventional MDMs, while assigning a larger scaling factor to the middle region. Specifically, we train with $\int_{0}^{1} w(t)\mathbb{E}_{\x\sim p_{\mathrm{data}},\,\x_t\sim q_{t|0}}\bigl[-\sum_{i:x_t^i=\mask}\log p_\theta(x^i\mid \x_t,t)\bigr]dt$, where $w(t)$ is chosen to be largest near the center. In this case, the learning signal is similar to that of bell-shaped time sampling, but the model does not actually encounter middle-context samples more frequently during training.


We found that only the MDM trained with bell-shaped time sampling learns faster than the base MDM, while importance sampling achieves the same training speed, and the middle-flat scheduler even leads to worse performance. See Appendix~\ref{sec:exp_alternatives} for detailed experimental settings and our interpretation of the corresponding results.

\section{Conclusion}
We studied why masked diffusion language models learn slowly and identified locality bias as a key factor behind their inefficient training dynamics. 
Motivated by this analysis, we propose bell-shaped time sampling, a simple training strategy that emphasizes the middle-context region, and show that it consistently accelerates MDM training and improves performance across various language modeling tasks. 
Although this may appear similar to bell-shaped time sampling in continuous diffusion models~\citep{hang2025improved,karras2022edm,esser2024sd3}, Sec.~\ref{sec:slow_learner} and~\ref{sec:other_alternatives} show that our contribution is a non-trivial parallel contribution. We provide a detailed discussion of this difference in Appendix~\ref{sec:related_works}, and limitations in Appendix~\ref{sec:limitations}.

\section*{Acknowledgements}
This work was supported by the National Research Foundation of Korea under Grant RS-2024-00336454. Satoshi Hayakawa was partially supported by JST CREST (JPMJCR2115).

\bibliographystyle{neurips_2024}
\bibliography{main}

@STRING{NIPS	= "Advances in Neural Information Processing Systems (NeurIPS)"}

@STRING{ICCV	= "IEEE International Conference on Computer Vision (ICCV)"}

@STRING{ICML	= "International Conference on Machine Learning (ICML)"}

@STRING{ICLR	= "International Conference on Learning Representations (ICLR)"}

@STRING{AAAI = "AAAI Conference on Artificial Intelligence (AAAI)"}

@inproceedings{arriola2025bd3lm,
    author = {Marianne Arriola and Aaron Gokaslan and Justin T. Chiu and Zhihan Yang and Zhixuan Qi and Jiaqi Han and Subham Sekhar Sahoo and Volodymyr Kuleshov},
    title = {Block Diffusion: Interpolating Between Autoregressive and Diffusion Language Models},
    booktitle = ICLR,
    year = {2025}
}

@inproceedings{Austin2021,
  title={Structured denoising diffusion models in discrete state-spaces},
  author={Austin, Jacob and Johnson, Daniel D and Ho, Jonathan and Tarlow, Daniel and Van Den Berg, Rianne},
  booktitle=NIPS,
  volume={34},
  pages={17981--17993},
  year={2021}
}

@inproceedings{chelba2013lm1b,
  title     = {{One billion word benchmark for measuring progress in statistical language modeling}},
  author    = {Ciprian Chelba and Tomas Mikolov and Mike Schuster and Qi Ge and Thorsten Brants and Phillipp Koehn and Tony Robinson},
  year      = {2014},
  booktitle = {{Interspeech 2014}},
  pages     = {2635--2639},
  doi       = {10.21437/Interspeech.2014-564},
  issn      = {2958-1796},
}

@inproceedings{cohan2018discourse,
    title = "A Discourse-Aware Attention Model for Abstractive Summarization of Long Documents",
    author = "Cohan, Arman  and
      Dernoncourt, Franck  and
      Kim, Doo Soon  and
      Bui, Trung  and
      Kim, Seokhwan  and
      Chang, Walter  and
      Goharian, Nazli",
    booktitle = "Proceedings of the 2018 Conference of the North {A}merican Chapter of the Association for Computational Linguistics: Human Language Technologies, Volume 2 (Short Papers)",
    year = "2018",
    url = "https://aclanthology.org/N18-2097/",
    doi = "10.18653/v1/N18-2097",
    pages = "615--621",
}

@misc{Gokaslan2019owt,
    title={OpenWebText Corpus},
    author={Gokaslan, Aaron and Cohen, Vanya and Pavlick, Ellie and Tellex, Stefanie},
    howpublished={\url{http://Skylion007.github.io/OpenWebTextCorpus}},
    year={2019}
}

@article{grattafiori2024llama3herdmodels,
  title={The llama 3 herd of models},
  author={Grattafiori, Aaron and Dubey, Abhimanyu and Jauhri, Abhinav and Pandey, Abhinav and Kadian, Abhishek and Al-Dahle, Ahmad and Letman, Aiesha and Mathur, Akhil and Schelten, Alan and Vaughan, Alex and others},
  journal={arXiv preprint arXiv:2407.21783},
  year={2024}
}

@inproceedings{he2022diffusionbert,
  title={Diffusionbert: Improving generative masked language models with diffusion models},
  author={He, Zhengfu and Sun, Tianxiang and Tang, Qiong and Wang, Kuanning and Huang, Xuan-Jing and Qiu, Xipeng},
  booktitle={Proceedings of the 61st annual meeting of the association for computational linguistics (ACL)},
  volume={1},
  pages={4521--4534},
  year={2023}
}

@inproceedings{ho2020denoising,
  title={Denoising diffusion probabilistic models},
  author={Ho, Jonathan and Jain, Ajay and Abbeel, Pieter},
  booktitle=NIPS,
  volume={33},
  pages={6840--6851},
  year={2020}
}

@inproceedings{
hong2025improvingdiscretediffusionunmasking,
title={Improving Discrete Diffusion Unmasking Policies Beyond Explicit Reference Policies},
author={Chunsan Hong and Seonho An and Min-Soo Kim and Jong Chul Ye},
booktitle=ICLR,
year={2026},
}

@inproceedings{Hoogeboom2021b,
  title={Argmax flows and multinomial diffusion: Learning categorical distributions},
  author={Hoogeboom, Emiel and Nielsen, Didrik and Jaini, Priyank and Forr{\'e}, Patrick and Welling, Max},
  booktitle=NIPS,
  volume={34},
  pages={12454--12465},
  year={2021}
}

@inproceedings{kim2025trainworstplanbest,
  title={Train for the Worst, Plan for the Best: Understanding Token Ordering in Masked Diffusions}, 
  author={Jaeyeon Kim and Kulin Shah and Vasilis Kontonis and Sham Kakade and Sitan Chen},
  year={2025},
  booktitle=ICML,
}

@inproceedings{kingma2021variational,
  title={Variational diffusion models},
  author={Kingma, Diederik and Salimans, Tim and Poole, Ben and Ho, Jonathan},
  booktitle=NIPS,
  volume={34},
  pages={21696--21707},
  year={2021}
}

@inproceedings{lou2024discrete,
  title={Discrete Diffusion Modeling by Estimating the Ratios of the Data Distribution},
  author={Aaron Lou and Chenlin Meng and Stefano Ermon},
  booktitle=ICML,
  year={2024}
}

@article{marcus1993building,
  title={Building a large annotated corpus of English: The Penn Treebank},
  author={Marcus, Mitch and Santorini, Beatrice and Marcinkiewicz, Mary Ann},
  journal={Computational linguistics},
  volume={19},
  number={2},
  pages={313--330},
  year={1993}
}

@inproceedings{merity2016pointer,
  title={Pointer sentinel mixture models},
  author={Merity, Stephen and Xiong, Caiming and Bradbury, James and Socher, Richard},
  booktitle=ICLR,
  year={2017}
}

@inproceedings{nie2025llada,
  title={Large language diffusion models},
  author={Nie, Shen and Zhu, Fengqi and You, Zebin and Zhang, Xiaolu and Ou, Jingyang and Hu, Jun and Zhou, Jun and Lin, Yankai and Wen, Ji-Rong and Li, Chongxuan},
  booktitle=NIPS,
  year={2025}
}

@inproceedings{ou2024RADD,
      title={Your Absorbing Discrete Diffusion Secretly Models the Conditional Distributions of Clean Data}, 
      author={Jingyang Ou and Shen Nie and Kaiwen Xue and Fengqi Zhu and Jiacheng Sun and Zhenguo Li and Chongxuan Li},
      booktitle=ICLR,
      year={2025},
}

@inproceedings{paperno2016lambada,
  title={The LAMBADA dataset: Word prediction requiring a broad discourse context},
  author={Paperno, Denis and Kruszewski, Germ{\'a}n and Lazaridou, Angeliki and Pham, Ngoc-Quan and Bernardi, Raffaella and Pezzelle, Sandro and Baroni, Marco and Boleda, Gemma and Fern{\'a}ndez, Raquel},
  booktitle={Proceedings of the 54th annual meeting of the association for computational linguistics (ACL)},
  volume={1},
  pages={1525--1534},
  year={2016}
}

@inproceedings{peebles2023scalable,
  title={Scalable diffusion models with transformers},
  author={Peebles, William and Xie, Saining},
  booktitle=ICCV,
  pages={4195--4205},
  year={2023}
}

@inproceedings{
peng2025papl,
title={Planner Aware Path Learning in Diffusion Language Models Training},
author={Fred Zhangzhi Peng and Zachary Bezemek and Jarrid Rector-Brooks and Shuibai Zhang and Michael M. Bronstein and Anru Zhang and Joey Bose and Alexander Tong},
booktitle=ICLR,
year={2026},
}

@article{raffel2020exploring,
  title={Exploring the limits of transfer learning with a unified text-to-text transformer},
  author={Raffel, Colin and Shazeer, Noam and Roberts, Adam and Lee, Katherine and Narang, Sharan and Matena, Michael and Zhou, Yanqi and Li, Wei and Liu, Peter J},
  journal={Journal of machine learning research},
  volume={21},
  number={140},
  pages={1--67},
  year={2020}
}

@inproceedings{sahoo2024simple,
  title={Simple and effective masked diffusion language models},
  author={Sahoo, Subham and Arriola, Marianne and Schiff, Yair and Gokaslan, Aaron and Marroquin, Edgar and Chiu, Justin and Rush, Alexander and Kuleshov, Volodymyr},
  booktitle=NIPS,
  volume={37},
  pages={130136--130184},
  year={2024}
}

@inproceedings{sahoo2025diffusionduality,
      title={The Diffusion Duality}, 
      author={Subham Sekhar Sahoo and Justin Deschenaux and Aaron Gokaslan and Guanghan Wang and Justin Chiu and Volodymyr Kuleshov},
      year={2025},
      booktitle=ICML
}

@inproceedings{shi2025simplified,
  title={Simplified and generalized masked diffusion for discrete data},
  author={Shi, Jiaxin and Han, Kehang and Wang, Zhe and Doucet, Arnaud and Titsias, Michalis},
  booktitle=NIPS,
  volume={37},
  pages={103131--103167},
  year={2024}
}

@inproceedings{
xie2025vadd,
title={Variational Autoencoding Discrete Diffusion with Enhanced Dimensional Correlations Modeling},
author={Tianyu Xie and Shuchen Xue and Zijin Feng and Tianyang Hu and Jiacheng Sun and Zhenguo Li and Cheng Zhang},
booktitle=ICLR,
year={2026},
}

@article{ye2025dream,
  title={Dream 7B: Diffusion Large Language Models},
  author={Ye, Jiacheng and Xie, Zhihui and Zheng, Lin and Gao, Jiahui and Wu, Zirui and Jiang, Xin and Li, Zhenguo and Kong, Lingpeng},
  journal={arXiv preprint arXiv:2508.15487},
  year={2025}
}

@inproceedings{zhang2015character,
  title={Character-level convolutional networks for text classification},
  author={Zhang, Xiang and Zhao, Junbo and LeCun, Yann},
  booktitle=NIPS,
  volume={28},
  year={2015}
}

@inproceedings{zheng2024maskeddiffusionmodelssecretly,
  title={Masked diffusion models are secretly time-agnostic masked models and exploit inaccurate categorical sampling},
  author={Zheng, Kaiwen and Chen, Yongxin and Mao, Hanzi and Liu, Ming-Yu and Zhu, Jun and Zhang, Qinsheng},
  booktitle=ICLR,
  year={2025}
}

@inproceedings{
liu2025discrete,
title={Discrete Copula Diffusion},
author={Anji Liu and Oliver Broadrick and Mathias Niepert and Guy Van den Broeck},
booktitle=ICLR,
year={2025},
}

@inproceedings{
kim2026flexmdm,
title={Any-Order Flexible Length Masked Diffusion},
author={Jaeyeon Kim and Lee Cheuk Kit and Carles Domingo-Enrich and Yilun Du and Sham M. Kakade and Timothy Ngotiaoco and Sitan Chen and Michael Samuel Albergo},
booktitle=ICLR,
year={2026},
}

@inproceedings{
xue2025anyordergpt,
title={Any-Order {GPT} as Masked Diffusion Model: Decoupling Formulation and Architecture},
author={Shuchen Xue and Tianyu Xie and Tianyang Hu and Zijin Feng and Jiacheng Sun and Kenji Kawaguchi and Zhenguo Li and Zhi-Ming Ma},
booktitle={ES-FoMo III: 3rd Workshop on Efficient Systems for Foundation Models},
year={2025},
url={https://openreview.net/forum?id=KbRxn8fzrY}
}

@inproceedings{
zheng2026cadd,
title={Continuously Augmented Discrete Diffusion model for Categorical Generative Modeling},
author={Huangjie Zheng and Shansan Gong and Ruixiang ZHANG and Tianrong Chen and Jiatao Gu and Mingyuan Zhou and Navdeep Jaitly and Yizhe Zhang},
booktitle={The Fourteenth International Conference on Learning Representations},
year={2026},
url={https://openreview.net/forum?id=JNAZ3e7Bwt}
}

@inproceedings{joshi2017triviaqa,
  title={Triviaqa: A large scale distantly supervised challenge dataset for reading comprehension},
  author={Joshi, Mandar and Choi, Eunsol and Weld, Daniel S and Zettlemoyer, Luke},
  booktitle={Proceedings of the 55th Annual Meeting of the Association for Computational Linguistics (Volume 1: Long Papers)},
  pages={1601--1611},
  year={2017}
}

@article{zhou2025ccdd,
  title={Coevolutionary continuous discrete diffusion: Make your diffusion language model a latent reasoner},
  author={Zhou, Cai and Yang, Chenxiao and Hu, Yi and Wang, Chenyu and Zhang, Chubin and Zhang, Muhan and Mackey, Lester and Jaakkola, Tommi and Bates, Stephen and Zhang, Dinghuai},
  journal={arXiv preprint arXiv:2510.03206},
  year={2025}
}

@article{pynadath2025candi,
  title={Candi: Hybrid discrete-continuous diffusion models},
  author={Pynadath, Patrick and Shi, Jiaxin and Zhang, Ruqi},
  journal={arXiv preprint arXiv:2510.22510},
  year={2025}
}

@article{hong2026lomdm,
  title={Unifying Masked Diffusion Models with Various Generation Orders and Beyond},
  author={Hong, Chunsan and Lee, Sanghyun and Ye, Jong Chul},
  journal={arXiv preprint arXiv:2602.02112},
  year={2026}
}

@inproceedings{zhao2025d1scalingreasoningdiffusion,
      title={d1: Scaling Reasoning in Diffusion Large Language Models via Reinforcement Learning}, 
      author={Siyan Zhao and Devaansh Gupta and Qinqing Zheng and Aditya Grover},
      year={2025},
      booktitle=NIPS,
}

@article{lee2025lookahead,
  title={Lookahead unmasking elicits accurate decoding in diffusion language models},
  author={Lee, Sanghyun and Kim, Seungryong and Park, Jongho and Park, Dongmin},
  journal={arXiv preprint arXiv:2511.05563},
  year={2025}
}

@article{kim2026diffusionRag2,
  title={Adaptive Guidance for Retrieval-Augmented Masked Diffusion Models},
  author={Kim, Jaemin and Ye, Jong Chul},
  journal={arXiv preprint arXiv:2603.17677},
  year={2026}
}

@article{yu2026diffusionRag1,
  title={Unlocking the Potentials of Retrieval-Augmented Generation for Diffusion Language Models},
  author={Yu, Chuanyue and Wang, Jiahui and Li, Yuhan and Chang, Heng and Lan, Ge and Sun, Qingyun and Li, Jia and Li, Jianxin and Zhang, Ziwei},
  journal={arXiv preprint arXiv:2601.11342},
  year={2026}
}

@inproceedings{
prabhudesai2025LimitedData,
title={Diffusion Beats Autoregressive in Data-Constrained Settings},
author={Mihir Prabhudesai and Mengning Wu and Amir Zadeh and Katerina Fragkiadaki and Deepak Pathak},
booktitle=NIPS,
year={2025},
}

@article{sahoo2026scaling,
  title={Scaling Beyond Masked Diffusion Language Models},
  author={Sahoo, Subham Sekhar and Lemercier, Jean-Marie and Yang, Zhihan and Deschenaux, Justin and Liu, Jingyu and Thickstun, John and Jukic, Ante},
  journal={arXiv preprint arXiv:2602.15014},
  year={2026}
}

@inproceedings{
jia2026bringing,
title={Bringing Stability to Diffusion: Decomposing and Reducing Variance of Training Masked Diffusion Models},
author={Mengni Jia and Mengyu Zhou and Yihao Liu and xiaoxi jiang and guanjunjiang},
booktitle=ICLR,
year={2026},
}

@inproceedings{
karras2022edm,
title={Elucidating the Design Space of Diffusion-Based Generative Models},
author={Tero Karras and Miika Aittala and Timo Aila and Samuli Laine},
booktitle=NIPS,
year={2022},
}

@inproceedings{hang2025improved,
  title={Improved noise schedule for diffusion training},
  author={Hang, Tiankai and Gu, Shuyang and Bao, Jianmin and Wei, Fangyun and Chen, Dong and Geng, Xin and Guo, Baining},
  booktitle=ICCV,
  pages={4796--4806},
  year={2025}
}

@inproceedings{esser2024sd3,
  title={Scaling rectified flow transformers for high-resolution image synthesis},
  author={Esser, Patrick and Kulal, Sumith and Blattmann, Andreas and Entezari, Rahim and M{\"u}ller, Jonas and Saini, Harry and Levi, Yam and Lorenz, Dominik and Sauer, Axel and Boesel, Frederic and others},
  booktitle=ICML,
  year={2024}
}

@misc{lozhkov2024fineweb-edu,
    author       = { Lozhkov, Anton and Ben Allal, Loubna and von Werra, Leandro and Wolf, Thomas },  
    title        = { FineWeb-Edu: the Finest Collection of Educational Content }, 
    year         = 2024,  
    url          = { https://huggingface.co/datasets/HuggingFaceFW/fineweb-edu },  
    doi          = { 10.57967/hf/2497 },
    publisher    = { Hugging Face }
}

@inproceedings{zellers2019hellaswag,
  title={Hellaswag: Can a machine really finish your sentence?},
  author={Zellers, Rowan and Holtzman, Ari and Bisk, Yonatan and Farhadi, Ali and Choi, Yejin},
  booktitle={Proceedings of the 57th annual meeting of the association for computational linguistics},
  pages={4791--4800},
  year={2019}
}

@inproceedings{bisk2020piqa,
  title={Piqa: Reasoning about physical commonsense in natural language},
  author={Bisk, Yonatan and Zellers, Rowan and Gao, Jianfeng and Choi, Yejin and others},
  booktitle={Proceedings of the AAAI conference on artificial intelligence},
  volume={34},
  number={05},
  pages={7432--7439},
  year={2020}
}

@inproceedings{sap2019social,
  title={Social IQa: Commonsense reasoning about social interactions},
  author={Sap, Maarten and Rashkin, Hannah and Chen, Derek and Le Bras, Ronan and Choi, Yejin},
  booktitle={Proceedings of the 2019 conference on empirical methods in natural language processing and the 9th international joint conference on natural language processing (EMNLP-IJCNLP)},
  pages={4463--4473},
  year={2019}
}

@article{sakaguchi2021winogrande,
  title={Winogrande: An adversarial winograd schema challenge at scale},
  author={Sakaguchi, Keisuke and Bras, Ronan Le and Bhagavatula, Chandra and Choi, Yejin},
  journal={Communications of the ACM},
  volume={64},
  number={9},
  pages={99--106},
  year={2021},
  publisher={ACM New York, NY, USA}
}

@inproceedings{mihaylov2018can,
  title={Can a suit of armor conduct electricity? a new dataset for open book question answering},
  author={Mihaylov, Todor and Clark, Peter and Khot, Tushar and Sabharwal, Ashish},
  booktitle={Proceedings of the 2018 conference on empirical methods in natural language processing},
  pages={2381--2391},
  year={2018}
}

@inproceedings{hayakawa2025distillation,
  title={Distillation of discrete diffusion through dimensional correlations},
  author={Hayakawa, Satoshi and Takida, Yuhta and Imaizumi, Masaaki and Wakaki, Hiromi and Mitsufuji, Yuki},
  booktitle={Proceedings of the 42nd International Conference on Machine Learning},
  pages={22259--22297},
  year={2025},
  organization={PMLR}
}

@article{hayakawa2025demystifyingmaskgitsamplerbeyond,
      title={Demystifying {MaskGIT} Sampler and Beyond: Adaptive Order Selection in Masked Diffusion}, 
      author={Satoshi Hayakawa and Yuhta Takida and Masaaki Imaizumi and Hiromi Wakaki and Yuki Mitsufuji},
      year={2026},
      journal={Transactions on Machine Learning Research}
}

@inproceedings{
gong2024scaling,
title={Scaling Diffusion Language Models via Adaptation from Autoregressive Models},
author={Shansan Gong and Shivam Agarwal and Yizhe Zhang and Jiacheng Ye and Lin Zheng and Mukai Li and Chenxin An and Peilin Zhao and Wei Bi and Jiawei Han and Hao Peng and Lingpeng Kong},
booktitle=ICLR,
year={2025},
}

@article{bie2025llada2,
  title={Llada2. 0: Scaling up diffusion language models to 100b},
  author={Bie, Tiwei and Cao, Maosong and Chen, Kun and Du, Lun and Gong, Mingliang and Gong, Zhuochen and Gu, Yanmei and Hu, Jiaqi and Huang, Zenan and Lan, Zhenzhong and others},
  journal={arXiv preprint arXiv:2512.15745},
  year={2025}
}

@inproceedings{
nie2024scaling,
title={Scaling up Masked Diffusion Models on Text},
author={Shen Nie and Fengqi Zhu and Chao Du and Tianyu Pang and Qian Liu and Guangtao Zeng and Min Lin and Chongxuan Li},
booktitle=ICLR,
year={2025},
}

@article{song2025seed,
  title={Seed diffusion: A large-scale diffusion language model with high-speed inference},
  author={Song, Yuxuan and Zhang, Zheng and Luo, Cheng and Gao, Pengyang and Xia, Fan and Luo, Hao and Li, Zheng and Yang, Yuehang and Yu, Hongli and Qu, Xingwei and others},
  journal={arXiv preprint arXiv:2508.02193},
  year={2025}
}

@inproceedings{
gong2026diffucoder,
title={DiffuCoder: Understanding and Improving Masked Diffusion Models for Code Generation},
author={Shansan Gong and Ruixiang ZHANG and Huangjie Zheng and Jiatao Gu and Navdeep Jaitly and Lingpeng Kong and Yizhe Zhang},
booktitle={The Fourteenth International Conference on Learning Representations},
year={2026},
url={https://openreview.net/forum?id=58NA3unZj5}
}

@article{xie2025dreamcoder,
  title={Dream-coder 7b: An open diffusion language model for code},
  author={Xie, Zhihui and Ye, Jiacheng and Zheng, Lin and Gao, Jiahui and Dong, Jingwei and Wu, Zirui and Zhao, Xueliang and Gong, Shansan and Jiang, Xin and Li, Zhenguo and others},
  journal={arXiv preprint arXiv:2509.01142},
  year={2025}
}

@article{hui2024qwen25coder,
  title={Qwen2. 5-coder technical report},
  author={Hui, Binyuan and Yang, Jian and Cui, Zeyu and Yang, Jiaxi and Liu, Dayiheng and Zhang, Lei and Liu, Tianyu and Zhang, Jiajun and Yu, Bowen and Lu, Keming and others},
  journal={arXiv preprint arXiv:2409.12186},
  year={2024}
}

@inproceedings{futrell2019localitybias,
    title = "Information-theoretic locality properties of natural language",
    author = "Futrell, Richard",
    booktitle = "Proceedings of the First Workshop on Quantitative Syntax (Quasy, SyntaxFest 2019)",
    year = "2019",
    publisher = "Association for Computational Linguistics",
    url = "https://aclanthology.org/W19-7902/",
    doi = "10.18653/v1/W19-7902",
    pages = "2--15"
}

@article{zheng2023judging,
  title={Judging llm-as-a-judge with mt-bench and chatbot arena},
  author={Zheng, Lianmin and Chiang, Wei-Lin and Sheng, Ying and Zhuang, Siyuan and Wu, Zhanghao and Zhuang, Yonghao and Lin, Zi and Li, Zhuohan and Li, Dacheng and Xing, Eric and others},
  journal={Advances in neural information processing systems},
  volume={36},
  pages={46595--46623},
  year={2023}
}

@misc{li2023alpacaeval,
  title={Alpacaeval: An automatic evaluator of instruction-following models},
  author={Li, Xuechen and Zhang, Tianyi and Dubois, Yann and Taori, Rohan and Gulrajani, Ishaan and Guestrin, Carlos and Liang, Percy and Hashimoto, Tatsunori B},
  year={2023},
  howpublished = {\url{https://github.com/tatsu-lab/alpaca_eval}}
}

@article{hurst2024gpt,
  title={Gpt-4o system card},
  author={Hurst, Aaron and Lerer, Adam and Goucher, Adam P and Perelman, Adam and Ramesh, Aditya and Clark, Aidan and Ostrow, AJ and Welihinda, Akila and Hayes, Alan and Radford, Alec and others},
  journal={arXiv preprint arXiv:2410.21276},
  year={2024}
}

@inproceedings{
wang2024evaluatingdesignspacediffusionbased,
title={Evaluating the design space of diffusion-based generative models},
author={Yuqing Wang and Ye He and Molei Tao},
booktitle=NIPS,
year={2024},
}

@inproceedings{
havasi2025editflow,
title={Edit Flows: Variable Length Discrete Flow Matching with Sequence-Level Edit Operations},
author={Marton Havasi and Brian Karrer and Itai Gat and Ricky T. Q. Chen},
booktitle=NIPS,
year={2025},
}

@inproceedings{
hersche2026softmask,
title={Soft-Masked Diffusion Language Models},
author={Michael Hersche and Samuel Moor-Smith and Thomas Hofmann and Abbas Rahimi},
booktitle=ICLR,
year={2026},
}

@article{krzakala2009hiding,
  title={Hiding quiet solutions in random constraint satisfaction problems},
  author={Krzakala, Florent and Zdeborov{\'a}, Lenka},
  journal={Physical review letters},
  volume={102},
  number={23},
  pages={238701},
  year={2009},
  publisher={APS}
}

@inproceedings{choi2022perception,
  title={Perception prioritized training of diffusion models},
  author={Choi, Jooyoung and Lee, Jungbeom and Shin, Chaehun and Kim, Sungwon and Kim, Hyunwoo and Yoon, Sungroh},
  booktitle={Proceedings of the IEEE/CVF conference on computer vision and pattern recognition},
  pages={11472--11481},
  year={2022}
}

@inproceedings{nichol2021improved,
  title={Improved denoising diffusion probabilistic models},
  author={Nichol, Alexander Quinn and Dhariwal, Prafulla},
  booktitle={International conference on machine learning},
  pages={8162--8171},
  year={2021},
  organization={PMLR}
}

@article{radford2019language,
  title={Language Models are Unsupervised Multitask Learners},
  author={Radford, Alec and Wu, Jeff and Child, Rewon and Luan, David and Amodei, Dario and Sutskever, Ilya},
  year={2019}
}

@article{Su2021rope,
author = {Su, Jianlin and Ahmed, Murtadha and Lu, Yu and Pan, Shengfeng and Bo, Wen and Liu, Yunfeng},
title = {RoFormer: Enhanced transformer with Rotary Position Embedding},
year = {2024},
issue_date = {Feb 2024},
publisher = {Elsevier Science Publishers B. V.},
address = {NLD},
volume = {568},
number = {C},
issn = {0925-2312},
url = {https://doi.org/10.1016/j.neucom.2023.127063},
doi = {10.1016/j.neucom.2023.127063},
journal = {Neurocomput.},
month = feb,
numpages = {12}
}

@article{sanh2021multitask,
  title={Multitask prompted training enables zero-shot task generalization},
  author={Sanh, Victor and Webson, Albert and Raffel, Colin and Bach, Stephen H and Sutawika, Lintang and Alyafeai, Zaid and Chaffin, Antoine and Stiegler, Arnaud and Scao, Teven Le and Raja, Arun and others},
  journal={arXiv preprint arXiv:2110.08207},
  year={2021}
}

@inproceedings{mostafazadeh2016corpus,
  title={A corpus and cloze evaluation for deeper understanding of commonsense stories},
  author={Mostafazadeh, Nasrin and Chambers, Nathanael and He, Xiaodong and Parikh, Devi and Batra, Dhruv and Vanderwende, Lucy and Kohli, Pushmeet and Allen, James},
  booktitle={Proceedings of the 2016 Conference of the North American Chapter of the Association for Computational Linguistics: Human Language Technologies},
  pages={839--849},
  year={2016}
}

@inproceedings{lin2004rouge,
  title={Rouge: A package for automatic evaluation of summaries},
  author={Lin, Chin-Yew},
  booktitle={Text summarization branches out},
  pages={74--81},
  year={2004}
}

@article{fan2026stable,
  title={Stable-DiffCoder: Pushing the Frontier of Code Diffusion Large Language Model},
  author={Fan, Chenghao and Heng, Wen and Li, Bo and Liu, Sichen and Song, Yuxuan and Su, Jing and Qu, Xiaoye and Shen, Kai and Wei, Wei},
  journal={arXiv preprint arXiv:2601.15892},
  year={2026}
}

@inproceedings{tae2025tess2,
  title={Tess 2: A large-scale generalist diffusion language model},
  author={Tae, Jaesung and Ivison, Hamish and Kumar, Sachin and Cohan, Arman},
  booktitle={Proceedings of the 63rd Annual Meeting of the Association for Computational Linguistics (Volume 1: Long Papers)},
  pages={21171--21188},
  year={2025}
}

\clearpage
\appendix

\section{Background}\label{sec:related_works}
\subsection{Related Work}
\noindent\textbf{Discrete diffusion models.} Discrete diffusion models~\citep{Austin2021,Hoogeboom2021b} have emerged as a potential alternative framework for language modeling.
The main objective of discrete diffusion modeling is to model the discrete data distribution via a continuous-time diffusion process.
There are broadly two types of forward corruption processes in discrete diffusion models:
(1) uniform corruption~\citep{lou2024discrete,sahoo2025diffusionduality}, which replaces tokens with uniformly random tokens, and
(2) masking corruption~\citep{sahoo2024simple,shi2025simplified}, which replaces tokens with a special [\textsc{MASK}] token.
Among these, masked diffusion models (MDMs) have emerged as a leading class of discrete diffusion models for text generation~\citep{sahoo2024simple}.

\noindent\textbf{Various frameworks based on masked diffusion models.} One of the most active recent research directions has been the development of new frameworks that modify the diffusion process itself in order to overcome the theoretical limitations of MDMs. Such efforts include block generation scheme~\citep{arriola2025bd3lm}, modeling the joint probability distribution~\citep{xie2025vadd,liu2025discrete,hayakawa2025distillation}, introducing new operations such as editing~\citep{kim2026flexMDM,havasi2025editflow}, relaxing the discrete tokenization by augmenting the model with additional latent~\citep{zheng2026cadd,zhou2025ccdd,pynadath2025candi,hersche2026softmask}, and learned generation order rather than random order~\citep{peng2025papl,hong2026lomdm}.

\noindent\textbf{Standard masked diffusion models.}
Despite these extensions, the simple masking-based MDM framework~\citep{ou2024RADD,sahoo2024simple} remains the most widely used backbone for large-scale text generation, largely due to its simplicity. Recent large-scale models such as LLaDA~\citep{nie2025llada}, Seed Diffusion~\citep{song2025seed}, and Dream~\citep{ye2025dream} retain the standard generation scheme, where a fully masked sequence is gradually converted into tokens. Many recent advances, including inference-time decoding strategies~\citep{lee2025lookahead,hayakawa2025demystifyingmaskgitsamplerbeyond}, reinforcement learning methods~\citep{hong2025improvingdiscretediffusionunmasking,zhao2025d1scalingreasoningdiffusion}, and retrieval-augmented generation~\citep{yu2026diffusionRag1,kim2026diffusionRag2}, are also built on this basic framework. Empirically, standard MDMs have become increasingly competitive with autoregressive models (ARMs): LLaDA~\citep{nie2025llada} and Dream~\citep{ye2025dream} outperform same-sized LLaMA3 models~\citep{grattafiori2024llama3herdmodels}, while DiffuCoder~\citep{gong2026diffucoder} and Dream-Coder~\citep{xie2025dreamcoder} match same-sized Qwen2.5-Coder models~\citep{hui2024qwen25coder}.

\noindent\textbf{Training inefficiency of masked diffusion models.}
Meanwhile, MDMs require $\sim16\times$ more training compute than ARMs to reach the same validation loss~\citep{prabhudesai2025LimitedData,sahoo2026scaling}, which may become a major obstacle when scaling MDMs to real-world LLMs. Several prior works~\citep{kim2025trainworstplanbest,xue2025anyordergpt} attribute this slow learning to the left-to-right (L2R) bias of language; yet, \textit{how} this bias makes MDM training-inefficient remains underexplored. 
On the other hand, Jia et al.~\citep{jia2026bringing} suggest that the variance of the batch gradient estimator may be a key issue; yet lacks an experimental analysis of their hypothesis. 

The work of Kim~\etal~\citep{kim2025trainworstplanbest} is closely related to ours in that it also tackles the training difficulty of MDMs. They compare $\sigma$-ARMs with permutations $\sigma$ that are close to the L2R order, fully random permutations $\sigma$, and MDMs, showing that order-agnostic training, as in MDMs (and different from AoARMs), is difficult and that L2R bias further hinders learning. In contrast, our work compares the structurally similar $\sigma$-ARM and AoARM to identify the dominant factor among order-space complexity and locality bias. Furthermore, our study empirically analyzes how locality bias makes MDM training difficult, whereas Kim~\etal take a fundamentally different approach by proving, based on a physics-inspired conjecture~\citep{krzakala2009hiding}, that order-agnostic training inevitably includes unlearnable subproblems. Finally, another key distinction is that Kim~\etal propose an inference-time strategy, whereas we propose a training-time strategy.

\noindent\textbf{Bell-shaped time sampling in continuous diffusion models.} In continuous diffusion modeling, bell-shaped time sampling has also been studied as an effective training strategy~\citep{hang2025improved,karras2022edm,esser2024sd3}. 
At first glance, our method may look similar to these approaches, but the underlying reason it works is fundamentally different. 
In image diffusion, it is proven that bell-shaped sampling aids training since both low- and high-SNR regions are difficult to train~\citep{wang2024evaluatingdesignspacediffusionbased}. 
In MDMs, however, we identify a distinct mechanism: low-context samples are learned early and subsequently waste compute, whereas high-context samples are inefficient because locality bias makes many observed tokens redundant. 
Moreover, these continuous-diffusion-style alternatives have not been systematically studied in MDMs. 
In Sec.~\ref{sec:other_alternatives}, we examine such middle-biased training variants in the MDM setting and find that VDM-style noise-schedule modification~\citep{kingma2021variational} and importance weighting~\citep{ho2020denoising,nichol2021improved,choi2022perception} are ineffective compared to direct bell-shaped time sampling. 
These observations suggest that MDMs require their own analysis of training inefficiency and a sampling strategy tailored to the locality structure of language. 

\subsection{Various Language Modeling Frameworks in Detail}\label{sec:shared_loss}
\noindent\textbf{Masked diffusion models.}
Masked diffusion models (MDMs)~\citep{sahoo2024simple,shi2025simplified} define the forward corruption process using the absorbing mask strategy: once a token is masked, it remains masked throughout the remaining process.
For the diffusion process, define the time interval as $t\in\mathcal{T}=[0,1]$ where we corrupt the data from $t=0$ (least noisy) to $t=1$ (most noisy).  
Formally, the forward process $q(\mathbf{x}_t^{i} \mid \mathbf{x})=q_{t|0}($ is designed as follows:
\begin{equation}
q(\mathbf{x}_t^{i} \mid \mathbf{x})=q_{t|0}(\cdot \mid x^i) = \mathrm{Cat}\!\left(\alpha_t \e_{x^i} + (1 - \alpha_t)\e_m\right),\nonumber
\end{equation}
where the forward process gradually adds noise as $t$ grows, \ie, the noise scheduler should satisfy $\alpha_0=1$, $\alpha_1=0$, $\alpha_t'<0$. The noise scheduler~\citep{sahoo2024simple,shi2025simplified} is typically set to linear scheduler, $\alpha_t = 1-t$. Following prior works~\citep{sahoo2024simple, hong2026lomdm}, discretize the time interval $\mathcal{T}$ with $T+1$ steps, and define $s(\tau)=\tau/(T+1)$ and $t(\tau)=(\tau+1)/(T+1)$. Then the generative distribution is divided into $T$ diffusion reverse steps ($\mathbf{x}_{t(T)}\rightarrow\dots\rightarrow\mathbf{x}_{t(0)}$) and 1 reconstruction step ($\mathbf{x}_{t(0)}\rightarrow\mathbf{x}$). 
The true reverse posterior based on corresponding forward process is derived as follows:

\begin{align}
&q(\mathbf{x}_s^{i} \mid \mathbf{x}_t, \mathbf{x})=q(\x_s^{i} \mid \mathbf{x}_t^{i}, \mathbf{x})=
\begin{cases}
\mathrm{Cat}(\e_{x_t^i}), & \text{if } \mathbf{x}_t^{i} \neq \mathbf{m}, \\[6pt]
\mathrm{Cat}\!\left(\tfrac{1-\alpha_s}{1-\alpha_t}\e_\mask + \tfrac{\alpha_s - \alpha_t}{1 - \alpha_t}\e_{x^i}\right), & \text{if } x_t^i = \mathbf{m}.
\end{cases}\nonumber
\end{align}
where we drop $\tau$ in $s(\tau)$ and $t(\tau)$ for brevity.
To mimic the true reverse posterior, with the diffusion network $\model(\x_t, t)\in\Delta^{V+1}$ that predicts the clean token $x^i$ over the vocabulary, the parametrized reverse process is designed as follows:

\begin{align}
&p_{\theta}(\x_s^{i} \mid \x_t)
= q\!\left(\x_s^{i} \mid \x_t, \mathbf{x} = \mathbf{x}_{\theta}(\x_t, t)\right)= 
\begin{cases}
\mathrm{Cat}(\e_{x_t^i}), & \text{if } \mathbf{x}_t^{i} \neq \mathbf{m}, \\[6pt]
\mathrm{Cat}\!\left(\tfrac{1-\alpha_s}{1-\alpha_t}\e_\mask + \tfrac{\alpha_s - \alpha_t}{1 - \alpha_t}\model(\x_t, t)\right),  & \text{if } x_t^i = \mathbf{m}.\nonumber
\end{cases}
\end{align}
where $s<t$. Note that conventional MDM~\citep{sahoo2024simple,nie2025llada,ye2025dream,hong2026lomdm} typically set model output of mask probability as zero.
Due to curse of dimensionality~\citep{lou2024discrete}, MDM models the reverse process as a token-wise conditionally independent distribution rather than capturing full joint distribution, \ie,
$p_\theta(\x_s \mid \x_t)
= \prod_{i=1}^L p_\theta(\x_s^{i} \mid \x_t)$. 
In this regard, the model distribution is expressed as $p_\theta(\mathbf{x})=\sum_{\mathbf{x}_{t(0:T)}} p_\theta(\mathbf{x}_{t(T)})p_\theta(\mathbf{x}|\mathbf{x}_{t(0)})\prod_{\tau=1}^{T} p_\theta(\x_{s(\tau)}|\x_{t(\tau)})$ in the discrete-time case.
Finally, with $T\rightarrow\infty$, the continuos time NELBO is given as follows:
\begin{align*}
\mathcal{L}_{\mathrm{MDM}}=\int_{0}^{1}
\tfrac{-\alpha_t'}{1-\alpha_t}\mathbb{E}_{\x\sim p_{\mathrm{data}},\x_t\sim q_{t|0}}
\Bigl[
-\sum_{i:x_t^i=\mask}
\log p_\theta(x^i|\x_t,t)\Big]dt.
\end{align*}
where $p_\theta(x^i|\x_t,t)=\model(\x_t,t)$ and $\tfrac{-\alpha_t'}{1-\alpha_t}>0$. Prior works~\citep{sahoo2024simple,zheng2024maskeddiffusionmodelssecretly,ou2024RADD} have reported time-agnostic parameterization gives better performance than that of time-variant parameterization, and inspired by this, MDM typically utilizes time-agnostic network~\citep{sahoo2024simple,nie2025llada,ye2025dream,hong2026lomdm}, \ie, $\model(\x_t,t)=\model(\x_t)$.

\noindent\textbf{Various views of MDM objective.} Zheng~\etal~\citep{zheng2024maskeddiffusionmodelssecretly} have shown that the time-variant MDM objective can be rewritten with the time-agnostic form, and Kim~\etal~\citep{kim2025trainworstplanbest} have shown that the time-agnostic MDM objective can be rewritten as any-order autoregressive objective form. For a sequence $\x$ and a mask indicator $\Mask\in\{0,1\}^L$, define the masked sequence $\x[\Mask]$ that is obtained by replacing $x^i$ with $\mask$ whenever $M^i=1$, while leaving it unchanged whenever $M^i=0$. Let $\Mask_n$ as mask indicator with mask number of $n$, \ie, $\sum_iM_n^i=n$, and $\mathcal{M}_{n}$ as set of all possible combinatorial $\Mask_n$. Then, Zheng~\etal have proven that:

\begin{lemma}[Proposition 3.1 of Zheng~\etal~\citep{zheng2024maskeddiffusionmodelssecretly}]\label{lemma:zheng-1}
    For the time-variant diffusion network $\model(\cdot, t)\in\Delta^{V+1}$, the theoretical NELBO of MDM, $\mathcal{L}_{\mathrm{MDM}}$ (\eqref{eq:loss_mdlm}), can be rewritten as follows:
    \begin{align}
\mathcal L_{\mathrm{MDM}}
    =
    \mathbb E_{\x\sim p_{\mathrm{data}}}\Big[\sum_{N_c=0}^{L-1}\mathbb{E}_{
    \,\Mask_{L-N_c}\sim U(\mathcal M_{L-N_c}),
    \,i\sim U(i:M_{L-N_c}^i=1)}
    \big[-\log \bar p_\theta(x^i\mid \x[\Mask_{L-N_c}])\big]\Big],
\end{align}
where $\log \bar p_\theta(x^i\mid \x[\Mask_{L-N_c}])
    :=
    \mathbb E_{\alpha_{N_c}}
    \big[
    \log p_\theta(x^i\mid \mathbf{x}_t=\x[\Mask_{L-N_c}],t=\alpha_t^{-1}(\alpha_{N_c}))
    \big]$, $\alpha_{N_c}\sim \mathrm{Beta}(N_c+1,L-N_c)$, and $p_\theta(x^i\mid \x[\Mask_{L-N_c}],t)=\mu_\theta^i(\x[\Mask_{L-N_c}],t)$.
\end{lemma}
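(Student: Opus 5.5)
The plan is to condition the inner expectation in \eqref{eq:loss_mdlm} on the number of masked positions, and then exchange the time integral with the finite sum over mask counts. A change of variables $u=\alpha_t$ should then turn the time weighting into exactly a $\mathrm{Beta}(N_c+1,L-N_c)$ density.

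\textbf{Step 1: condition on the number of masks.} Fix $\x$ and $t$. Under $q_{t|0}$ each coordinate is masked independently with probability $1-\alpha_t$. Hence the mask indicator $\Mask$ of $\x_t$ satisfies two facts:
\begin{itemize}
\item its count $n=\sum_i M^i$ is $\mathrm{Binomial}(L,1-\alpha_t)$;
\item conditional on $n$, $\Mask$ is uniform on $\mathcal M_n$.
\end{itemize}
Also $\x_t=\x[\Mask]$. Therefore
\begin{align*}
\mathbb E_{\x_t\sim q_{t|0}}\Bigl[\sum_{i:x_t^i=\mask}-\log p_\theta(x^i\mid\x_t,t)\Bigr]
=\sum_{n=1}^{L}\binom{L}{n}(1-\alpha_t)^n\alpha_t^{L-n}\, n\,
\mathbb E_{\Mask_n\sim U(\mathcal M_n),\,i\sim U(i:M_n^i=1)}\bigl[-\log p_\theta(x^i\mid \x[\Mask_n],t)\bigr].
\end{align*}
In this identity I have replaced the sum over the $n$ masked positions by $n$ times a uniform average over those positions. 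The term $n=0$ contributes nothing, so with $N_c=L-n$ the index runs over $N_c=0,\dots,L-1$.

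\textbf{Step 2: exchange integral and sum.} All terms are nonnegative, so Tonelli lets me pull the integral $\int_0^1 \frac{-\alpha_t'}{1-\alpha_t}(\cdot)\,dt$ inside both $\mathbb E_{\x}$ and the finite sum over $n$. I would then substitute $u=\alpha_t$. Since $\alpha$ is strictly decreasing from $\alpha_0=1$ to $\alpha_1=0$, we have $-\alpha_t'\,dt=du$ with $u$ ranging over $[0,1]$. The weight multiplying the conditional expectation becomes
\begin{align*}
w_{N_c}(u)=\binom{L}{n}\, n\,u^{N_c}(1-u)^{n-1}.
\end{align*}

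\textbf{Step 3: identify the weight as a Beta density.} Using $B(N_c+1,n)=N_c!\,(n-1)!/L!$, I would check that
\begin{align*}
\binom{L}{n}\, n\, B(N_c+1,n)=1 .
\end{align*}
Hence $w_{N_c}$ is exactly the density of $\mathrm{Beta}(N_c+1,L-N_c)$ in the variable $u=\alpha_t$. The time integral therefore equals $\mathbb E_{\alpha_{N_c}}\bigl[-\log p_\theta(x^i\mid\x[\Mask_{L-N_c}],\alpha^{-1}(\alpha_{N_c}))\bigr]$, which is $-\log\bar p_\theta$ by definition. Swapping this inner expectation with the expectation over $(\Mask,i)$, again by Tonelli, gives the claimed formula.

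\textbf{Main obstacle.} The only delicate points are justifying the interchanges and the change of variables. Nonnegativity of the cross-entropy handles the interchanges. Monotonicity and differentiability of $\alpha_t$ give an invertible substitution. The remaining content is the Beta normalization identity, which is routine.
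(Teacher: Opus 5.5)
Your proof is correct. The paper does not reprove this lemma; it imports it from Zheng et al., and your argument is the standard derivation of that result: condition on the mask count (the same step the paper uses in its proof of Proposition~\ref{prop:equal_optimizer}), then substitute $u=\alpha_t$ so that the weight $\frac{-\alpha_t'}{1-\alpha_t}\binom{L}{n}\,n\,(1-\alpha_t)^{n}\alpha_t^{N_c}$ becomes the $\mathrm{Beta}(N_c+1,L-N_c)$ density. The only nit is cosmetic: literally $du=\alpha_t'\,dt$, and a positive weight on $[0,1]$ comes from reversing the limits ($u$ runs from $1$ to $0$ as $t$ runs from $0$ to $1$).
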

Note that the original version of Proposition 3.1 of Zheng~\etal includes a $1/(L-N_c)$ scaling term, which is accounted for in our notation by the expectation over the sampled index $i$. We note that $\bar p_\theta$ is not normalized; in particular,
$\sum_{a\in[V]} \bar p_\theta(x^i=a\mid\mathbf{x}[\Mask])\neq 1$.

Motivated by above lemma, Kim~\etal have shown that:
\begin{lemma}[Proposition 2.1 of Kim~\etal~\citep{kim2025trainworstplanbest}]\label{lemma:kim-1}
    For the time-agnostic diffusion network $\model(\cdot)\in\Delta^{V+1}$, the theoretical NELBO of MDM, $\mathcal{L}_{\mathrm{MDM}}$ (\eqref{eq:loss_mdlm}), can be rewritten as follows:
    \begin{align}
\mathcal{L}_{\mathrm{MDM}}=\mathbb{E}_{\x\sim p_{\mathrm{data}}}\Big[\sum_{N_c=0}^{L-1}\mathbb{E}_{\Mask_{L-N_c}\sim \mathrm{U}(\mathcal{M}_{L-N_c}),i\sim \mathrm{U}(i:M_{L-N_c}^i=1)}[-\log p_\theta(x^i|\x[\Mask_{L-N_c}])]\Big].\label{eq:eq_loss_mdlm_simple}
\end{align}
where \(p_\theta(x^i\mid \x[\Mask_{L-N_c}])=\mu_\theta^i(\x[\Mask_{L-N_c}])\).
\end{lemma}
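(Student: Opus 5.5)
The cleanest route is to specialize Lemma~\ref{lemma:zheng-1} to a time-agnostic network. The only time-dependence in $\log \bar p_\theta(x^i\mid \x[\Mask_{L-N_c}])$ comes through the second argument of $\mu_\theta^i(\x[\Mask_{L-N_c}],t)$. If $\mu_\theta^i(\cdot,t)=\mu_\theta^i(\cdot)$ for all $t$, then the integrand
\begin{align*}
\log p_\theta\big(x^i\mid \x_t=\x[\Mask_{L-N_c}],\,t=\alpha_t^{-1}(\alpha_{N_c})\big)
\end{align*}
is constant in $\alpha_{N_c}$. Since $\mathrm{Beta}(N_c+1,L-N_c)$ is a probability distribution, the expectation over $\alpha_{N_c}$ returns $\log \mu_\theta^i(\x[\Mask_{L-N_c}])$ itself. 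Substituting this into the display of Lemma~\ref{lemma:zheng-1} gives exactly \eqref{eq:eq_loss_mdlm_simple}.

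Because that argument leans entirely on Lemma~\ref{lemma:zheng-1}, I would also give a self-contained derivation from \eqref{eq:loss_mdlm} as a check. The steps are as follows.

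\emph{Step 1 (change of variables).} Substitute $u=1-\alpha_t$. Then $-\alpha_t'\,dt = du$, and since $\alpha_0=1$ and $\alpha_1=0$, $u$ runs over $[0,1]$ with weight $1/u$. Under $q_{t|0}$, each position is independently masked with probability $u$.

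\emph{Step 2 (condition on the mask count).} Condition on the number of masks $n=L-N_c$, which is distributed as $\mathrm{Binomial}(L,u)$. Given $n$, the mask pattern is uniform on $\mathcal M_n$, and because the network ignores $t$, the summand depends only on the pattern. The inner sum over the $n$ masked positions equals $n\,\mathbb E_{i\sim U(i:M_n^i=1)}[-\log p_\theta(x^i\mid\x[\Mask_n])]$. The term $n=0$ contributes nothing.

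\emph{Step 3 (the weight integral).} The integrand from Step 1, multiplied by the mask-count probability and by the factor $n$ from Step 2, gives the weight for each $n\ge1$:
\begin{align*}
\int_0^1 \frac{1}{u}\binom{L}{n}u^n(1-u)^{L-n}\,n\,du
= n\binom{L}{n}\mathrm B(n,L-n+1)
= n\binom{L}{n}\frac{(n-1)!\,(L-n)!}{L!}=1 .
\end{align*}
Summing over $n=1,\dots,L$, i.e.\ $N_c=0,\dots,L-1$, and exchanging the sum with $\mathbb E_{\x}$ (all terms are nonnegative, so Tonelli applies) yields \eqref{eq:eq_loss_mdlm_simple}.

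The only delicate point is Step 3: the Beta-integral evaluation must come out to exactly one for every $n$, and the $1/u$ singularity near $u=0$ must be integrable. It is, because the factor $u^n$ with $n\ge1$ cancels it. Everything else is bookkeeping.
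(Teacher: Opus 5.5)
Your proposal is correct. The paper gives no proof of its own here: it imports the result from Kim~\etal and places it right after Lemma~\ref{lemma:zheng-1} as that lemma's time-agnostic specialization, which is exactly your first route, since the $\mathrm{Beta}(N_c+1,L-N_c)$ average of a $t$-independent quantity collapses to the quantity itself. Your self-contained check is also sound, and the identity $n\binom{L}{n}\mathrm{B}(n,L-n+1)=1$ in your Step 3 is precisely why the paper can absorb the factor $1/\bigl(\binom{L}{L-N_c}(L-N_c)\bigr)$ from the original statement of Kim~\etal into the uniform expectations over $\Mask_{L-N_c}$ and $i$.
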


Again, note that the original version of Proposition 2.1 of Kim~\etal includes a $\tfrac{1}{\binom{L}{L-N_c}\cdot(L-N_c)}$ scaling term, which is accounted for in our notation by the expectation over the sampled index $i$ and sampled mask $\Mask_{L-N_c}$.

\noindent\textbf{Shared format of learning objectives.}
Recall the shared objective in \eqref{eq:loss_shared}:

\begin{align}
\mathcal{L}_{\mathcal{F}}
=\sum_{N_c=0}^{L-1}\mathcal{L}_{\mathcal{F}}^{N_c}
,\quad\mathcal{L}_{\mathcal{F}}^{N_c}\coloneqq
\mathbb{E}_{\mathbf{x}\sim p_{\mathrm{data}},\mathbf{z}_{N_c}\sim p_{\mathcal{F}}(\mathbf{z}\mid \mathbf{x},N_c),i\sim p_{\mathcal{F}}(i\mid \mathbf{x},\mathbf{z}_{N_c})}
[-\log p_{\theta}(x^i\mid \mathbf{z}_{N_c})],
\label{eq:loss_shared_appendix}
\end{align}
where \(N_c\) is the
number of context tokens, \(\mathbf{z}_{N_c}\) is the partially observed context,
and \(p_{\mathcal F}(\mathbf{z}\mid \x,N_c)\) and
\(p_{\mathcal F}(i\mid \x,\mathbf{z}_{N_c})\) specify how each framework constructs
the context and selects the target position. We spell out these
framework-specific choices in Table~\ref{tab:shared_format_objectives}. Since
\(N_c\) counts observed tokens whereas \(\Mask_n\) counts masked positions, the
MDM row uses \(\Mask_{L-N_c}\).

\begin{table}[t]
\centering
\small
\caption{Framework-specific choices that instantiate the shared objective
in \eqref{eq:loss_shared}. Entries are written as sampling rules; deterministic
entries correspond to point masses at the displayed value.}
\label{tab:shared_format_objectives}
\begin{adjustbox}{width=\textwidth}
\begin{tabular}{lccc}
\toprule
Framework \(\mathcal F\)
&
Partially observed latent \(\mathbf{z}_{N_c}\)
&
\(p_{\mathcal F}(\mathbf{z}\mid \x,N_c)\)
&
\(p_{\mathcal F}(i\mid \x,\mathbf{z}_{N_c})\)
\\
\midrule
\rowcolor{gray!8}
ARM
&
\(\x^{<N_c+1}\)
&
\(\mathbf{z}_{N_c}=\x^{<N_c+1}\)
&
\(i=N_c+1\)
\\[0.8ex]
\(\sigma\)-ARM
&
\(\x^{<\sigma_{N_c+1}}\)
&
\(\mathbf{z}_{N_c}=\x^{<\sigma_{N_c+1}}\)
&
\(i=\sigma_{N_c+1}\)
\\[0.8ex]
\rowcolor{gray!8}
AoARM
&
\((\x^{<\sigma_{N_c+1}},\sigma)\)
&
\(\sigma\sim U(S_L),\quad
\mathbf{z}_{N_c}=(\x^{<\sigma_{N_c+1}},\sigma)\)
&
\(i=\sigma_{N_c+1}\)
\\[1.0ex]
Time-agnostic MDM
&
\(\x[\Mask_{L-N_c}]\)
&
\(
\begin{gathered}
\Mask_{L-N_c}\sim U(\mathcal M_{L-N_c}),\\
\mathbf{z}_{N_c}=\x[\Mask_{L-N_c}]
\end{gathered}
\)
&
\(
i\sim U(\{i:z_{N_c}^i=m\})
\)
\\[2.2ex]
\rowcolor{gray!8}
Time-variant MDM
&
\(
\left(
\x[\Mask_{L-N_c}],\alpha_{N_c}
\right)
\)
&
\(
\begin{gathered}
\Mask_{L-N_c}\sim U(\mathcal M_{L-N_c}),\quad
\alpha_{N_c}\sim \mathrm{Beta}(N_c+1,L-N_c),\\
\mathbf{z}_{N_c}=(\x[\Mask_{L-N_c}],\alpha_{N_c})
\end{gathered}
\)
&
\(
i\sim U(\{i:z_{N_c}^i=m\})
\)
\\
\bottomrule
\end{tabular}
\end{adjustbox}
\end{table}

Substituting the ARM row into \eqref{eq:loss_shared_appendix} recovers
\eqref{eq:loss_ARM}. Similarly, for a fixed permutation \(\sigma\),
\[
    \sum_{N_c=0}^{L-1}\mathcal L_{\sigma\text{-}\mathrm{ARM}}^{N_c}
    =
    \mathbb E_{\x\sim p_{\mathrm{data}}}
    \sum_{r=1}^{L}
    \Big[-\log p_\theta^{\sigma\textrm{-ARM}}
    \!\left(x^{\sigma_r}\mid \x^{<\sigma_r}\right)\Big],
\]
and averaging the same construction over
\(\sigma\sim U(S_L)\) gives the AoARM objective in
\eqref{eq:loss_aogpt}:
\[
    \sum_{N_c=0}^{L-1}\mathcal L_{\mathrm{AoARM}}^{N_c}
    =
    \mathbb E_{\x\sim p_{\mathrm{data}},\,\sigma\sim U(S_L)}
    \Big[-\sum_{r=1}^{L}
    \log p_\theta^{\textrm{AoARM}}
    \!\left(x^{\sigma_r}\mid \x^{<\sigma_r},\sigma\right)\Big].
\]

For time-agnostic MDM, \eqref{eq:loss_shared_appendix} becomes the context-level loss
\begin{align}
    \mathcal L_{\mathrm{MDM}}^{N_c}
    =
    \mathbb E_{\x\sim p_{\mathrm{data}},
    \,\Mask_{L-N_c}\sim U(\mathcal M_{L-N_c}),
    \,i\sim U(i:M_{L-N_c}^i=1)}
    \big[-\log p_\theta(x^i\mid \x[\Mask_{L-N_c}])\big],\label{eq:context_level_mdm_loss}
\end{align}
and this matches the theoretical NELBO of MDM by Lemma~\ref{lemma:kim-1}.

For a time-variant MDM, \eqref{eq:loss_shared_appendix} becomes
\begin{align}
\mathcal L_{\mathrm{MDM}}^{N_c}
    =
    \mathbb E_{\substack{
    \x\sim p_{\mathrm{data}},
    \,\Mask_{L-N_c}\sim U(\mathcal M_{L-N_c})\\
    i\sim U(\{i:M_{L-N_c}^i=1\})
    \alpha_{N_c}\sim \mathrm{Beta}(N_c+1,L-N_c)
    }}
    \big[
    -\log p_\theta(x^i\mid \x[\Mask_{L-N_c}],\alpha_t^{-1}(\alpha_{N_c}))
    \big],
    \label{eq:shared_loss_time_variant}
\end{align}
and this matches the theoretical NELBO of MDM by Lemma~\ref{lemma:zheng-1} where we explicitly include $p_\theta$ rather than $\bar p_\theta$.

\paragraph{Remark.} 
Since all experiments in our work use time-agnostic MDMs, as in typical MDM parameterizations
~\citep{sahoo2024simple,nie2025llada,ye2025dream,hong2026lomdm}, we simply evaluate
\(\log p_\theta(x^i\mid \x[\Mask_{L-N_c}])\) by
\(\log \mu_\theta^i(\x[\Mask_{L-N_c}])\). In other words, throughout all of experiments conducted in our paper, we measured $\mathcal{L}_{\mathrm{MDM}}^{N_c}$ by evaluating \eqref{eq:context_level_mdm_loss}.

\section{Proofs}
Before presenting the proofs, we introduce a shorthand for the data conditional induced by a partially observed context.
For a partially observed context $\mathbf z$, let $C_{\#}(\mathbf z)\subseteq [L]$ denote the set of positions whose clean token values are revealed by $\mathbf z$.
For example, for a masked sequence $\x[\Mask]$, since $\Mask^j=1$ means that position $j$ is masked, we have
\[
    C_{\#}(\x[\Mask])=\{j\in[L]:\Mask^j=0\}.
\]
For an autoregressive prefix $\x^{<i}$, we have
\[
    C_{\#}(\x^{<i})=\{1,\ldots,i-1\}.
\]
Similarly, for a permuted prefix $\x^{<\sigma_r}$, we have
\[
    C_{\#}(\x^{<\sigma_r})=\{\sigma_1,\ldots,\sigma_{r-1}\}.
\]
For each $j\in C_{\#}(\mathbf z)$, we write $z^j$ for the revealed token value at position $j$.

Using this notation, for any target position $i\in[L]$, token value $a\in[V]$, and partially observed context $\mathbf z$ with positive data mass, we define
\begin{align}
    p_{\mathrm{data}}(x^i=a\mid \mathbf z)
    &\coloneqq
    p_{\mathrm{data}}\!\left(
        X^i=a
        \,\middle|\,
        X^j=z^j,\ \forall j\in C_{\#}(\mathbf z)
    \right) \nonumber \\
    &=
    \frac{
        \sum_{\tilde{\x}\in\mathcal X:
        \tilde{x}^i=a,\,
        \tilde{x}^j=z^j,\ \forall j\in C_{\#}(\mathbf z)}
        p_{\mathrm{data}}(\tilde{\x})
    }{
        \sum_{\tilde{\x}\in\mathcal X:
        \tilde{x}^j=z^j,\ \forall j\in C_{\#}(\mathbf z)}
        p_{\mathrm{data}}(\tilde{\x})
    }.\label{eq:p_data_cond}
\end{align}
When $a=x^i$ is clear from context, we simply write $p_{\mathrm{data}}(x^i\mid\mathbf z)$.
If the denominator is zero, the conditional distribution can be defined arbitrarily, since such a context is never sampled under $p_{\mathrm{data}}$.

\subsection{Proof of Proposition~\ref{prop:entropy}}\label{sec:proof-prop31}
We prove Proposition~\ref{prop:entropy}, which was omitted in the main paper. We first provide one useful lemma:
\begin{lemma}[Appendix E.1 of Kim~\etal~\citep{kim2025trainworstplanbest}]\label{lemma:kim-2}
    For $\sigma\in S_L$ where $S_L$ is any permutation of $[L]$, let $[\x^{<\sigma_r}:\Mask]$ denotes the masked sequence that reveals
$\{x^{\sigma_1},\ldots,x^{\sigma_{r-1}}\}$ and masks all remaining positions, rather than the raw subsequence.
    For the time-agnostic diffusion network $\model(\cdot)\in\Delta^{V+1}$, the theoretical NELBO of MDM, $\mathcal{L}_{\mathrm{MDM}}$ (\eqref{eq:loss_mdlm}), can be rewritten as follows:
    \begin{align}
\mathcal{L}_{\mathrm{MDM}}=\mathbb{E}_{\x\sim p_{\mathrm{data}},\sigma\sim U(S_L)}
\left[\sum_{r=1}^L-\log p_\theta(x^{\sigma_r}\mid[\x^{<\sigma_r}:\Mask])]\right].
\end{align}
where \(p_\theta(x^i\mid \x[\Mask_{L-N_c}])=\mu_\theta^i(\x[\Mask_{L-N_c}])\).
\end{lemma}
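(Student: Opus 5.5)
The plan is to derive this from Lemma~\ref{lemma:kim-1}, which already writes $\mathcal{L}_{\mathrm{MDM}}$ for a time-agnostic network as a sum over $N_c$ of expectations over uniformly random mask patterns and target indices. What remains is to show that the any-order autoregressive indexing induced by a uniform permutation $\sigma\sim U(S_L)$ produces exactly the same joint law of (context, target) at each step $r$. We match the terms via $r=N_c+1$.

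\textbf{Step 1 (term-wise rewriting).} For fixed $r\in[L]$, set $N_c=r-1$. Let $\Mask^{(\sigma,r)}$ be the indicator that masks every position outside $\{\sigma_1,\ldots,\sigma_{r-1}\}$. Then $[\x^{<\sigma_r}:\Mask]=\x[\Mask^{(\sigma,r)}]$, and $\Mask^{(\sigma,r)}\in\mathcal{M}_{L-N_c}$. The target $\sigma_r$ is one of its masked positions, since $\sigma_r\notin\{\sigma_1,\ldots,\sigma_{r-1}\}$. Because the network is time-agnostic, $p_\theta(x^{\sigma_r}\mid[\x^{<\sigma_r}:\Mask])=\mu_\theta^{\sigma_r}(\x[\Mask^{(\sigma,r)}])$. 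The summand therefore depends on $\sigma$ only through the pair $(\Mask^{(\sigma,r)},\sigma_r)$.

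\textbf{Step 2 (distributional identity).} This is the only real content of the argument. We must show that, under $\sigma\sim U(S_L)$, the pair $(\Mask^{(\sigma,r)},\sigma_r)$ has the law of $\Mask_{L-N_c}\sim U(\mathcal{M}_{L-N_c})$ followed by $i\sim U(\{i:M^i_{L-N_c}=1\})$.

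We verify this by direct counting. Fix a revealed set $S$ with $|S|=r-1$ and a target $i\notin S$. The permutations with $\{\sigma_1,\ldots,\sigma_{r-1}\}=S$ and $\sigma_r=i$ number $(r-1)!\,(L-r)!$, so
\[
\mathbb{P}\bigl(\{\sigma_1,\ldots,\sigma_{r-1}\}=S,\ \sigma_r=i\bigr)=\frac{(r-1)!\,(L-r)!}{L!}=\frac{1}{\binom{L}{L-N_c}}\cdot\frac{1}{L-N_c}.
\]
The right-hand side is exactly the probability of drawing the complement of $S$ uniformly from $\mathcal{M}_{L-N_c}$ and then drawing $i$ uniformly among its $L-N_c$ masked positions.

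\textbf{Step 3 (assembly).} By Step 2, for each $r$,
\[
\mathbb{E}_{\x,\sigma}\bigl[-\log p_\theta(x^{\sigma_r}\mid[\x^{<\sigma_r}:\Mask])\bigr]=\mathcal{L}^{N_c}_{\mathrm{MDM}}\quad\text{with } N_c=r-1.
\]
Summing over $r=1,\ldots,L$ is the same as summing over $N_c=0,\ldots,L-1$. Since the sum is finite, it can be exchanged with the expectation over $(\x,\sigma)$. Lemma~\ref{lemma:kim-1} then identifies the resulting total with $\mathcal{L}_{\mathrm{MDM}}$.

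The counting in Step 2 is elementary. The point that needs care is that time-agnosticism is used twice: once for Lemma~\ref{lemma:kim-1} itself, and once so that the conditional in Step 1 depends only on the masked sequence and not on $r$ or $\sigma$ beyond it.
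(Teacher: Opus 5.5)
Your proof is correct. The paper does not prove Lemma~\ref{lemma:kim-2} itself. It imports the statement from Appendix~E.1 of Kim~\etal and uses it only inside the proof of Proposition~\ref{prop:entropy}, so your proposal supplies a derivation the paper omits rather than competing with one.

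Your reduction to Lemma~\ref{lemma:kim-1} is the standard argument for the MDM/AoARM equivalence. It shows that for $\sigma\sim U(S_L)$ the pair $(\{\sigma_1,\ldots,\sigma_{r-1}\},\sigma_r)$ has the law of a uniform $\Mask_{L-N_c}$ followed by a uniform masked index. The identity $\frac{(r-1)!\,(L-r)!}{L!}=\frac{1}{\binom{L}{L-N_c}(L-N_c)}$ with $N_c=r-1$ is the core of this, and your counting checks out.

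Note, however, that Lemma~\ref{lemma:kim-1} is itself only cited in the paper, so your proof is exactly as self-contained as that citation. For a fully self-contained version you can bypass Lemma~\ref{lemma:kim-1}. For a fixed mask set with $n=L-N_c$ masks, the NELBO weight is $\int_0^1 \frac{-\alpha_t'}{1-\alpha_t}(1-\alpha_t)^{n}\alpha_t^{L-n}\,dt$. The substitution $u=1-\alpha_t$ turns this into $\int_0^1 u^{n-1}(1-u)^{L-n}\,du=\frac{(n-1)!\,(L-n)!}{L!}$, which is the same permutation count you computed. Time-agnosticism is what lets you pull $\mu_\theta$ out of the time integral here.

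In either form, your argument uses nothing about $\mu_\theta$ beyond its being a function of the masked sequence alone. It therefore applies verbatim to the substitution $\mu_\theta^i(\x_t)=p_{\mathrm{data}}(x^i=\cdot\mid\x_t)$ that the paper makes in the proof of Proposition~\ref{prop:entropy}.
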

Now we prove the following:
\begingroup
\renewcommand{\thetheorem}{\ref{prop:entropy}}
\renewcommand{\theHtheorem}{app.restate.\ref{prop:entropy}}
\begin{proposition}
With infinite model capacity, the optimal loss for any framework $\mathcal{F}\in\{\mathrm{ARM,MDM},\sigma\mathrm{-ARM,AoARM}\}$ is equal to the entropy of the data distribution:
\begin{align}
    \min_{\theta}\mathcal{L}_{\mathcal{F}}=
    H(X),\quad H(X):=-\mathbb{E}_{\x\sim p_{\mathrm{data}}}\big[\log p_{\mathrm{data}}(\x)\big].
\end{align}
Furthermore, letting $I$ denote the random target index, $\mathcal{L}_{\mathcal F}^{N_c}$ can be decomposed as
\begin{align*}
    \mathcal{L}_{\mathcal F}^{N_c}(\theta)
=
\red{H_{\mathcal F}(X^I\mid Z_{N_c},I)}
+
\blue{\mathbb{E}_{Z_{N_c},I}
\mathrm{KL}\!\left(
p_{\mathrm{data},\mathcal F}(\cdot\mid Z_{N_c},I)
\,\|\,
p_\theta(\cdot\mid Z_{N_c},I)
\right)}.
\end{align*}
where the conditional entropy is induced by $p_{\mathrm{data}}$.
\end{proposition}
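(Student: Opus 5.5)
I would prove the decomposition first, since it holds for every framework $\mathcal F$ and every $N_c$, and then obtain the global minimum by summing over $N_c$. Fix $N_c$ and the framework-specific sampling rules of Table~\ref{tab:shared_format_objectives}, so that $(X, Z_{N_c}, I)$ is a well-defined joint random variable. For each realization $(\mathbf z, i)$ with positive mass, the conditional law of the target token $X^I$ given $(Z_{N_c},I)=(\mathbf z,i)$ is exactly $p_{\mathrm{data}}(x^i=\cdot\mid\mathbf z)$ from \eqref{eq:p_data_cond}. This uses the fact that the context and index are selected independently of the unrevealed token values; I will check it row by row in the table. Writing $-\log p_\theta(x^i\mid\mathbf z)=-\log p_{\mathrm{data}}(x^i\mid\mathbf z)+\log\frac{p_{\mathrm{data}}(x^i\mid\mathbf z)}{p_\theta(x^i\mid\mathbf z)}$ and taking the conditional expectation over $X^I$ given $(\mathbf z,i)$ gives the conditional entropy plus the KL divergence. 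Averaging over $(Z_{N_c},I)$ then yields the stated identity $\mathcal L^{N_c}_{\mathcal F}=H_{\mathcal F}(X^I\mid Z_{N_c},I)+\mathbb E\,\mathrm{KL}$.

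For the minimum, I sum over $N_c$, which gives $\mathcal L_{\mathcal F}\ge\sum_{N_c}H_{\mathcal F}(X^I\mid Z_{N_c},I)$. With infinite capacity, equality is attained by setting $p_\theta(\cdot\mid\mathbf z,i)=p_{\mathrm{data}}(\cdot\mid\mathbf z)$. The subtle point is that this choice must be realizable simultaneously for all $N_c$ by a single model. That holds because distinct $N_c$ give distinct inputs: the prefix length, the mask count, or the pair $(\mathbf z,\sigma)$ differs. For the time-agnostic MDM, the same masked sequence always has the same $N_c$, and the target index $i$ is read off from which output head is used. The remaining step is to show that this sum of conditional entropies equals $H(X)$.

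For ARM, $\sigma$-ARM and AoARM, this follows from the chain rule. For example, $\sum_{r}H(X^{\sigma_r}\mid X^{\sigma_1},\dots,X^{\sigma_{r-1}})=H(X)$ for each fixed $\sigma$, and for AoARM one averages over $\sigma\sim U(S_L)$; the conditioning on $\sigma$ is harmless because $\sigma$ is independent of $X$. For MDM, I would invoke Lemma~\ref{lemma:kim-2}. It rewrites $\mathcal L_{\mathrm{MDM}}$ as an expectation over $\sigma\sim U(S_L)$ of $\sum_r-\log p_\theta(x^{\sigma_r}\mid[\x^{<\sigma_r}:\Mask])$. The optimal per-term value is then $H(X^{\sigma_r}\mid X^{\sigma_{<r}})$, and the chain rule for each $\sigma$ again gives $H(X)$. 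Equivalently, a uniformly random mask with $L-N_c$ masks and a uniformly random masked index has the same law as $(\{\sigma_1,\dots,\sigma_{N_c}\},\sigma_{N_c+1})$ under uniform $\sigma$. This matching of distributions is what links the MDM row of the table to the AoARM-type chain rule.

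I expect the main obstacle to be the MDM case. The difficulty is to justify carefully that the pointwise minimizer is consistent across all permutations $\sigma$, because the same masked sequence $[\x^{<\sigma_r}:\Mask]$ arises from many different $\sigma$. It is consistent because the optimal prediction $p_{\mathrm{data}}(\cdot\mid\mathbf z)$ depends only on the revealed set and its values, not on the order in which they were revealed. Handling zero-mass contexts via the arbitrary-definition convention should be routine.
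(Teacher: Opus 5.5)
Your proposal is correct, but it proves the first claim by a different route from the paper. The paper proves the two claims independently. For ARM, $\sigma$-ARM and AoARM it writes the whole loss as $-\mathbb{E}\log p_\theta(\x)=H(X)+\mathrm{KL}(p_{\mathrm{data}}\,\|\,p_\theta)$, averaged over $\sigma$ for AoARM. For MDM it takes the lower bound from the variational inequality $H(X)\le\mathbb{E}[-\log p_\theta(\x)]\le\mathcal L_{\mathrm{MDM}}$, and uses Lemma~\ref{lemma:kim-2} only to evaluate the loss at the candidate optimizer $p_\theta(\cdot\mid\x_t)=p_{\mathrm{data}}(\cdot\mid\x_t)$. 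The per-$N_c$ entropy-plus-KL decomposition comes afterwards as a separate statement.

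You reverse the order. You derive the decomposition first and identify the framework-induced conditional with $p_{\mathrm{data}}(x^i\mid\mathbf z)$; this is valid because every row of Table~\ref{tab:shared_format_objectives} selects positions and targets independently of $\x$. The global claim then reduces to the identity $\sum_{N_c}H_{\mathcal F}(X^I\mid Z_{N_c},I)=H(X)$. You prove this by the chain rule, and for MDM by matching the law of a uniform mask plus a uniform masked index with $(\{\sigma_1,\dots,\sigma_{N_c}\},\sigma_{N_c+1})$ under uniform $\sigma$.

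What your route buys:
\begin{itemize}
\item One mechanism gives the lower bound for all four frameworks, and it never invokes the NELBO inequality.
\item It characterizes the minimizers explicitly.
\item It directly exhibits the fact the main text relies on: the per-$N_c$ entropy floors differ across frameworks, but their sum is always $H(X)$.
\end{itemize}

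What the paper's route buys is an MDM lower bound that holds for an arbitrary time-dependent network, without first rewriting the loss context level by context level. In your approach the time-variant MDM needs its own row of the table, which rests on Lemma~\ref{lemma:zheng-1}. There the argument goes through identically: $\alpha_{N_c}$ is drawn independently of $(X,\Mask)$, so it leaves the conditional entropy unchanged, and the optimal network can simply ignore it.
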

\endgroup
\begin{proof}
We first prove the first claim,
\[
\min_{\theta} \mathcal{L}_{\mathcal F} = H(X),
\qquad
\mathcal F \in \{\mathrm{ARM}, \sigma\text{-}\mathrm{ARM}, \mathrm{AoARM}, \mathrm{MDM}\},
\]

For ARM, 
\begin{align*}
L_{\mathrm{ARM}}
&=
-\mathbb{E}_{\mathbf{x}\sim p_{\mathrm{data}}}
\Big[
\sum_{i=1}^{L}
\log p_{\theta}^{\mathrm{ARM}}(x^i\mid \mathbf{x}^{<i})
\Big]=
-\mathbb{E}_{\mathbf{x}\sim p_{\mathrm{data}}}
\log p_{\theta}^{\mathrm{ARM}}(\mathbf{x}) \\
&=
H(X)
+
\mathrm{KL}\!\left(
p_{\mathrm{data}}(\mathbf{x})
\,\|\, 
p_{\theta}^{\mathrm{ARM}}(\mathbf{x})
\right)
\ge H(X),
\end{align*}
where equality holds when $p_\theta(x^i|\mathbf{x}^{<i})=p_{\mathrm{data}}(x^i|\mathbf{x}^{<i})$ satisfies almost everywhere. Note that we can prove for $\sigma$-ARM case as above. For AoARM, 
\begin{align*}
\mathcal{L}_{\mathrm{AoARM}}
&=
-\mathbb{E}_{\sigma\sim U(S_L)}
\mathbb{E}_{\mathbf{x}\sim p_{\mathrm{data}}}
[\log p_{\theta}^{\mathrm{AoARM}}(\mathbf{x}\mid \sigma)] \\
&=
\mathbb{E}_{\sigma\sim U(S_L)}
\Big[
H(X)
+
\mathrm{KL}\!\left(
p_{\mathrm{data}}(\mathbf{x})
\,\|\, 
p_{\theta}^{\mathrm{AoARM}}(\mathbf{x}\mid \sigma)
\right)
\Big]\ge H(X),
\end{align*}
Under infinite model capacity, the equality is attained by
\[
p_{\theta}^{\mathrm{AoARM}}(\mathbf{x}^{\sigma_i}\mid \mathbf{x}^{<\sigma_i},\sigma)
=
p_{\mathrm{data}}(\mathbf{x}^{\sigma_i}\mid \mathbf{x}^{<\sigma_i})
\qquad
\text{for all } i\in[L] \text{ and }\sigma \in S_L.
\]

For time-variant MDM, rather than decomposing NELBO into above decomposition, we directly show that
$\min_\theta \mathcal{L}_{\mathrm{MDM}}=H(X)$ is achievable when
$p_\theta(x^i=a\mid \x_t,t)=p_{\mathrm{data}}(x^i=a\mid \x_t)$ for all
$i\in[L]$, for all
$\x_t\in\{\x[\Mask]:\forall \x\in\mathcal{X},\forall\Mask\in\{0,1\}^L\}$,
and for all $a\in[V]$.

Recall that NELBO of MDM is as follows:
\begin{align}
    \mathbb{E}_{p_{\mathrm{data}}}[-\log p_\theta(\mathbf{x})]\le \mathcal{L}_{\mathrm{MDM}}=\int_{0}^{1}
\tfrac{-\alpha_t'}{1-\alpha_t}\mathbb{E}_{\x\sim p_{\mathrm{data}},\x_t\sim q_{t|0}}
\Bigl[-
\sum_{i:x_t^i=\mask}
\log p_\theta(x^i|\x_t,t)\Big]dt.
\end{align}
Here, the left-hand side is lower bounded by $H(X)$. On the other hand, we show that if the infinite capacity model $\theta^*$ satisfies $p_{\theta^*}(x^i=a\mid \x_t,t)=p_{\mathrm{data}}(x^i=a\mid \x_t)$, the right-hand side gives $H(X)$. If $p_{\theta^*}(x^i=a|\x_t,t)=p_{\mathrm{data}}(x^i=a|\mathbf{x}_t)$ holds, since $p_{\theta^*}(x^i=a|\x_t,t)$ is parametrized by $\mu_{\theta^*}(\x_t,t)$, the optimal $\theta^*$ is no more time-variant rather time-agnostic. 
Substituting $\theta$ into $\theta^*$ such that $p_\theta(x^i=a|\x_t,t)=p_{\mathrm{data}}(x^i=a|\mathbf{x}_t)$, and using Lemma~\ref{lemma:kim-2} and \eqref{eq:p_data_cond}, the right-hand side gives
\begin{align}
    \mathbb{E}_{\x\sim p_{\mathrm{data}},\sigma\sim U(S_L)}
\left[\sum_{r=1}^L-\log p_{\mathrm{data}}(x^{\sigma_r}\mid\x^{<\sigma_r})\right],
\end{align}
which is equivalent to $H(X)$. 
Therefore, with infinite model capacity, $\min_\theta \mathcal{L}_{\mathrm{MDM}}=H(X)$, and this minimum is achievable. Note that time-agnostic MDM can be proven with the same approach as above. 

We now prove the second claim. Recall that
\begin{align}
\mathcal{L}_{\mathcal{F}}^{N_c}=
\mathbb{E}_{\mathbf{x}\sim p_{\mathrm{data}},\mathbf{z}_{N_c}\sim p_{\mathcal{F}}(\mathbf{z}\mid \mathbf{x},N_c),i\sim p_{\mathcal{F}}(i\mid \mathbf{x},\mathbf{z}_{N_c})}
[-\log p_{\theta}(x^i\mid \mathbf{z}_{N_c})].
\end{align}
For a fixed observation level $n$, define the framework-induced joint distribution
\[
p_{\mathrm{data},\mathcal{F}}(\mathbf{x},\mathbf{z}_{N_c},i\mid N_c)
:=
p_{\mathrm{data}}(\mathbf{x})\,
p_{\mathcal{F}}(\mathbf{z}_{N_c}\mid \mathbf{x},N_c)\,
p_{\mathcal{F}}(i\mid \mathbf{x},\mathbf{z}_{N_c}).
\]
Also, with a slight abuse of notation, write
\[
p_\theta(a\mid \mathbf{z}_{N_c},i)
:=
p_\theta(x^i=a\mid \mathbf{z}_{N_c}),
\]
namely, the model prediction for the $i$-th token given $\mathbf{z}_{N_c}$.
Then,
\begin{align*}
\mathcal{L}_{\mathcal{F}}^{N_c}
&=
\mathbb{E}_{\mathbf{x},\mathbf{z}_{N_c},i}
\bigl[-\log p_\theta(x^i\mid \mathbf{z}_{N_c})\bigr] \\
&=
\mathbb{E}_{(\mathbf{z}_{N_c},i)\sim p_{\mathrm{data},\mathcal{F}}(\cdot,\cdot\mid N_c)}
\left[
\sum_{a}
p_{\mathrm{data},\mathcal{F}}(a\mid \mathbf{z}_{N_c},i)
\bigl(-\log p_\theta(a\mid \mathbf{z}_{N_c},i)\bigr)
\right] \\
&=
\mathbb{E}_{(\mathbf{z}_{N_c},i)\sim p_{\mathrm{data},\mathcal{F}}(\cdot,\cdot\mid N_c)}
\Big[
H\!\bigl(p_{\mathrm{data},\mathcal{F}}(\cdot\mid \mathbf{z}_{N_c},i)\bigr)
+
\mathrm{KL}\!\left(
p_{\mathrm{data},\mathcal{F}}(\cdot\mid \mathbf{z}_{N_c},i)
\,\|\, 
p_\theta(\cdot\mid \mathbf{z}_{N_c},i)
\right)
\Big] \\
&=
H_{\mathcal{F}}(X^I\mid Z_{N_c},I)
+
\mathbb{E}_{(\mathbf{z}_{N_c},i)\sim p_{\mathrm{data},\mathcal{F}}(\cdot,\cdot\mid N_c)}
\mathrm{KL}\!\left(
p_{\mathrm{data},\mathcal{F}}(\cdot\mid \mathbf{z}_{N_c},i)
\,\|\, 
p_\theta(\cdot\mid \mathbf{z}_{N_c},i)
\right), 
\end{align*}
which concludes the proof. Therefore, for every framework $\mathcal{F}$ and every observation level $n$,
\[
\mathcal{L}_{\mathcal{F}}^{N_c} \ge H_{\mathcal{F}}(X^I\mid Z_{N_c},I).
\]
Moreover, equality holds if and only if
\[
p_\theta(x^i\mid \mathbf{z}_{N_c})
=
p_{\mathrm{data},\mathcal{F}}(x^i\mid \mathbf{z}_{N_c},i)
\qquad
\text{for }p_{\mathrm{data},\mathcal{F}}(\mathbf{z}_{N_c},i\mid N_c)\text{-almost every }(\mathbf{z}_{N_c},i).
\]

\end{proof}
\subsection{Proof of Proposition~\ref{prop:equal_optimizer}}\label{sec:proof-cor41}
We prove Proposition~\ref{prop:equal_optimizer} in this section. We first provide useful lemma to prove the statement:
\begin{lemma}[Restatement of Zheng~\etal~\citep{zheng2024maskeddiffusionmodelssecretly}]\label{lemma:equal-optimizer} Consider two different loss, $\mathcal{L}_{\mathrm{MDM}}$ and weighted loss $\hat{\mathcal{L}}$ defined as follows:
\begin{gather}
    \mathcal L_{\mathrm{MDM}}(\theta)
    =
    \sum_{N_c=0}^{L-1}
    \mathcal L_{\mathrm{MDM}}^{N_c}(\theta)
    \qquad\hat{\mathcal L}(\theta)
    =
    \sum_{N_c=0}^{L-1}
    \omega(N_c)
    \mathcal L_{\mathrm{MDM}}^{N_c}(\theta).
\end{gather}
With infinite capacity and time-agnostic $\theta$, if $\omega(N_c)>0$ holds for all $N_c\in\{0,\dots,L-1\}$, the minimizer of $\mathcal{L}_{\mathrm{MDM}}$ and $\hat{\mathcal{L}}$ are equal.
\end{lemma}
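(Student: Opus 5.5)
The plan is to reduce both objectives to sums of nonnegative KL terms using the per-context decomposition in Proposition~\ref{prop:entropy}, applied to the time-agnostic MDM row of Table~\ref{tab:shared_format_objectives}. For each $N_c$,
\begin{align*}
\mathcal L_{\mathrm{MDM}}^{N_c}(\theta)
=
H_{\mathrm{MDM}}(X^I\mid Z_{N_c},I)
+
\mathbb E_{Z_{N_c},I}\,
\mathrm{KL}\!\left(p_{\mathrm{data},\mathrm{MDM}}(\cdot\mid Z_{N_c},I)\,\|\,p_\theta(\cdot\mid Z_{N_c},I)\right).
\end{align*}
Substituting this into both $\mathcal L_{\mathrm{MDM}}$ and $\hat{\mathcal L}$ writes each as a $\theta$-independent constant plus $\sum_{N_c}c(N_c)\,D_{N_c}(\theta)$. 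Here $D_{N_c}(\theta)\ge 0$ is the expected KL term, and $c(N_c)=1$ or $c(N_c)=\omega(N_c)>0$ respectively. Consequently, $\theta$ minimizes either objective if and only if it drives every $D_{N_c}$ to zero. The remaining work is to show that this is simultaneously achievable.

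First, I will identify the data conditional $p_{\mathrm{data},\mathrm{MDM}}(a\mid\mathbf z,i)$. Since $\Mask_{L-N_c}\sim \mathrm U(\mathcal M_{L-N_c})$ is drawn independently of $\x$, and $i$ is uniform over the masked positions of $\mathbf z$, Bayes' rule gives $p_{\mathrm{data},\mathrm{MDM}}(a\mid\mathbf z,i)=p_{\mathrm{data}}(x^i=a\mid\mathbf z)$ as defined in \eqref{eq:p_data_cond}. In particular, this conditional depends only on the masked sequence $\mathbf z$ and the position $i$, and not separately on $N_c$.

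Second, I will note a disjointness fact. The number of context tokens $N_c$ is a deterministic function of $\mathbf z$, namely $L$ minus the number of masks. The supports of the context distributions for different $N_c$ are therefore disjoint subsets of the space of masked sequences. A time-agnostic model $\mu_\theta^i(\mathbf z)$ is an arbitrary function of $(\mathbf z,i)$ under infinite capacity, so setting $\mu_{\theta^*}^i(\mathbf z)=p_{\mathrm{data}}(x^i=\cdot\mid\mathbf z)$ makes every $D_{N_c}(\theta^*)=0$ at once, with no conflicting constraints across $N_c$.

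Third, I will conclude that both objectives attain their infimum exactly on the same set. This is the set of $\theta$ whose predictions agree with $p_{\mathrm{data}}(x^i\mid\mathbf z)$ for almost every pair $(\mathbf z,i)$ in the support of $\bigcup_{N_c}p_{\mathrm{data},\mathrm{MDM}}(\cdot,\cdot\mid N_c)$. Positivity of every $\omega(N_c)$ is what rules out $\hat{\mathcal L}$ ignoring some $N_c$ level, since a zero weight would leave that level's predictions unconstrained.

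I expect the main obstacle to be the bookkeeping in the first step rather than anything deep. One must check that the uniform mask sampling and uniform target choice do not tilt the conditional away from $p_{\mathrm{data}}(x^i\mid\mathbf z)$. One must also state precisely that minimizers coincide up to values on contexts of zero data mass; there the conditional is arbitrary, and so is the model.
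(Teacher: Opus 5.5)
Your proposal is correct and follows essentially the same route as the paper. Both arguments split each context-level cross-entropy into a $\theta$-independent entropy plus a nonnegative KL term, use the disjointness of masked-context supports across $N_c$ and infinite capacity to make every KL term vanish at once, and conclude that strictly positive weights $\omega(N_c)$ leave the minimizer set unchanged. Your handling of zero-mass contexts is slightly more careful than the paper's: its definition of $\Theta^\star$ demands pointwise matching for every $\x\in[V]^L$, whereas you only require agreement on the support of the data.
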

\begin{proof}
Note that above lemma can be inferred from the proof of Proposition 3.2 in Zheng~\etal~\citep{zheng2024maskeddiffusionmodelssecretly}. We here show the above statement more directly with our notations. Since the model is time-agnostic, we write
\(p_\theta(x^i\mid \x_t)\) instead of
\(p_\theta(x^i\mid \x_t,t)\). For \(n\in\{0,\ldots,L\}\), recall that
$\mathcal M_n=
\{
\mathbf M\in\{0,1\}^L:
\sum_{i=1}^L \mathbf M^i=n
\}$
is the set of mask indicators with exactly \(n\) masked positions, where
\(\mathbf M^i=1\) means that position \(i\) is masked. Also recall that
\(\x[\mathbf M_n]\) is the sequence obtained by replacing the masked
positions of \(\x\) with \(\mask\). Thus, when the number of context
tokens is \(N_c\), the number of masked tokens is \(L-N_c\), and the
corresponding masked sequence is \(\x[\mathbf M_{L-N_c}]\). 

For different values of \(N_c\), the corresponding masked contexts have
disjoint input supports, since \(\x[\mathbf M_{L-N_c}]\) contains exactly
\(L-N_c\) mask tokens. Hence, under infinite capacity and time-agnostic
parameterization, the model can realize the pointwise optimum for each
context level independently. Formally, define the common minimizer set
\[
\Theta^\star
:=
\left\{
\theta:
p_\theta(x^i=a\mid \x[\Mask_{L-N_c}])
=
p_{\mathrm{data}}(x^i=a\mid \x[\Mask_{L-N_c}])
\right.
\]
\[
\left.
\qquad\qquad
\forall N_c=0,\ldots,L-1,\ 
\forall \mathbf M_{L-N_c}\in\mathcal M_{L-N_c},\
\forall i:M_{L-N_c}^i=1,\ 
\forall \x\in [V]^L,
\forall a\in[V]
\right\}.
\]
Under infinite
capacity, this set is nonempty, since the model can realize the
pointwise conditional distribution on every masked-context input.

To see that this is exactly the minimizer set, fix \(N_c\),
\(\mathbf M_{L-N_c}\in\mathcal M_{L-N_c}\), a masked position
\(i\) with \(\mathbf M_{L-N_c}^i=1\), and a masked context \(\x_t\).
The corresponding token-level cross-entropy is
\[
    \sum_{a\in[V]}
    p_{\mathrm{data}}(x^i=a\mid \x[\Mask_{L-N_c}])
    \bigl[-\log p_\theta(x^i=a\mid \x[\Mask_{L-N_c}])\bigr].
\]
This can be decomposed as
\[
    H\!\left(
    p_{\mathrm{data}}(\cdot\mid \x[\Mask_{L-N_c}])
    \right)
    +
    \mathrm{KL}\!\left(
    p_{\mathrm{data}}(\cdot\mid \x[\Mask_{L-N_c}])
    \,\|\,p_\theta(\cdot\mid \x[\Mask_{L-N_c}])
    \right).
\]
The entropy term is independent of \(\theta\), and the KL term is
minimized if and only if
\[
    p_\theta(x^i=a\mid \\x[\Mask_{L-N_c}])
    =
    p_{\mathrm{data}}(x^i=a\mid \x[\Mask_{L-N_c}])
\]
for every token \(a\in[V]\). Hence each context-level loss
\(\mathcal L_{\mathrm{MDM}}^{N_c}\) is minimized exactly by matching
the true conditional distribution on its corresponding masked contexts.
Since \(\hat{\mathcal L}\) and \(\mathcal L_{\mathrm{MDM}}\) are
both positive weighted sums of the same context-level cross-entropy
terms,
\[
    \hat{\mathcal L}(\theta)
    =
    \sum_{N_c=0}^{L-1}
    \omega(N_c)
    \mathcal L_{\mathrm{MDM}}^{N_c}(\theta),
    \qquad
    \omega_\pi(N_c)>0,
\]
and
\[
    \mathcal L_{\mathrm{MDM}}(\theta)
    =
    \sum_{N_c=0}^{L-1}
    \mathcal L_{\mathrm{MDM}}^{N_c}(\theta),
\]
their minimizers are exactly the same:
\[
    \arg\min_\theta \hat{\mathcal L}(\theta)
    =
    \Theta^\star
    =
    \arg\min_\theta \mathcal L_{\mathrm{MDM}}(\theta).
\]
\end{proof}
We now prove the Proposition~\ref{prop:equal_optimizer} with Lemma~\ref{lemma:equal-optimizer}.
What we need to prove is that $w(N_c)>0,\forall N_c\in\{0,\dots,L-1\}$ holds for any probability distribution $\rho$ satisfying $\mathbb{P}_{t\sim\rho}\left(t\in (0,1)\right)>0$.
\begingroup
\renewcommand{\thetheorem}{\ref{prop:equal_optimizer}}
\renewcommand{\theHtheorem}{app.restate.\ref{prop:equal_optimizer}}
\begin{proposition} With infinite-capacity and time-agnostic model $\theta$, for any probability distribution $\rho$ over $[0,1]$ satisfying $\mathbb{P}_{t\sim\rho}\left(t\in (0,1)\right)>0$, the minimizers of $\mathcal{L}_{\mathrm{MDM}}$ and $\hat{\mathcal L}_{\rho}$ are equal.
\end{proposition}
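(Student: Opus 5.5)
The plan is to rewrite $\hat{\mathcal L}_\rho$ as a weighted sum of the context-level losses $\mathcal L_{\mathrm{MDM}}^{N_c}$ and then invoke Lemma~\ref{lemma:equal-optimizer}. The lemma does all the optimization work. What remains is to compute the weights $\omega_\rho(N_c)$ and show each one is strictly positive.

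\textbf{Step 1 (decomposition).} Fix $t$. Under $q_{t|0}$ with $\alpha_t=1-t$, each position is masked independently with probability $t$. Hence the number of masks $n=L-N_c$ is $\mathrm{Binomial}(L,t)$, and conditional on $n$ the mask pattern is uniform on $\mathcal M_n$. For a time-agnostic model, the inner sum over masked positions equals $n$ times the expectation over $i\sim U(i:M^i=1)$. This gives
\begin{align*}
\hat{\mathcal L}_\rho=\sum_{N_c=0}^{L-1}\omega_\rho(N_c)\,\mathcal L^{N_c}_{\mathrm{MDM}},\qquad
\omega_\rho(N_c)=\mathbb E_{t\sim\rho}\Big[\tbinom{L}{N_c}\,t^{L-N_c}(1-t)^{N_c}\,(L-N_c)\Big].
\end{align*}
Two checks on this formula. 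The $N_c=L$ term carries no masked positions, so it contributes nothing and drops out. Exchanging the sum and the expectation over $t$ is justified because every term is nonnegative (Tonelli).

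\textbf{Step 2 (positivity).} For $N_c\in\{0,\dots,L-1\}$, the integrand $g_{N_c}(t)=\binom{L}{N_c}(L-N_c)t^{L-N_c}(1-t)^{N_c}$ is strictly positive on $(0,1)$. It vanishes only at $t=0$, and at $t=1$ only when $N_c\ge1$. Therefore
\[
\omega_\rho(N_c)\ge\int_{(0,1)}g_{N_c}\,d\rho .
\]
A strictly positive measurable function integrated against a measure that gives positive mass to $(0,1)$ has a strictly positive integral. To see this, note $(0,1)=\bigcup_k\{g_{N_c}>1/k\}$, so some set $\{g_{N_c}>1/k\}$ has positive $\rho$-mass, and the integral is at least $1/k$ times that mass. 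This is the one place the hypothesis $\mathbb P_{t\sim\rho}(t\in(0,1))>0$ is used. It is also necessary: for example, $\rho=\delta_1$ puts zero weight on every $N_c\ge1$.

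\textbf{Step 3 (conclusion).} All weights $\omega_\rho(N_c)$ for $N_c=0,\dots,L-1$ are strictly positive. Lemma~\ref{lemma:equal-optimizer} therefore applies and gives $\argmin_\theta\hat{\mathcal L}_\rho=\Theta^\star=\argmin_\theta\mathcal L_{\mathrm{MDM}}$. The identification $\mathcal L_{\mathrm{MDM}}=\sum_{N_c}\mathcal L^{N_c}_{\mathrm{MDM}}$ is Lemma~\ref{lemma:kim-1}.

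The only real obstacle is the bookkeeping in Step 1. One must check that conditioning on the mask count produces exactly the uniform-pattern, uniform-target form in \eqref{eq:context_level_mdm_loss}, with the factor $L-N_c$ coming from the sum over masked positions. One must also check that the absence of the $-\alpha_t'/(1-\alpha_t)$ weight changes only the values of the weights, not their positivity.
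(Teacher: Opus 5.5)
Your proposal is correct and follows essentially the same route as the paper. You rewrite $\hat{\mathcal L}_\rho$ as $\sum_{N_c=0}^{L-1}\omega_\rho(N_c)\,\mathcal L^{N_c}_{\mathrm{MDM}}$ with $\omega_\rho(N_c)=(L-N_c)\binom{L}{N_c}\int_{[0,1]}\alpha_t^{N_c}(1-\alpha_t)^{L-N_c}\,d\rho(t)$, note that the integrand is strictly positive on $(0,1)$, and invoke Lemma~\ref{lemma:equal-optimizer}. The remaining differences are cosmetic: the paper works with a general scheduler satisfying $\alpha_0=1$, $\alpha_1=0$, $\alpha_t'<0$ rather than $\alpha_t=1-t$, and you add the Tonelli justification, the explicit positive-integral argument, and the remark that the hypothesis on $\rho$ is necessary.
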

\endgroup
\begin{proof}
Let \(\rho\) be a Borel probability measure on \([0,1]\). All integrals with respect to \(d\rho(t)\) are understood as
measure-theoretic integrals, so atomic choices such as
\(\rho=\delta_\mu\) are included. 
Under the absorbing forward process, for any
\(\mathbf M_{L-N_c}\in\mathcal M_{L-N_c}\),
\[
    \Pr(\x_t=\x[\mathbf M_{L-N_c}]\mid \x,t)
    =
    \alpha_t^{N_c}(1-\alpha_t)^{L-N_c}.
\]
Therefore, after marginalizing over \(t\sim\rho\), the probability of
sampling exactly \(N_c\) context tokens is
\[
    \Pr_\rho(N_c)
    =
    {L\choose N_c}
    \mathbb E_{t\sim\rho}
    \left[
    \alpha_t^{N_c}(1-\alpha_t)^{L-N_c}
    \right]
    =
    {L\choose N_c}
    \int_{[0,1]}
    \alpha_t^{N_c}(1-\alpha_t)^{L-N_c}\,d\rho(t).
\]
This formula remains valid when \(\rho\) has atoms; for example, if
\(\rho=\delta_\mu\), the integral simply evaluates the integrand at
\(t=\mu\). Conditioned on \(N_c\), all mask indicators in
\(\mathcal M_{L-N_c}\) are sampled uniformly, because the above
probability depends on \(\mathbf M_{L-N_c}\) only through its cardinality.
Hence
\begin{align*}
    \hat{\mathcal L}_{\rho}(\theta)
    &=
    \mathbb E_{\x\sim p_{\mathrm{data}},\,t\sim\rho,\,
    \x_t\sim q_{t|0}(\cdot\mid\x)}
    \left[
    -\sum_{i:x_t^i=\mask}
    \log p_\theta(x^i\mid \x_t)
    \right] \\
    &=
    \sum_{N_c=0}^{L-1}
    (L-N_c)\Pr_\rho(N_c)\,
    \mathcal L_{\mathrm{MDM}}^{N_c}(\theta)=
    \sum_{N_c=0}^{L-1}
    \omega_\rho(N_c)\,
    \mathcal L_{\mathrm{MDM}}^{N_c}(\theta),
\end{align*}
where 
$\omega_\rho(N_c)
:=
(L-N_c){L\choose N_c}
\int_{[0,1]}
\alpha_t^{N_c}(1-\alpha_t)^{L-N_c}\,d\rho(t)$.
By definition of $\alpha_t$ in conventional MDM~\citep{sahoo2024simple,shi2025simplified,hong2026lomdm} where $\alpha_0=1$, $\alpha_1=0$, and $\alpha_t'<0$, the integrand
\(\alpha_t^{N_c}(1-\alpha_t)^{L-N_c}\) is strictly positive for every
\(t\in(0,1)\) and every \(N_c\in\{0,\ldots,L-1\}\). Since
\(\mathbb{P}_{t\sim\rho}\left(t\in (0,1)\right)>0\), we have $\omega_\rho(N_c)>0$ for all $N_c\in\{0,\ldots,L-1\}$. Then, together with Lemma~\ref{lemma:equal-optimizer}, this concludes the proof.
\end{proof}

\paragraph{Remark.}
In measure-theoretic notation, the probability mass assigned by a probability distribution \(\rho\) to an interval \(I\subseteq[0,1]\) is conventionally written as \(\rho(I)\). To make this notation more accessible to readers less familiar with this convention, we instead write the same quantity as \(\mathbb P_{t\sim\rho}(t\in I)\) throughout the main text and proofs.

\subsection{Proof of Proposition~\ref{prop:bell_shape}}
\label{sec:proof-prop43}
We prove Proposition~\ref{prop:bell_shape} in this section. Recall that $\mathcal E_k$ denotes the set of
$k$-inefficient mask indicators, i.e., those containing either $k$ consecutive masks or $k$
consecutive non-masks. Accordingly, $\mathcal E_k^c$ denotes the set of $k$-efficient mask
indicators. Recall also that
\[
\mathcal M_n
=
\Bigl\{\mathbf M\in\mathcal \{0,1\}^L : \sum_{i=1}^L M^i = n\Bigr\},
\qquad
\mathbf M_n \sim U(\mathcal M_n),
\]
so that $\mathbf M_n$ is a uniformly sampled mask indicator with exactly $n$ masked positions.
Equivalently, it has exactly $L-n$ observed positions.

Define
\[
q_{L,k}(n)
:=
\mathbb P\bigl(\mathbf M_n\in\mathcal E_k^c\bigr|\mathbf M_n \sim U(\mathcal M_n)).
\]
Let
\[
N_{k}(a,b)
:=
\#\Bigl\{\mathbf M\in\{0,1\}^{a+b} : \sum_{i=1}^{a+b} M^i = a,\ \mathbf M\in\mathcal E_k^c\Bigr\},
\]
where $a$ counts masked positions and $b$ counts non-masked positions. Since $k$ is fixed
throughout this proof, we drop $k$ in the notation. Then
\[
q_{L,k}(n)=\frac{N(n,L-n)}{\binom{L}{n}}.
\]

For a time sampling law $\pi$ on $[0,1]$, define the induced mask-count law
\[
w_\pi(n)
:=
\mathbb P\left(
\sum_{i=1}^L M^i=n
\ \middle|\
 t\sim \pi,\ M^i\stackrel{\mathrm{i.i.d.}}{\sim}\mathrm{Bernoulli}(t)
\right).
\]
Here $\pi$ is understood as a Borel probability measure, and it need not admit a density.
Conditioning on $t$ gives
\[
w_\pi(n)
=
\binom{L}{n}\int_{[0,1]} t^n(1-t)^{L-n}\,d\pi(t).
\]
Therefore
\begin{equation}
\label{eq:prop43-count-decomp}
\mathbb P\Big(\mathbf M\in\mathcal E_k^c\mid t\sim\pi, M^i\stackrel{\mathrm{i.i.d.}}{\sim} \mathrm{Bernoulli}(t)\Big)
=
\sum_{n=0}^L w_\pi(n)q_{L,k}(n).
\end{equation}
In particular, when $\pi=U([0,1])$,
\[
w_U(n)
=
\binom{L}{n}\int_0^1 t^n(1-t)^{L-n}\,dt
=
\frac{1}{L+1},
\]
so uniform time sampling induces the uniform distribution over the mask count
$n\in\{0,\dots,L\}$.

The key step is to show that $q_{L,k}(n)$ is maximized at the middle mask count.

\begin{lemma}[Central monotonicity of the efficient probability]
\label{lem:q-central-monotone}
For every $L\in\mathbb N$ and $k\ge 2$,
\[
q_{L,k}(n)=q_{L,k}(L-n),
\]
and
\[
q_{L,k}(n)\le q_{L,k}(n+1)
\qquad
\text{for all }0\le n<\frac L2.
\]
Consequently, $q_{L,k}(n)$ is maximized at $n=L/2$ when $L$ is even, and at the two central
points $(L-1)/2$ and $(L+1)/2$ when $L$ is odd.
\end{lemma}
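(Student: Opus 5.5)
We first prove the symmetry $q_{L,k}(n)=q_{L,k}(L-n)$ with the complement map $\mathbf M\mapsto \mathbf 1-\mathbf M$. It is a bijection from $\mathcal M_n$ onto $\mathcal M_{L-n}$. It swaps runs of consecutive masks with runs of consecutive non-masks, and $\mathcal E_k$ is defined symmetrically in the two kinds of runs, so the map sends $\mathcal E_k^c$ onto itself. Hence $N(n,L-n)=N(L-n,n)$. Since $\binom{L}{n}=\binom{L}{L-n}$, the symmetry follows. The statement about the location of the maximiser follows at once from symmetry together with monotonicity on $0\le n<L/2$. So the real content is the monotonicity step.

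For monotonicity, write $a=n$ and $b=L-n$, so that $a+1\le b$. Using $\binom{L}{n+1}=\binom{L}{n}\frac{b}{a+1}$, the inequality $q_{L,k}(n)\le q_{L,k}(n+1)$ is equivalent to
\[
(a+1)\,N(a,b)\;\le\; b\,N(a+1,b-1).
\]
I would prove this by double counting a \emph{flip} relation. Consider pairs $(\mathbf M,j)$ where $\mathbf M\in\mathcal E_k^c$ has $a$ masks and $j$ is a non-masked position of $\mathbf M$; there are $b\,N(a,b)$ such pairs. Flipping position $j$ to a mask gives a string $\mathbf M'$ with $a+1$ masks. Dually, consider pairs $(\mathbf M',j)$ with $\mathbf M'\in\mathcal E_k^c$ having $a+1$ masks and $j$ a masked position; there are $(a+1)N(a+1,b-1)$ of these. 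The plan is to compare, for each good string, how many of its flips stay good, and to use $a<b$ to show that flips from the minority symbol to the majority symbol are, on average, less destructive.

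The flip argument does not close by itself: the pair counts carry the factors $b$ and $a+1$, which run the wrong way relative to the target. I therefore expect to switch to an explicit run decomposition as the main tool. A string with no run of length $\ge k$ is determined by the following data.
\begin{itemize}
\item The numbers $r$ and $s$ of mask-runs and non-mask-runs, with $|r-s|\le 1$.
\item A composition of $a$ into $r$ parts in $[1,k-1]$ and a composition of $b$ into $s$ parts in $[1,k-1]$.
\item The starting symbol, which is forced when $r\neq s$ and gives a factor $2$ when $r=s$.
\end{itemize}
Writing $c_k(m,r)$ for the number of compositions of $m$ into $r$ parts in $[1,k-1]$, this gives
\[
N(a,b)=\sum_{|r-s|\le 1}\epsilon_{r,s}\,c_k(a,r)\,c_k(b,s),\qquad \epsilon_{r,s}\in\{1,2\}.
\]
The sequence $c_k(m,r)$ is the coefficient of $x^m$ in $(x+\dots+x^{k-1})^r$. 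I would use the log-concavity in $m$ of these coefficients, which holds because a product of log-concave sequences is log-concave.

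The main obstacle is turning this into the weighted inequality $(a+1)N(a,b)\le b\,N(a+1,b-1)$. The target is a ``moving one unit from the larger part $b$ to the smaller part $a$ increases the normalised count'' statement. I would aim to show that the normalised quantity $c_k(m,r)/\binom{m-1}{r-1}$ behaves monotonically in the right direction, and then combine this with the exact identity $\sum_{|r-s|\le1}\epsilon_{r,s}\binom{a-1}{r-1}\binom{b-1}{s-1}=\binom{a+b}{a}$, which is the $k=\infty$ case. With that identity, the comparison reduces to a Chebyshev/rearrangement-type inequality between two distributions over $(r,s)$. If this route stalls, my fallback is to verify the inequality numerically for small $L$ and look for a cleaner injection inspired by the counts.
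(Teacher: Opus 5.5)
Your symmetry argument and the deduction of where the maximum sits are fine, but the monotonicity step has two problems. First, the reduction is miscomputed. Since $\binom{L}{n+1}=\binom{L}{n}\frac{b}{a+1}$, the inequality $q_{L,k}(n)\le q_{L,k}(n+1)$ is equivalent to $b\,N(a,b)\le (a+1)\,N(a+1,b-1)$, not to $(a+1)N(a,b)\le b\,N(a+1,b-1)$. Your version is equivalent to $q_{L,k}(n)\le \bigl(\tfrac{b}{a+1}\bigr)^2 q_{L,k}(n+1)$. That is strictly weaker whenever $b>a+1$, so even a complete proof of it would not give the lemma.

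The slip also made you drop the flip argument for the wrong reason. Your two pair counts, $bN(a,b)$ and $(a+1)N(a+1,b-1)$, are exactly the two sides of the correct inequality. Flipping is a bijection between all such pairs, so that route actually requires the following: summed over good strings, masking a non-masked position must create a run of $k$ masks no more often than unmasking a masked position of a good $(a+1)$-mask string creates a run of $k$ non-masks. That comparison is the whole difficulty, and you do not address it.

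Second, monotonicity is never actually proved. Your run decomposition is correct; it is the paper's generating function expanded by run counts. Log-concavity of $c_k(m,r)$ in $m$ is also the right ingredient. But the step ``$c_k(m,r)/\binom{m-1}{r-1}$ is monotone, then apply a Chebyshev/rearrangement inequality'' is only an aspiration. Passing from $(a,b)$ to $(a+1,b-1)$ moves the mixing weights over $(r,s)$ and both normalised factors, in competing directions, and nothing in your proposal controls the net effect.

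The missing idea, as in the paper, is a derivative identity. Let $F(x,y)=\sum_{a,b}N(a,b)x^ay^b=\frac{P_k(x)+P_k(y)+2P_k(x)P_k(y)}{1-P_k(x)P_k(y)}$. Then $(a+1)N(a+1,b-1)-bN(a,b)$ is the coefficient of $x^ay^{b-1}$ in $\partial_xF-\partial_yF$. Because $\partial_u\frac{u+v+2uv}{1-uv}=\frac{(1+v)^2}{(1-uv)^2}$ (with $u=P_k(x)$, $v=P_k(y)$), this difference equals
\[
\frac{A_k(y)^2P_k'(x)-A_k(x)^2P_k'(y)}{\bigl(1-P_k(x)P_k(y)\bigr)^2},\qquad A_k=1+P_k .
\]
The numerator's coefficient array $c_{ij}$ is antisymmetric with $c_{ij}\ge 0$ for $i<j$, because the ratio $[z^i]P_k'/[z^i]A_k^2$ equals $1$ for $i\le k-2$ and $0$ afterwards. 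The coefficients $K_{u,v}=\sum_r(r+1)[z^u]P_k^r\,[z^v]P_k^r$ of $(1-P_k(x)P_k(y))^{-2}$ are symmetric and do not decrease toward the diagonal along each antidiagonal; this is exactly where your log-concavity enters. Pairing the $(i,j)$ and $(j,i)$ terms then shows that every coefficient of $x^py^q$ with $p\le q$ is nonnegative. Without this, or an equivalent explicit comparison, your proposal does not establish the lemma.
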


\begin{proof}
The symmetry
\[
q_{L,k}(n)=q_{L,k}(L-n)
\]
follows by flipping all mask bits, $0\leftrightarrow 1$. This operation maps $\mathcal M_n$
bijectively to $\mathcal M_{L-n}$ and preserves the event $\mathcal E_k^c$, since it only swaps
masks and non-masks.

It remains to prove monotonicity up to the center.
Fix $n<m$ and set $L=n+m$. We want to prove
\[
q_{L,k}(n)\le q_{L,k}(n+1),
\]
or equivalently
\[
\frac{N(n,m)}{\binom{L}{n}}
\le
\frac{N(n+1,m-1)}{\binom{L}{n+1}}.
\]
Since
\[
\frac{\binom{L}{n+1}}{\binom{L}{n}}
=
\frac{m}{n+1},
\]
this is equivalent to
\begin{equation}
\label{eq:q-mono-target}
(n+1)N(n+1,m-1)-mN(n,m)\ge 0.
\end{equation}

Unfortunately, $N(n,m)$ expands into a highly cumbersome form, making it difficult to prove the above inequality by a direct expansion. To handle this combinatorial counting problem, we use the concept of a \textit{generating function}.
Generating functions are a classical tool in enumerative combinatorics, used to encode combinatorial counts as coefficients of formal polynomials. For example, if a sequence $\{a_r\}_{r\ge 0}$ is encoded as
\[
G(x)=\sum_{r\ge 0} a_r x^r,
\]
then the coefficient-extraction notation $[x^r]G(x)$ denotes the coefficient of $x^r$ in $G(x)$, namely $a_r$. In this sense, questions about the combinatorial numbers $a_r$ can be translated into algebraic questions about the polynomial or power series $G(x)$.
In the present proof, we will consider $F(x,y)$ that assigns a formal weight $x$ to each masked position and $y$ to each non-masked position; then the coefficient of $x^a y^b$ records how many $k$-efficient mask indicators have $a$ masked and $b$ non-masked positions. Equivalently, $[x^ay^b]F(x,y)=N(a,b)$ where $F(x, y) = \sum_{a,b\ge0}N(a,b)x^ay^b$, so the \textit{operation} $[x^ay^b]$ pulls out the coefficient of $x^a y^b$. 

We divide three cases of $M\in\mathcal E_k^c$ to obtain $F(x,y)$: 1) the number of mask-runs and non-mask-runs are equal, \eg, $\mathbf{M}=1\dots0$ or $\mathbf{M}=0\dots1$, 2) there is one more mask-run than non-mask-run $\mathbf{M}=1\dots1$, and 3) there is one less mask-run than non-mask-run $\mathbf{M}=0\dots0$.
To formalize $F(x,y)$, first let
\[
P_k(z):=z+z^2+\cdots+z^{k-1}.
\]
Note the length of each consecutive 0's or 1's are in $\{1, \ldots, k-1\}$. So, for example, the case starting from $x$ and end at $y$ with $2r$ total runs can be represented as \[ P_k(x)P_k(y)\cdots P_k(x)P_k(y)=P_k(x)^rP_k(y)^r. \]
Furthermore, there also exists the case starting from $x$ and end at $y$ with $2r$ total runs:
\[ P_k(y)P_k(x)\cdots P_k(y)P_k(x)=P_k(x)^rP_k(y)^r. \]
Therefore, if the number of mask-runs and non-mask-runs
are both equal to $r\ge 1$, then the total contribution is
\[
2P_k(x)^rP_k(y)^r.
\]
If there is one more mask-run than non-mask-run, the contribution is
\[
P_k(x)^{r+1}P_k(y)^r
\qquad (r\ge 0),
\]
and similarly, if there is one more non-mask-run than mask-run, the contribution is
\[
P_k(x)^rP_k(y)^{r+1}
\qquad (r\ge 0).
\]
To compute $F(x, y)$, we take their sum over $r\ge0$ with the above cases. Then the bivariate generating function of nonempty $k$-efficient mask indicators is
\[
F(x,y)
:=
\sum_{a,b\ge 0} N(a,b)x^ay^b
=
\frac{P_k(x)+P_k(y)+2P_k(x)P_k(y)}{1-P_k(x)P_k(y)}.
\]
The empty string is irrelevant here, since we only extract coefficients of positive total degree.
Indeed, every $k$-efficient mask indicator is an alternating sequence of mask-runs and non-mask-runs,
and every run length belongs to $\{1,\dots,k-1\}$. 
Now the left-hand side of \eqref{eq:q-mono-target} can be converted into the coefficient of $x^ny^{m-1}$ in
\[
D(x,y):=\partial_xF(x,y)-\partial_yF(x,y).
\]
Therefore it suffices to show that
\[
[x^py^q]D(x,y)\ge 0
\qquad
\text{whenever }p\le q.
\]

We now prove this coefficientwise nonnegativity. Define
\[
A_k(z):=1+P_k(z)=1+z+\cdots+z^{k-1}.
\]
A direct differentiation of $F(x,y)$ gives
\[
D(x,y)
=
\frac{A_k(y)^2P_k'(x)-A_k(x)^2P_k'(y)}{\bigl(1-P_k(x)P_k(y)\bigr)^2}.
\]
Write
\[
B(x,y):=A_k(y)^2P_k'(x)-A_k(x)^2P_k'(y),
\qquad
K(x,y):=\frac{1}{\bigl(1-P_k(x)P_k(y)\bigr)^2}.
\]
Then
\[
D(x,y)=B(x,y)K(x,y).
\]
By the definition of a generating function,
\[
[x^p y^q]D(x,y)\ge 0
\quad\Longleftrightarrow\quad
[x^p y^q]\Bigg(\sum_{i,j}
\Big([x^i y^j]B(x,y)\Big)
\Big([x^{p-i}y^{q-j}]K(x,y)\Big)\Bigg)\ge 0.
\]
Therefore, it suffices to show the right-hand side holds for any $p\le q$. Let $c_{ij}:=[x^iy^j]B(x,y)$ and $K_{u,v}:=[x^uy^v]K(x,y)$. Our objective is then to show:
\[
\sum_{i,j}c_{ij}K_{p-i,q-j}\ge0,\qquad \text{whenever }p\le q.
\]
We first analyze the signs of the coefficients of $B$. Let
\[
a_i:=[z^i]P_k'(z),
\qquad
b_i:=[z^i]A_k(z)^2.
\]
Since
\[
P_k'(z)=1+2z+\cdots+(k-1)z^{k-2},
\]
and
\[
A_k(z)^2=(1+z+\cdots+z^{k-1})^2,
\]
we have
\[
\frac{a_i}{b_i}
=
\begin{cases}
1, & 0\le i\le k-2,\\
0, & k-1\le i\le 2k-2,
\end{cases}
\]
with the convention that coefficients outside the support are zero. Hence $i\mapsto a_i/b_i$ is
nonincreasing. Therefore, for $i<j$,
\[
a_ib_j-b_ia_j
=
b_ib_j\left(\frac{a_i}{b_i}-\frac{a_j}{b_j}\right)
\ge 0.
\]
Then by definition of $c_{ij}$,
\begin{equation}
\label{eq:B-coeff-sign}
c_{ij}\ge 0\quad (i<j),
\qquad
c_{ji}=-c_{ij},
\qquad
c_{ii}=0.
\end{equation}

Next we analyze $K$. For
\[
K(x,y)
=
\frac{1}{\bigl(1-P_k(x)P_k(y)\bigr)^2}
=
\sum_{r\ge 0}(r+1)P_k(x)^rP_k(y)^r,
\]
let
\[
f_r(s):=[z^s]P_k(z)^r\qquad\text{s.t. }K_{u,v}
=\sum_{r\ge 0}(r+1)f_r(u)f_r(v).
\]
For each fixed $r$, the sequence $s\mapsto f_r(s)$ is log-concave. Indeed,
$P_k(z)=z(1+z+\cdots+z^{k-2})$, and the nonzero coefficient sequence $(1,\dots,1)$ is
log-concave; moreover, convolution preserves log-concavity for nonnegative sequences without
internal zeros. Hence the coefficient sequence of $P_k(z)^r$ is log-concave. Equivalently, the ratio
\[
\rho_s:=\frac{f_r(s+1)}{f_r(s)}
\]
is nonincreasing wherever both terms are positive. Therefore, for each fixed total degree $S$ and for
$u\le S/2$,
\[
\frac{f_r(u+1)f_r(S-u-1)}{f_r(u)f_r(S-u)}
=
\frac{\rho_u}{\rho_{S-u-1}}
\ge 1,
\]
so the sequence
\[
u\mapsto f_r(u)f_r(S-u)
\]
is symmetric around $S/2$ and nondecreasing for $u\le S/2$. Summing over $r$ with nonnegative
weights $(r+1)$ preserves this property. Therefore,
\begin{equation}
\label{eq:K-central-monotone}
K_{u,v}\ge K_{u',v'}
\quad
\text{whenever }
u+v=u'+v'
\text{ and }
|u-v|\le |u'-v'|.
\end{equation}

Now fix $p\le q$. Recall that
\[
[x^py^q]D
=
\sum_{i,j}c_{ij}K_{p-i,q-j}.
\]
Pairing the terms $(i,j)$ and $(j,i)$ and using \eqref{eq:B-coeff-sign}, we get
\[
[x^py^q]D
=
\sum_{i<j}c_{ij}\bigl(K_{p-i,q-j}-K_{p-j,q-i}\bigr),
\]
where we use the convention that $K_{u,v}=0$ whenever $u<0$ or $v<0$.
We now show that each paired summand is nonnegative. Fix $i<j$.

First, suppose that $p<j$. Then $p-j<0$, and hence
\[
    K_{p-j,q-i}=0.
\]
Since all coefficients of $K$ are nonnegative, we have
\[
    K_{p-i,q-j}-K_{p-j,q-i}=K_{p-i,q-j}\ge 0,
\]
where the inequality also covers the case in which $p-i<0$ or $q-j<0$, since then
$K_{p-i,q-j}=0$ by convention.

It remains to consider the case $p\ge j$. Since $p\le q$ and $i<j$, all indices appearing below
are nonnegative:
\[
p-i,\quad q-j,\quad p-j,\quad q-i \ge 0.
\]
The two pairs $(p-i,q-j)$ and $(p-j,q-i)$ have the same total degree:
\[
(p-i)+(q-j)=(p-j)+(q-i).
\]
Moreover, since $p\le q$ and $i<j$,
\[
\bigl|(q-j)-(p-i)\bigr|
=
\bigl|q-p-(j-i)\bigr|
\le
q-p+(j-i)
=
\bigl|(q-i)-(p-j)\bigr|.
\]
Thus $(p-i,q-j)$ is at least as close to the diagonal as $(p-j,q-i)$. By
\eqref{eq:K-central-monotone},
\[
K_{p-i,q-j}\ge K_{p-j,q-i}.
\]
Therefore, in both cases,
\[
K_{p-i,q-j}-K_{p-j,q-i}\ge 0.
\]
Since also $c_{ij}\ge 0$ for $i<j$, every paired summand is nonnegative. Hence
\[
[x^py^q]D\ge 0
\qquad
\text{whenever }p\le q.
\]

Finally, take $p=n$ and $q=m-1$. Since $n<m$, we have $n\le m-1$, so
\[
[x^ny^{m-1}]D\ge 0.
\]
Equivalently,
\[
(n+1)N(n+1,m-1)-mN(n,m)\ge 0.
\]
As shown above, this is exactly
\[
q_{L,k}(n)\le q_{L,k}(n+1).
\]
This proves the monotonicity for every $n<L/2$, and the lemma follows.
\end{proof}
We now translate Lemma~\ref{lem:q-central-monotone} into a statement about time sampling.
Recall that
\begin{align}
\mathbb P\Big(\mathbf M\in\mathcal E_k^c\mid t\sim\pi, M^i\stackrel{\mathrm{i.i.d.}}{\sim} \mathrm{Bernoulli}(t)\Big)
=
\sum_{n=0}^L w_\pi(n)q_{L,k}(n)\label{eq:prob_mass_w_pi}.
\end{align}
where
\begin{align}
w_\pi(n)&=
\mathbb P\left(
\sum_{i=1}^L M^i=n
\ \middle|\
 t\sim \pi,\ M^i\stackrel{\mathrm{i.i.d.}}{\sim}\mathrm{Bernoulli}(t)
\right)\nonumber\\
&=
\binom{L}{n}\int_{[0,1]} t^n(1-t)^{L-n}\,d\pi(t)\label{eq:w_pi}
\end{align}
Substituting \eqref{eq:w_pi} into \eqref{eq:prob_mass_w_pi} gives:
\begin{align}
\mathbb P\Big(\mathbf M\in&\mathcal E_k^c\mid t\sim\pi, M^i\stackrel{\mathrm{i.i.d.}}{\sim} \mathrm{Bernoulli}(t)\Big)
=
\sum_{n=0}^L w_\pi(n)q_{L,k}(n)\nonumber\\
&=\sum_{n=0}^L\binom{L}{n}\int_{[0,1]} t^n(1-t)^{L-n}q_{L,k}(n)\,d\pi(t)\nonumber\\
&=\int_{[0,1]} \sum_{n=0}^L\binom{L}{n}t^n(1-t)^{L-n}q_{L,k}(n)\,d\pi(t)\label{eq:prob_mass_with_qLk}
\end{align}

Define
\[
R_{L,k}(t)
:=
\mathbb P\left(
\mathbf M\in\mathcal E_k^c
\ \middle|\
 M^i\stackrel{\mathrm{i.i.d.}}{\sim}\mathrm{Bernoulli}(t)
\right).
\]
Conditioning on the number of masks gives
\begin{equation}
\label{eq:R-bernstein}
R_{L,k}(t)
=
\sum_{n=0}^L \binom{L}{n}t^n(1-t)^{L-n}q_{L,k}(n).
\end{equation}
Substituting \eqref{eq:R-bernstein} into \eqref{eq:prob_mass_with_qLk} then gives:
\begin{align}
    \mathbb P\Big(\mathbf M\in&\mathcal E_k^c\mid t\sim\pi, M^i\stackrel{\mathrm{i.i.d.}}{\sim} \mathrm{Bernoulli}(t)\Big)
=\int_{[0,1]} R_{L,k}(t) d\pi(t).\label{eq:prob_mass_R_pi}
\end{align}
Now our goal is to prove $R_{L,k}(t)$ has property of central monotonicity, and then prove Proposition~\ref{prop:bell_shape}.

\begin{lemma}[Central monotonicity in time]
\label{lem:R-central-monotone}
The function $R_{L,k}(t)$ is symmetric around $1/2$ and nondecreasing on $[0,1/2]$. That is,
\[
R_{L,k}(t)=R_{L,k}(1-t),
\qquad
R_{L,k}'(t)\ge 0
\quad
\text{for }0\le t\le \frac12.
\]
Moreover, if $2\le k\le L$, then $R_{L,k}$ is strictly increasing on $[0,1/2]$.
\end{lemma}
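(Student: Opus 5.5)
The plan is to work directly from the Bernstein representation \eqref{eq:R-bernstein}, $R_{L,k}(t)=\sum_{n=0}^L\binom{L}{n}t^n(1-t)^{L-n}q_{L,k}(n)$, and to feed in the two properties of the coefficients $q_{L,k}(n)$ proved in Lemma~\ref{lem:q-central-monotone}.

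\textbf{Symmetry.} Substitute $t\mapsto 1-t$ and reindex $n\mapsto L-n$. Using $q_{L,k}(n)=q_{L,k}(L-n)$ from Lemma~\ref{lem:q-central-monotone}, this gives $R_{L,k}(1-t)=R_{L,k}(t)$ immediately.

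\textbf{Derivative formula.} Use the standard derivative of a Bernstein polynomial, writing $\Delta_n:=q_{L,k}(n+1)-q_{L,k}(n)$:
\begin{align*}
R_{L,k}'(t)=L\sum_{n=0}^{L-1}\binom{L-1}{n}t^n(1-t)^{L-1-n}\,\Delta_n .
\end{align*}
By symmetry of $q_{L,k}$ we have $\Delta_{L-1-n}=-\Delta_n$. Pair the terms $n$ and $L-1-n$ for $n<(L-1)/2$; the middle term, present when $L$ is odd, has $\Delta=0$. This gives
\begin{align*}
R_{L,k}'(t)=L\sum_{n<(L-1)/2}\binom{L-1}{n}\Delta_n\bigl[t^n(1-t)^{L-1-n}-t^{L-1-n}(1-t)^n\bigr].
\end{align*}
\textbf{Nonnegativity.} For $n<(L-1)/2$ we have $n<L/2$, so $\Delta_n\ge0$ by Lemma~\ref{lem:q-central-monotone}. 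The bracket equals $t^n(1-t)^n\bigl[(1-t)^{L-1-2n}-t^{L-1-2n}\bigr]$, which is $\ge0$ for $t\le 1/2$. Hence $R_{L,k}'(t)\ge0$ on $[0,1/2]$.

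\textbf{Strictness (expected main obstacle).} For $t\in(0,1/2)$, each bracket is strictly positive. So it suffices to exhibit a single index $n<(L-1)/2$ with $\Delta_n>0$; then $R_{L,k}'>0$ on the open interval $(0,1/2)$, and $R_{L,k}$ is strictly increasing on $[0,1/2]$. I would take $n=0$.
\begin{itemize}
\item $q_{L,k}(0)=0$, since the all-unmasked pattern contains $L\ge k$ consecutive non-masks.
\item For $q_{L,k}(1)$, use a pattern with one mask placed near the middle, so that both unmasked runs have length $\le k-1$. This requires $L-1\le 2(k-1)$, which fails for large $L$ relative to $k$.
\end{itemize}
So $n=0$ does not always work, and instead I would look for the first index where $q_{L,k}$ becomes positive. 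Take $n^*=\lceil L/k\rceil-1$ or similar: with $n^*$ masks one can cut $L$ into unmasked runs of length $\le k-1$ separated by single masks. This needs roughly $L\le n(k-1)+n+(k-1)$, i.e.\ $n\ge (L-k+1)/k$. The degenerate case is $k=2$, where we need $L\le 2n+1$, i.e.\ $n\ge (L-1)/2$, and this sits at the center.

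For $k=2$ I would handle things separately: $\mathcal E_2^c$ consists only of the alternating patterns. Then $q_{L,2}(n)=0$ except at the center, where it is positive, so $q_{L,2}$ is nonzero exactly at the central index (or indices). In that case $R_{L,2}(t)$ is an explicit multiple of $t^{\lfloor L/2\rfloor}(1-t)^{\lceil L/2\rceil}+t^{\lceil L/2\rceil}(1-t)^{\lfloor L/2\rfloor}$ (with a factor 2 when $L$ is even), which is checked directly to be strictly increasing on $[0,1/2]$.

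For $k\ge3$, let $n_0$ be the smallest $n$ with $q_{L,k}(n)>0$. Then $\Delta_{n_0-1}>0$, and one checks $n_0-1<(L-1)/2$ using the construction above, since $(L-k+1)/k<(L-1)/2$ when $k\ge3$. This yields strict increase. The case $L=k$, where $q_{L,k}(0)=0$ and $q_{L,k}(1)>0$ with $n_0=1$, is covered by the same argument.
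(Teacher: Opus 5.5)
Your proof is correct. It matches the paper on symmetry, the Bernstein-derivative formula, pairing and nonnegativity, and differs only in how strictness is obtained.

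The paper never locates an index with $\Delta_n>0$. It uses two facts:
\begin{itemize}
\item $R_{L,k}(0)=0$, because the all-unmasked pattern is $k$-inefficient;
\item $R_{L,k}(1/2)>0$, because the alternating pattern is $k$-efficient.
\end{itemize}
It then argues by contradiction: if $R_{L,k}$ failed to be strictly increasing on $[0,1/2]$, it would be constant on a subinterval. The polynomial $R_{L,k}'$ would then vanish there and hence everywhere, contradicting $R_{L,k}(0)<R_{L,k}(1/2)$.

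Your route instead exhibits an index $n_0-1<(L-1)/2$ with $\Delta_{n_0-1}>0$. Combined with the strictly positive brackets, this gives the slightly stronger pointwise conclusion $R_{L,k}'>0$ on $(0,1/2)$. The cost is a run-length construction and a case split on $k$.

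The case split is not needed. From $n_0\le\lceil (L-k+1)/k\rceil$ you get $n_0-1<(L-k+1)/k\le (L-1)/2$ for every $k\ge 2$; the strict inequality comes from the ceiling.

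The construction can also be avoided by telescoping:
\[
\sum_{n=0}^{\lfloor L/2\rfloor-1}\Delta_n=q_{L,k}(\lfloor L/2\rfloor)-q_{L,k}(0)>0,
\]
since an alternating pattern has $\lfloor L/2\rfloor$ masks. Every index in this sum satisfies $n<(L-1)/2$, so some $\Delta_n$ in range is positive. This makes your version as short as the paper's while keeping the pointwise conclusion.

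Two minor slips, neither of which affects the argument. Your tentative $n^*=\lceil L/k\rceil-1$ is too small in general; for $L=2k$ it gives one mask, which cannot be $k$-efficient. For $k=2$, the displayed two-term expression already equals $R_{L,2}$ exactly, with no extra factor $2$ when $L$ is even.
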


\begin{proof}
The symmetry follows from the symmetry of $q_{L,k}(n)$:
\[
q_{L,k}(n)=q_{L,k}(L-n).
\]
To prove monotonicity, differentiate \eqref{eq:R-bernstein}. The derivative of $R_{L,k}(t)$ is
\[
R_{L,k}'(t)
=
L\sum_{n=0}^{L-1}
\binom{L-1}{n}
\bigl(q_{L,k}(n+1)-q_{L,k}(n)\bigr)t^n(1-t)^{L-1-n}.
\]
Let $d_n:=q_{L,k}(n+1)-q_{L,k}(n)$. By Lemma~\ref{lem:q-central-monotone},
$d_n\ge 0$ for $n<(L-1)/2$, and by symmetry of $q_{L,k}$, $d_{L-1-n}=-d_n$.
Now pair the $n$-th and $(L-1-n)$-th terms in the derivative. For $n<(L-1)/2$, the paired
contribution is
\[
L\binom{L-1}{n}d_n
\Bigl[t^n(1-t)^{L-1-n}-t^{L-1-n}(1-t)^n\Bigr].
\]
When $0\le t\le 1/2$, we have $1-t\ge t$, so every paired contribution is nonnegative. Hence
$R_{L,k}'(t)\ge 0$ on $[0,1/2]$.

It remains to prove strict monotonicity. When $2\le k\le L$, we have $R_{L,k}(0)=0$, because
$t=0$ gives the all-zero mask indicator, which contains $L\ge k$ consecutive non-masks and
therefore lies in $\mathcal E_k$. On the other hand, $R_{L,k}(1/2)>0$, since the alternating mask
indicator $0101\cdots$ is $k$-efficient for every $k\ge 2$ and occurs with positive probability under
i.i.d. Bernoulli$(1/2)$ sampling. Thus $R_{L,k}$ is not constant. Since $R_{L,k}'$ is a polynomial and
$R_{L,k}'\ge 0$ on $[0,1/2]$, if $R_{L,k}$ were not strictly increasing, then it would be constant on
some nonempty interval, so $R_{L,k}'$ would vanish on that interval and hence vanish identically.
This would contradict $R_{L,k}(0)<R_{L,k}(1/2)$. Therefore $R_{L,k}$ is strictly increasing on
$[0,1/2]$.
\end{proof}

We can now prove Proposition~\ref{prop:bell_shape}. Recall that Proposition~\ref{prop:bell_shape} is as follows:

\begingroup
\renewcommand{\thetheorem}{\ref{prop:bell_shape}}
\renewcommand{\theHtheorem}{app.restate.\ref{prop:bell_shape}}
\begin{proposition} For any $L$, $2\le k\le L$, and any bell-shaped time distribution $\pi$, the following statement always holds:
    \begin{align*}
        \mathbb{P}\bigg(\Mask\notin\mathcal{E}_k\mid t\sim\mathrm{U}([0,1]), M^i\sim \mathrm{Bernoulli}(t)\bigg)<\mathbb{P}\bigg(\Mask\notin\mathcal{E}_k\mid t\sim\pi, M^i\sim \mathrm{Bernoulli}(t)\bigg).
    \end{align*}
\end{proposition}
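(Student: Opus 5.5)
Using \eqref{eq:prob_mass_R_pi}, both sides of Proposition~\ref{prop:bell_shape} are integrals of $R_{L,k}$, namely $\int_{[0,1]} R_{L,k}\,dt$ and $\int_{[0,1]} R_{L,k}\,d\pi$. The plan is to compare these two integrals using the symmetry and strict central monotonicity of $R_{L,k}$ from Lemma~\ref{lem:R-central-monotone}, together with the defining tail inequality of a bell-shaped law (Definition~\ref{def:bell-shape}).

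First, write $R_{L,k}(t)=g(|t-1/2|)$ for a function $g$ on $[0,1/2]$. By Lemma~\ref{lem:R-central-monotone}, $g$ is continuous (indeed polynomial in $u=|t-1/2|$ on each half) and strictly decreasing. We then express integrals of $R_{L,k}$ as integrals of tail probabilities, via a layer-cake formula in $a$. For any Borel probability $\rho$ on $[0,1]$,
\[
\int R_{L,k}\,d\rho = R_{L,k}(0)+\int_0^{1/2} R_{L,k}'(a)\,\mathbb P_{t\sim\rho}\bigl(t\in[a,1-a]\bigr)\,da.
\]
This follows from
\[
R_{L,k}(t)=R_{L,k}(0)+\int_0^{1/2} R_{L,k}'(a)\,\mathbf 1\bigl(t\in[a,1-a]\bigr)\,da,
\]
which holds pointwise by symmetry: for $t\le1/2$ the indicator is $\mathbf 1(a\le t)$, and the case $t>1/2$ reduces to this one by symmetry. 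Integrating against $\rho$ and applying Fubini (everything is bounded) gives the formula, valid even for atomic $\rho$.

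Next, subtract the formulas for $\rho=\pi$ and $\rho=U([0,1])$:
\[
\int R_{L,k}\,d\pi-\int R_{L,k}\,dt=\int_0^{1/2} R_{L,k}'(a)\,\Delta(a)\,da,
\qquad \Delta(a):=\mathbb P_\pi([a,1-a])-(1-2a).
\]
Definition~\ref{def:bell-shape} gives $\Delta\ge0$, and $R_{L,k}'\ge0$ on $[0,1/2]$, so the difference is $\ge0$.

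The main obstacle is strictness. Definition~\ref{def:bell-shape} only guarantees $\Delta(a_0)>0$ at one point $a_0$, and $R_{L,k}'$ may vanish at isolated points (e.g.\ $a=1/2$, or $a=0$). Two facts handle this. The map $a\mapsto\mathbb P_\pi([a,1-a])$ is nonincreasing and left-continuous in $a$, because the intervals $[a,1-a]$ shrink and are closed, so $\mathbb P_\pi([a,1-a])=\lim_{b\uparrow a}\mathbb P_\pi([b,1-b])$. Meanwhile $1-2a$ is continuous. Hence $\Delta(a)>0$ on some interval $(a_0-\epsilon,a_0]$. If $a_0=0$, use instead that $\mathbb P_\pi([0,1])=1=1-0$, so strictness cannot occur at $a_0=0$; thus $a_0\in(0,1/2)$ and the left interval exists.

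On that interval, $R_{L,k}'$ is a nonzero polynomial (since $R_{L,k}$ is strictly increasing for $2\le k\le L$) with only finitely many zeros. The integrand is therefore positive on a set of positive measure, the difference is strictly positive, and the proposition follows.
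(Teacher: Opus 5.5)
Your proposal is correct and follows essentially the same route as the paper. Both arguments use the layer-cake identity $R(t)=R(0)+\int_0^{1/2}R'(a)\mathbf 1\{t\in[a,1-a]\}\,da$, integrate it against $\pi$ and the uniform law, and use nonnegativity of $R'(a)\bigl(\mathbb P_{t\sim\pi}(t\in[a,1-a])-(1-2a)\bigr)$. The only difference is in the strictness step. The paper uses monotonicity of $a\mapsto\mathbb P_{t\sim\pi}(t\in[a,1-a])$ to bound the gap below by $\eta/2$ on $[a_0-\varepsilon,a_0]$, then uses $\int_I R'(a)\,da=R(a_0)-R(a_0-\varepsilon)>0$. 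You instead use left-continuity and the finitely many zeros of the polynomial $R'$, which is an equivalent variant. Your explicit remark that $a_0=0$ is impossible fills in a step the paper leaves tacit.
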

\endgroup
\begin{proof}
For brevity, write $R(t):=R_{L,k}(t)$. By \eqref{eq:prob_mass_R_pi},
for any probability measure $\mu$ on $[0,1]$,
\[
\mathbb P\left(
\mathbf M\in \mathcal E_k^c
\ \middle|\
t\sim \mu,\ M_i\stackrel{\mathrm{i.i.d.}}{\sim}\mathrm{Bernoulli}(t)
\right)
=
\int_0^1 R(t)\,d\mu(t).
\]
Thus it suffices to compare $\int R(t)\,d\pi(t)$ with
$\int R(t)\,dt$. Since $\pi$ may be atomic or assign zero mass to parts of $[0,1]$, we avoid any pointwise density comparison and instead use an integral over the central intervals $[a,1-a]$. By Lemma~\ref{lem:R-central-monotone}, $R(t)=R(1-t)$ and $R$ is strictly
increasing on $[0,1/2]$. In particular, $R'(a)\ge 0$ on $[0,1/2]$.
For every $t\in[0,1]$, we have
\[
R(t)
=
R(0)+\int_0^{1/2} R'(a)\mathbf 1\{t\in[a,1-a]\}\,da.
\]
Indeed, if $t\le 1/2$, the right-hand side is
$R(0)+\int_0^t R'(a)\,da=R(t)$; if $t\ge 1/2$, it is
$R(0)+\int_0^{1-t}R'(a)\,da=R(1-t)=R(t)$.

Integrating this identity with respect to $\pi$ gives
\[
\int_0^1 R(t)\,d\pi(t)
=
R(0)+
\int_0^{1/2} R'(a)\mathbb{P}_{t\sim\pi}\left(t\in [a,1-a]\right)\,da.
\]
Similarly, for the uniform distribution,
\[
\int_0^1 R(t)\,dt
=
R(0)+
\int_0^{1/2} R'(a)(1-2a)\,da,
\]
because the interval $[a,1-a]$ has length $1-2a$.

Therefore,
\[
\int_0^1 R(t)\,d\pi(t)-\int_0^1 R(t)\,dt
=
\int_0^{1/2}
R'(a)\left(\mathbb{P}_{t\sim\pi}\left(t\in [a,1-a]\right)-(1-2a)\right)\,da.
\]
Since $\pi$ is bell-shaped,
\[
\mathbb{P}_{t\sim\pi}\left(t\in [a,1-a]\right)\ge \mathbb{P}_{t\sim\mathrm{U}([0,1])}\left(t\in [a,1-a]\right)=1-2a
\qquad
\text{for all }a\in[0,1/2),
\]
and since $R'(a)\ge0$, the difference above is nonnegative.

It remains to show that the inequality is strict. By the definition of a bell-shaped
distribution, there exists $a_0\in(0,1/2)$ such that
\[
\mathbb{P}_{t\sim\pi}\left(t\in [a_0,1-a_0]\right)>1-2a_0.
\]
Let
\[
\eta:=\mathbb{P}_{t\sim\pi}\left(t\in [a_0,1-a_0]\right)-(1-2a_0)>0.
\]
Choose $0<\varepsilon<\min\{a_0,\eta/4\}$ and set
$I=[a_0-\varepsilon,a_0]$. For every $a\in I$, we have
$[a_0,1-a_0]\subset [a,1-a]$, so
\[
\mathbb{P}_{t\sim\pi}\left(t\in [a,1-a]\right)-(1-2a)
\ge
\mathbb{P}_{t\sim\pi}\left(t\in [a_0,1-a_0]\right)-(1-2a)
=
\eta-2(a_0-a)
>
\frac{\eta}{2}.
\]
Hence the bracketed term is strictly positive throughout the interval $I$.

Since $R$ is strictly increasing on $[0,1/2]$,
\[
\int_I R'(a)\,da
=
R(a_0)-R(a_0-\varepsilon)
>
0.
\]
Therefore,
\[
\int_0^1 R(t)\,d\pi(t)-\int_0^1 R(t)\,dt
\ge
\int_I
R'(a)\left(\mathbb{P}_{t\sim\pi}\left(t\in [a,1-a]\right)-(1-2a)\right)\,da
>
0.
\]
Consequently,
\[
\mathbb P\left(
\mathbf M\in\mathcal E_k^c
\ \middle|\
t\sim\pi,\ M_i\stackrel{\mathrm{i.i.d.}}{\sim}\mathrm{Bernoulli}(t)
\right)
>
\mathbb P\left(
\mathbf M\in\mathcal E_k^c
\ \middle|\
t\sim U([0,1]),\ M_i\stackrel{\mathrm{i.i.d.}}{\sim}\mathrm{Bernoulli}(t)
\right).
\]
Taking complements gives
\[
\mathbb P\left(
\mathbf M\in\mathcal E_k
\ \middle|\
t\sim\pi,\ M_i\stackrel{\mathrm{i.i.d.}}{\sim}\mathrm{Bernoulli}(t)
\right)
<
\mathbb P\left(
\mathbf M\in\mathcal E_k
\ \middle|\
t\sim U([0,1]),\ M_i\stackrel{\mathrm{i.i.d.}}{\sim}\mathrm{Bernoulli}(t)
\right).
\]
This concludes the proof.
\end{proof}

\paragraph{Remark.}
In measure-theoretic notation, the probability mass assigned by a probability distribution \(\pi\) to an interval \(I\subseteq[0,1]\) is conventionally written as \(\pi(I)\). To make this notation more accessible to readers less familiar with this convention, we instead write the same quantity as \(\mathbb P_{t\sim\pi}(t\in I)\) throughout the main text and proofs.

\section{Omitted Experimental Details and Results}
\subsection{Omitted Experimental Details}\label{sec:omitted_experimental_details}

\noindent\textbf{Experimental setup used in LM1B experiments.} We detokenize the One Billion Words (LM1B) dataset following prior works~\citep{lou2024discrete,sahoo2024simple}.
We tokenize LM1B using the \textsc{bert-base-uncased} tokenizer~\citep{he2022diffusionbert}, and concatenate sequences to a fixed length of 128 using sentence packing~\citep{raffel2020exploring}. We parameterize the diffusion backbone~\citep{lou2024discrete,sahoo2024simple} using a modified diffusion transformer (DiT) architecture~\citep{peebles2023scalable} with 12 layers, hidden size 768, and 12 attention heads.
We train using the AdamW optimizer with a batch size of 512, and adopt a linear warmup schedule from 0 to $3\mathrm{e}{-4}$ over 2,500 steps~\citep{sahoo2024simple,lou2024discrete,sahoo2025diffusionduality}. Training was conducted on 2$\times$H200 GPUs for both LM1B with and without sentence packing and consumed $\sim$100 hours to reach 1M steps.

\noindent\textbf{Experimental setup used in OWT experiments.} For OpenWebText (OWT), we use the GPT-2 tokenizer~\citep{radford2019language} following prior works~\citep{lou2024discrete,sahoo2024simple} and pack sequences to a context length of 1{,}024, inserting an \texttt{eos} token between consecutive documents.
As OWT does not provide an official validation split, we reserve the last 100K documents for validation. We use the same DiT architecture specified in above LM1B setup. Following prior work~\citep{shi2025simplified}, we train using the AdamW optimizer with a batch size of 512, and adopt a cosine annealing with weight decay scheduler with warmup from 0 to $3\mathrm{e}{-4}$ over 2,500 steps. Training was conducted on 4$\times$H100 GPUs for OWT and consumed $\sim$300 hours to reach 1M steps.

\noindent\textbf{Default setting for analysis on MDM training inefficiency (Sec.~\ref{sec:slow_learner}), training dynamics of MDM with various time distribution (Sec.~\ref{sec:training_dynamics}), and analysis on other alternatives (Sec.~\ref{sec:other_alternatives}).}
All experiments in these sections use LM1B with sentence packing as the default setting, and follow the aforementioned setup for LM1B experiments. For ARM, we utilize the same architecture for MDMs, and the only difference is that ARM utilizes causal attention. For $\sigma$-ARM, we just utilize same architecture of ARM, yet just permute the data and RoPE~\citep{Su2021rope} angle with randomly selected but fixed $\sigma$. For AoARM, we do not directly adopt the architecture of Xue et al.~\citep{xue2025anyordergpt}, since it does not use RoPE and injects source and target positions through sinusoidal embeddings and AdaLN, respectively. To keep the comparison compatible with our RoPE-based backbone, we implement a simple AoARM variant: source-position RoPE is applied in the first half of the layers, and target-position RoPE in the second half. Since this experiment only aims to test whether AoARM can outperform $\sigma$-ARM, this simple implementation is sufficient for our analysis.

\noindent\textbf{Experimental setup used in pretraining benchmarks (Sec.~\ref{sec:result_various_benchmark}).}
The training setup inherits all the aforementioned setups for LM1B and OWT, respectively. We report both generative and zero-shot perplexity.
Generative perplexity is computed using a GPT-2 Large autoregressive model, following prior works~\citep{sahoo2024simple,lou2024discrete,sahoo2025diffusionduality},
while zero-shot perplexity is evaluated on the corresponding validation splits of each dataset. See Appendix~\ref{sec:zero_and_gen_ppl} for details and results of zero-shot perplexity and generative perplexity.

\noindent\textbf{Experimental setup used in continual pretraining (Sec.~\ref{sec:result_cpt}).}
We use GPT-2 Large (36 layers, 20 attention heads, hidden size 1280; 774M parameters) as the backbone and train a discrete diffusion language model with full-parameter updates.
Continual pretraining (CPT) is performed on pre-tokenized FineWeb-Edu sequences with a global batch size of 256.
We use a learning rate of $3\mathrm{e}{-4}$ with 2K warmup steps and 10K annealing steps. For downstream supervised fine-tuning (SFT), we construct a 210K-example instruction-style dataset by combining the training splits of the following benchmarks: LAMBADA~\citep{paperno2016lambada}, HellaSwag~\citep{zellers2019hellaswag}, TriviaQA~\citep{joshi2017triviaqa}, OpenBookQA~\citep{mihaylov2018can}, PIQA~\citep{bisk2020piqa}, Social IQA~\citep{sap2019social}, and WinoGrande~\citep{sakaguchi2021winogrande}. This composition follows prior instruction tuning paradigm~\citep{sanh2021multitask}, where downstream benchmarks are repurposed as supervised training data to induce general reasoning capabilities.
Each example is formatted as a prompt ending with \texttt{Answer:} followed by the target answer string.
We fine-tune with a global batch size of 128, 64 diffusion steps, and a single annealing step, using a cosine learning rate schedule with 500 warmup steps in \texttt{bf16} precision and full-parameter updates.
All downstream experiments run for 10K SFT steps. For instruction-following evaluation, we further fine-tune on 103K cleaned \textsc{ShareGPT-GPT4} conversations, following the SFT protocol of Nie~\etal~\citep{nie2024scaling}.
Starting from 2K-step downstream SFT checkpoints, we train for an additional 10K steps with a global batch size of 128 and learning rate $1\mathrm{e}{-4}$.
We use sequence length 1024, 64 diffusion steps, and a single annealing step. Training was conducted on 4$\times$B200 GPUs and consumed $\sim$50 hours to reach 100K steps.

To briefly explain the CPT proposed by Gong~\etal~\citep{gong2024scaling} and used in our experiments, Gong~\etal adapt pretrained autoregressive language models into diffusion language models via continued pretraining, rather than training diffusion models from scratch.
Their recipe bridges the gap between ARM and diffusion modeling by converting causal attention into bidirectional denoising behavior and preserving the ARM-style shift operation during adaptation.
Importantly, this CPT procedure does not introduce a new diffusion objective; it directly optimizes the standard MDM loss $\mathcal{L}_{\mathrm{MDM}}$.
In this CPT setting, applying our bell-shaped time sampling simply amounts to replacing this loss with $\hat{\mathcal{L}}_{\pi}$, while keeping the rest of CPT recipe proposed by Gong~\etal unchanged. For supervised fine-tuning, following LLaDA~\citep{nie2025llada}, we keep the condition tokens, such as the instruction and prompt, fixed without masking them, and mask only the response tokens for denoising under the standard MDM loss $\mathcal{L}_{\mathrm{MDM}}$.
Similarly, applying our bell-shaped time sampling in this SFT setting simply amounts to replacing $\mathcal{L}_{\mathrm{MDM}}$ with $\hat{\mathcal{L}}_{\pi}$, while keeping the conditioning scheme unchanged.

\subsection{Additional Ablation on Bell-shaped Time Sampling}\label{sec:ablation_t_sampling}

We provide additional hyperparameter ablations for the time sampling distributions
introduced in Table~\ref{tab:time_sampling_distributions}. All experiments use the same LM1B sentence-packed setting
as in Sec.~\ref{sec:training_dynamics}, and we report validation NLL during training. In all figures,
``Base'' denotes the conventional MDM objective, while Gaussian$(0.5,0.1)$ denotes
the default bell-shaped time sampling distribution used in the main experiments.

\begin{figure}[t]
    \centering
    \includegraphics[width=0.6\linewidth]{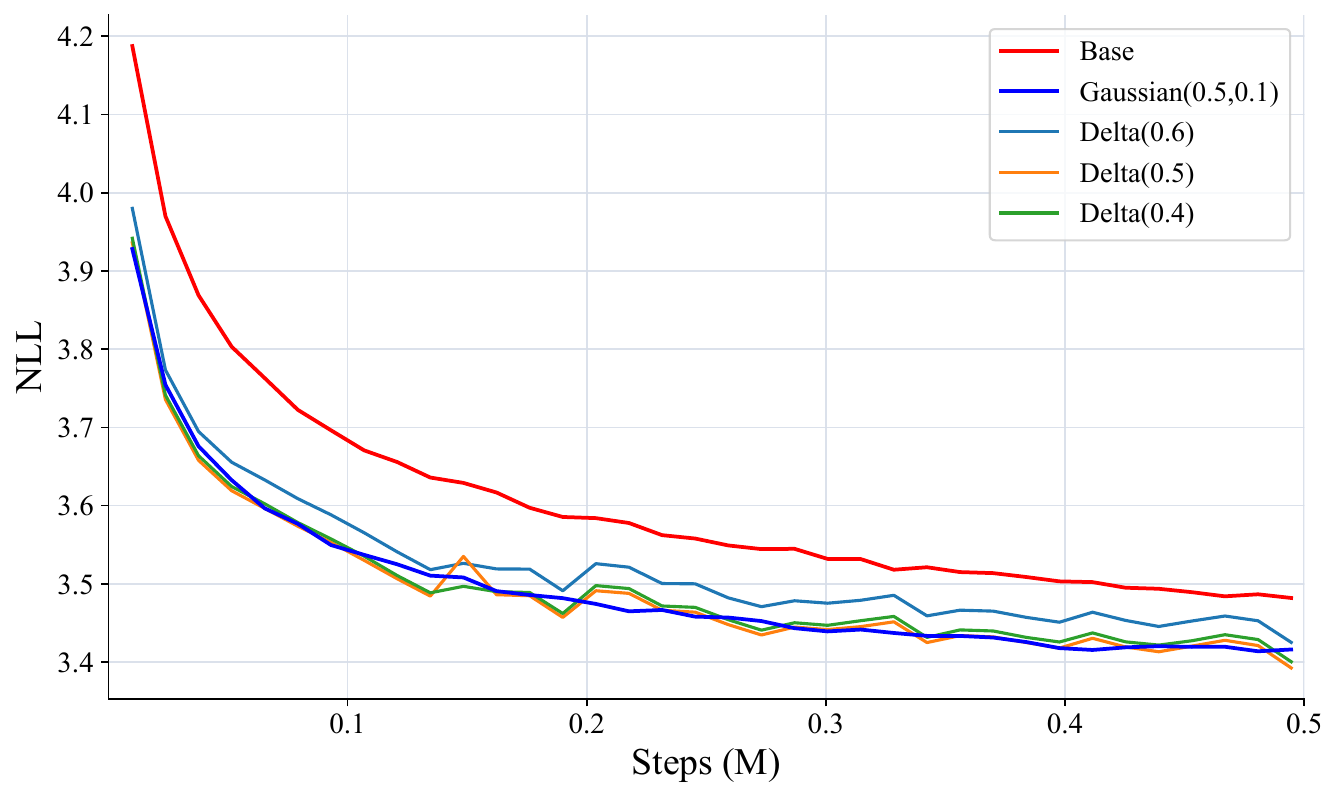}
    \caption{
    Additional ablation on Dirac-delta time sampling. We compare different
    choices of the delta location $\mu$ against the standard MDM baseline and
    the default Gaussian$(0.5,0.1)$ time sampling.
    }
    \label{fig:ablation_t_sampling_delta}
\end{figure}

\noindent\textbf{Dirac-delta time sampling.}
Fig.~\ref{fig:ablation_t_sampling_delta} compares Dirac-delta distributions
$\pi=\delta_\mu$ with different choices of $\mu$. All delta-based variants
substantially outperform the standard MDM baseline, confirming that concentrating
training on a narrow range of middle-context corruption levels is beneficial.
Among them, $\delta_{0.5}$ and $\delta_{0.4}$ show strong performance, whereas
$\delta_{0.6}$ is consistently worse. Since a larger $t$ corresponds to a lower-context
regime, this result is consistent with the observation in Sec.~\ref{sec:training_dynamics} that shifting
the sampling distribution toward $\mu=0.6$ degrades training dynamics. This result might indicate that the training inefficiency of low-context regime is larger than that of high-context regime. However,
although delta sampling can strongly emphasize a specific context level, it removes
diversity across corruption levels, which makes it less robust than the default
Gaussian sampling.

\begin{figure}[t]
    \centering
    \includegraphics[width=0.6\linewidth]{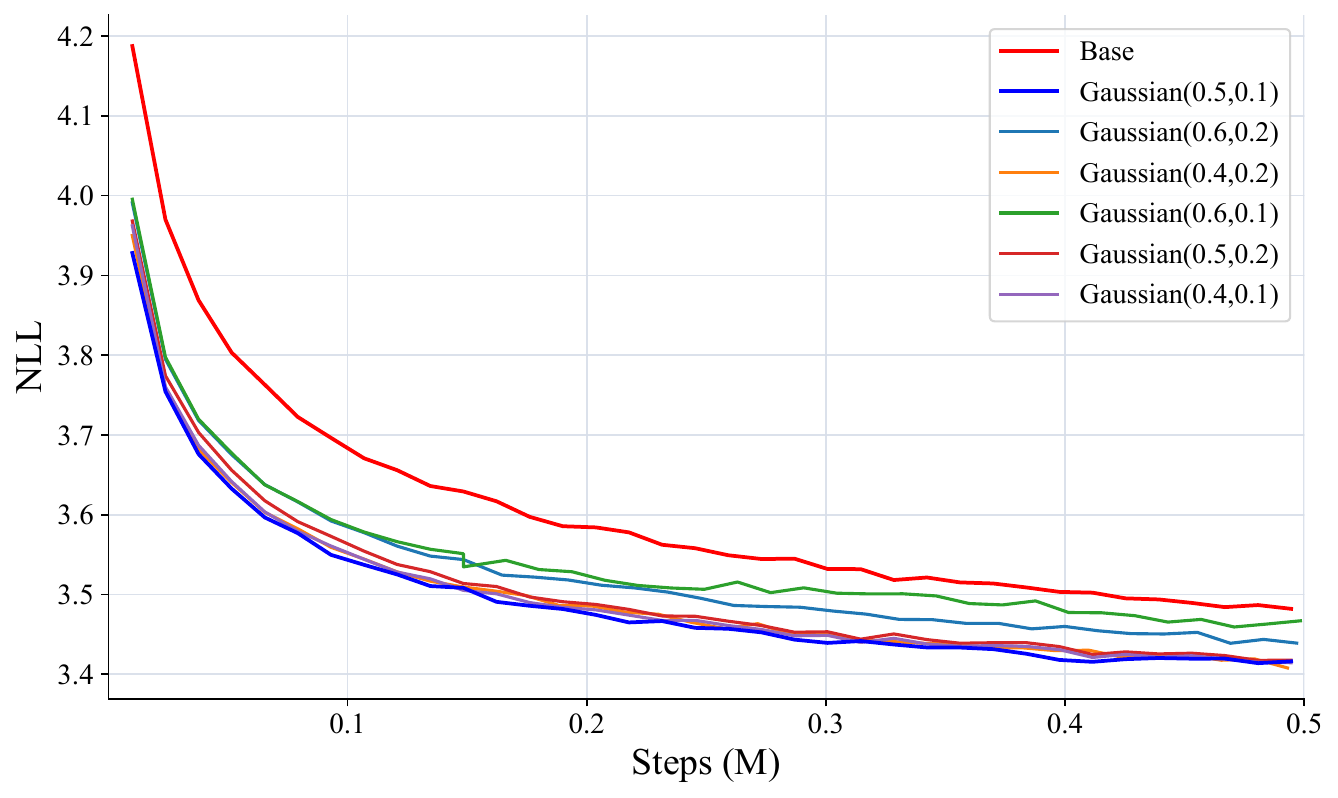}
    \caption{
    Additional ablation on truncated Gaussian time sampling. We vary the mean
    and standard deviation of the truncated Gaussian distribution and compare
    them with the standard MDM baseline and the default Gaussian$(0.5,0.1)$.
    }
    \label{fig:ablation_t_sampling_gaussian}
\end{figure}

\noindent\textbf{Gaussian time sampling.}
Fig.~\ref{fig:ablation_t_sampling_gaussian} further varies the mean and standard
deviation of the truncated Gaussian distribution. The results show that Gaussian
time sampling is broadly effective: all Gaussian variants improve over the base
MDM. Nevertheless, the mean is important. Distributions centered at $\mu=0.5$ or
slightly shifted to $\mu=0.4$ remain strong, whereas shifting the mean toward
$\mu=0.6$ yields worse training dynamics, especially when the variance is small.
Increasing the standard deviation from $\sigma=0.1$ to $\sigma=0.2$ partially
mitigates this degradation, since the distribution assigns more probability mass
back to the middle-context region. These results support our default choice of
Gaussian$(0.5,0.1)$: it is centered at the most balanced context level while
keeping the sampling distribution sufficiently concentrated.

\begin{figure}[t]
    \centering
    \includegraphics[width=0.6\linewidth]{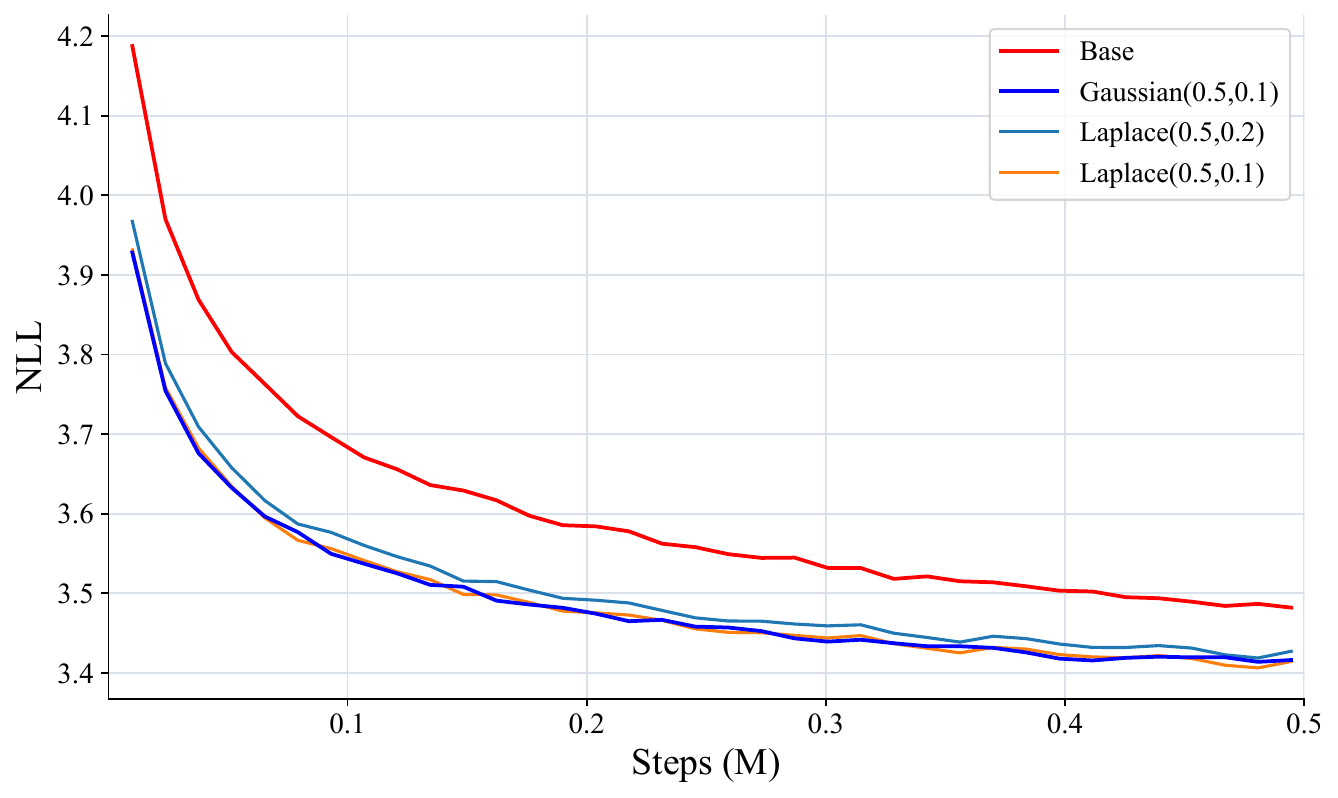}
    \caption{
    Additional ablation on truncated Laplace time sampling. We compare
    Laplace distributions with different scale parameters against the standard
    MDM baseline and the default Gaussian$(0.5,0.1)$.
    }
    \label{fig:ablation_t_sampling_laplace}
\end{figure}

\noindent\textbf{Laplace time sampling.}
Fig.~\ref{fig:ablation_t_sampling_laplace} evaluates truncated Laplace
distributions centered at $\mu=0.5$ with different scale parameters. Both
Laplace$(0.5,0.1)$ and Laplace$(0.5,0.2)$ outperform the standard MDM baseline,
showing that the benefit of bell-shaped time sampling is not specific to the
Gaussian family. The narrower Laplace distribution performs slightly better, while
the larger-scale variant remains effective but is somewhat less competitive because
its heavier tails allocate more probability mass to the low- and high-context
regions. Together with the Gaussian ablation, this suggests that the crucial factor
is not the exact parametric form of $\pi(t)$, but whether the distribution keeps
most of its probability mass around the middle-context region.

\subsection{Zero-shot Perplexity and Generative Perplexity of Models Trained on OpenWebText}\label{sec:zero_and_gen_ppl}

\begin{table*}[t]
\centering
\caption{Zero-shot perplexities ($\downarrow$) of models trained for 1M steps on OpenWebText (mean $\pm$ std over 5 seeds).}
\label{tab:zeroshot-ppl}
\small
\begin{adjustbox}{width=\linewidth}
\begin{tabular}{lccccccc}
\toprule
\textbf{Training} & \multicolumn{7}{c}{\textbf{Benchmark}} \\
\cmidrule(l){2-8}
 & PTB & Wikitext & LM1B & Lambada & AG News & Pubmed & Arxiv \\
\midrule
Base     
& 115.99 $\pm$ 5.60 
& 37.76 $\pm$ 0.94 
& 72.06 $\pm$ 0.45 
& 48.18 $\pm$ 1.18 
& 68.31 $\pm$ 1.19 
& 43.15 $\pm$ 0.13 
& \textbf{37.85 $\pm$ 0.14} \\

Gaussian 
& \textbf{98.16 $\pm$ 4.29} 
& \textbf{34.51 $\pm$ 0.80} 
& \textbf{67.71 $\pm$ 0.38} 
& \textbf{46.39 $\pm$ 1.05} 
& \textbf{61.74 $\pm$ 1.26} 
& \textbf{41.43 $\pm$ 0.11} 
& 37.88 $\pm$ 0.15 \\

\bottomrule
\end{tabular}
\end{adjustbox}
\end{table*}

\begin{table}[t]
\centering
\caption{Generative PPL ($\downarrow$) of models trained for 1M steps on OpenWebText (mean $\pm$ std over 5 seeds). NFE denotes the number of function evaluations used during generation.}
\label{tab:gen_ppl}
\small
\begin{tabular}{@{}lccc@{}}
\toprule
\textbf{NFE} & \textbf{512} & \textbf{1024} & \textbf{2048} \\
\midrule
Base     
& 56.51 $\pm$ 1.12 
& 42.81 $\pm$ 0.73 
& 36.04 $\pm$ 1.53 \\

Gaussian 
& \textbf{55.50 $\pm$ 1.05} 
& \textbf{41.55 $\pm$ 2.09} 
& \textbf{33.89 $\pm$ 0.91} \\

\bottomrule
\end{tabular}
\end{table}

We provide zero-shot perplexity and generative perplexity for models trained on OWT in Table~\ref{tab:zeroshot-ppl} and~\ref{tab:gen_ppl}, respectively. Following prior works~\citep{sahoo2024simple,hong2026lomdm}, we utilize 7 zero-shot perplexity benchmarks, including PTB~\citep{marcus1993building}, Wikitext~\citep{merity2016pointer}, LM1B, Lambada~\citep{paperno2016lambada}, AG News~\citep{zhang2015character}, Pubmed~\citep{cohan2018discourse}, and Arxiv~\citep{cohan2018discourse}. The generative PPL is computed
by a pre-trained GPT-2 Large on 100 generated samples with
length of 1024. As shown in the tables, the model trained with Gaussian time sampling consistently outperforms the baseline.

\subsection{Other Explored Training Techniques in Detail}\label{sec:exp_alternatives}

In this section, we further explain the details of the other training techniques briefly described in Sec.~\ref{sec:other_alternatives}.
Recall that, for a strictly decreasing scheduler $\alpha_t$, the continuous-time NELBO of MDM can be written as
\begin{align}
\mathcal{L}_{\mathrm{MDM}}
=
\int_{0}^{1}
\frac{-\alpha_t'}{1-\alpha_t}\,
\E_{\x\sim p_{\mathrm{data}},\,\x_t\sim q_{t|0}}
\Bigl[
\sum_{i:x_t^i=\mask}
-\log p_\theta(x^i\mid \x_t,t)
\Bigr]\,dt .
\label{eq:nelbo-general-scheduler}
\end{align}
As in the main experimental results, we use a truncated Gaussian density for the middle-biased alternatives in Sec.~\ref{sec:other_alternatives}.
Define
\begin{align}
f_{\mathrm{TG}}(t)
=
\frac{1}{\sigma Z}\,
\varphi\!\Bigl(\frac{t-\mu}{\sigma}\Bigr),
\qquad
t\in[\ell,h],
\label{eq:tn-density}
\end{align}
where $Z$ is the normalization constant.
We regard $f_{\mathrm{TG}}(a)$ as zero outside $[\ell,h]$. Note that we utilized $\ell=0,h=1,\mu=0.5$ and $\sigma=0.1$ for all experiments in Sec.~\ref{sec:other_alternatives}.

\begin{figure}[t]
    \centering
    \includegraphics[width=0.8\linewidth]{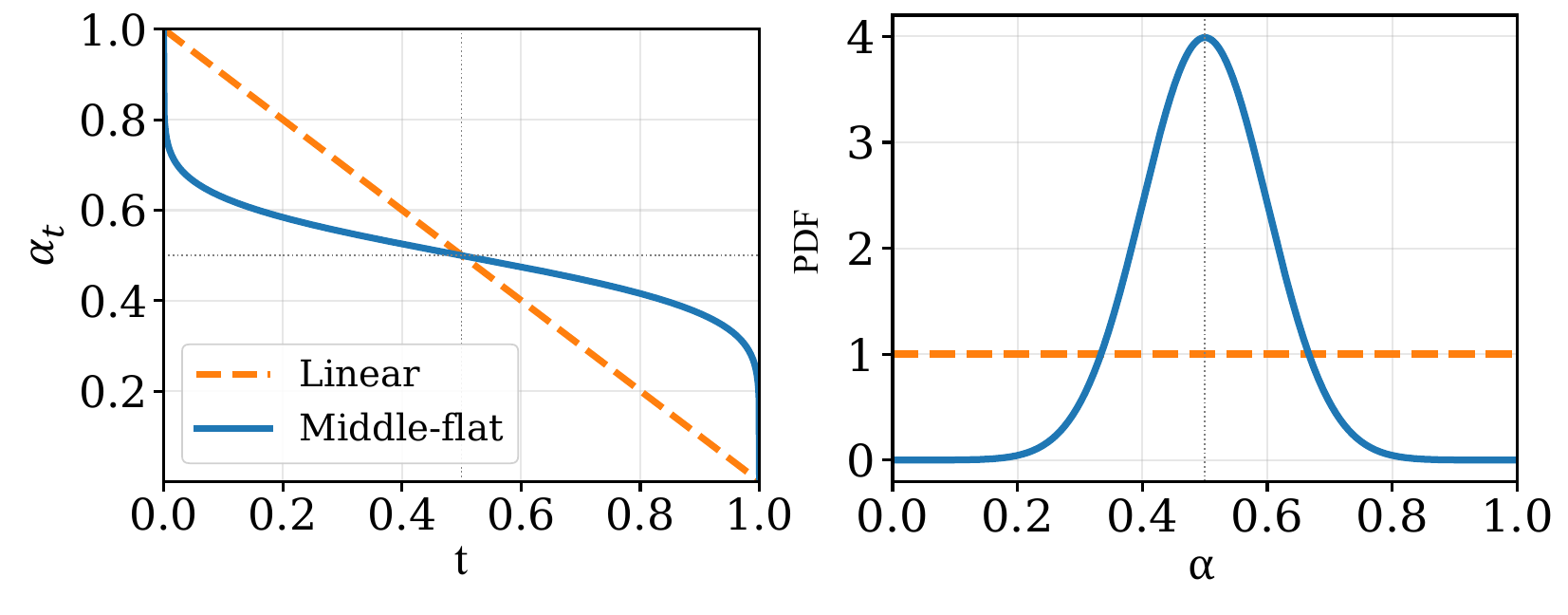}
    \caption{
    \textit{Left:} the middle-flat scheduler $\alpha_t^{\mathrm{MF}}=F_{\mathrm{TG}}^{-1}(1-t)$ with $\ell=0,h=1,\mu=0.5$ and $\sigma=0.1$ is flatter around $\alpha=0.5$ than the linear scheduler, so a uniform draw of $t$ produces many more middle-context corruption levels.
    \textit{Right:} under $t\sim\mathrm{Uniform}(0,1)$, the induced marginal distribution of $\alpha_t^{\mathrm{MF}}$ matches the target truncated-Gaussian density $f_{\mathrm{TG}}(a)$.}
    \label{fig:middle_flat_scheduler}
\end{figure}

\noindent\textbf{Middle-flat noise scheduler $\alpha_t^{\mathrm{MF}}$.} Prior works~\citep{sahoo2024simple,shi2025simplified} show that the NELBO of MDM is invariant to the specific choice of $\alpha_t$. In this regard, a natural alternative is to modify the scheduler itself so that the model encounters middle-context samples more often while still optimizing the theoretical NELBO.
To this end, define the truncated-normal CDF
\begin{align}
F_{\mathrm{TG}}(t)
:=
\int_{\ell}^{t} f_{\mathrm{TG}}(u)\,du,
\label{eq:tn-cdf}
\end{align}
and choose the scheduler as
\begin{align}
\alpha_t^{\mathrm{MF}}
=
F_{\mathrm{TG}}^{-1}(1-t),
\qquad
t\sim\mathrm{Uniform}(0,1).
\label{eq:middle-flat-scheduler}
\end{align}
Recall that the forward process of MDM is defined as follows:
\begin{equation}
q(\mathbf{x}_t^{i} \mid \mathbf{x})=q_{t|0}(\cdot \mid x^i) = \mathrm{Cat}\!\left(\alpha_t \e_{x^i} + (1 - \alpha_t)\e_m\right),\nonumber
\end{equation}
Thus, the model indeed sees many more middle-context samples if we set $\alpha_t$ as $\alpha_t^{\mathrm{MF}}$ rather than the original linear scheduler of MDM, \ie, $1-t$, as visualized in Fig.~\ref{fig:middle_flat_scheduler}.

However, this does not give the same objective as bell-shaped time sampling and is equal to the theoretical NELBO of MDM (\eqref{eq:loss_mdlm}). 
The reason is that the NELBO already contains the scheduler-dependent prefactor $-\alpha_t'/(1-\alpha_t)$. Formally,
\begin{align}
\frac{d\alpha_t^{\mathrm{MF}}}{dt}
=
-\frac{1}{f_{\mathrm{TG}}(\alpha_t^{\mathrm{MF}})},
\end{align}
and therefore
\begin{align}
\frac{-\alpha_t^{\mathrm{MF}\,\prime}}{1-\alpha_t^{\mathrm{MF}}}
=
\frac{1}{
(1-\alpha_t^{\mathrm{MF}})
f_{\mathrm{TG}}(\alpha_t^{\mathrm{MF}})
}.
\label{eq:middle-flat-weight-in-t}
\end{align}
Since $f_{\mathrm{TG}}(a)$ is largest around the middle region, the NELBO prefactor in \eqref{eq:middle-flat-weight-in-t} becomes smallest near the middle and larger away from it.
In other words, the apparent oversampling of middle-context scheduler values is compensated by a smaller NELBO prefactor.

Therefore, the practical difference between bell-shaped time sampling and the middle-flat scheduler is twofold: 1) as discussed above, the middle-flat scheduler does not modify the learning signal itself, and thus cannot concentrate training on the middle-context region. 2) The scaling factor ${-\alpha_t^{\mathrm{MF}\,\prime}}/{1-\alpha_t^{\mathrm{MF}}}$ of the middle-flat scheduler becomes much larger near $t=0$ or $t=1$ than that of $\alpha_t=1-t$, increasing the variance of the mini-batch estimator itself. For these two reasons, we conjecture that the middle-flat scheduler performs worse than the conventional MDM loss.

\noindent\textbf{Importance sampling.}
Another possible alternative is to keep the canonical absorbing-state scheduler $\alpha_t=1-t$, but to \emph{reweight the loss} so that the resulting objective matches bell-shaped time sampling in expectation.
Concretely, we keep drawing $t$ uniformly, and use the importance ratio
$f_{\mathrm{TG}}(t)$, then the reweighted objective then becomes
\begin{align*}
\mathcal{L}_{\mathrm{IS}}=\int_{0}^{1} \frac{f_{\mathrm{TG}}(t)}{t}\mathbb{E}_{\x\sim p_{\mathrm{data}},\,\x_t\sim q_{t|0}}\bigl[-\sum_{i:x_t^i=\mask}\log p_\theta(x^i\mid \x_t,t)\bigr]dt
\end{align*}
where $-\alpha_t'/(1-\alpha_t)=1/t$ for $\alpha_t=1-t$.
Here, if we drop $t$ in the denominator, $\mathcal{L}_{\mathrm{IS}}$ matches the \emph{same expectation} as bell-shaped time sampling. The result shown in Fig.~\ref{fig:other_alternatives} corresponds to the setting where we drop $t$ in the denominator, but we observed similar trends even when we do not drop $t$ in the denominator.

The practical difference between importance sampling and ours is that importance sampling does \emph{not} alter which corruption levels are actually drawn in a minibatch: samples are still generated from the uniform schedule, and only their scalar losses are reweighted afterward.
By contrast, bell-shaped time sampling directly draws middle-context samples more frequently.
Hence, although importance sampling and bell-shaped time sampling optimize the same objective in expectation, their optimization dynamics can differ substantially in practice; the former may suffer from a noisier gradient estimator because informative middle-context samples are not physically sampled more often.

\subsection{Stratified Sampling}
Recall that our bell-shaped time sampling loss is given as follows:
\begin{align}
\textstyle \hat{\mathcal{L}}_{\pi}
=
\mathbb{E}_{\x\sim p_{\mathrm{data}},\,t\sim\pi,\,\x_t\sim q_{t|0}(\cdot\mid\x)}
\Bigl[
-
\sum_{i:x_t^i=\mask}
\log p_\theta(x^i\mid\x_t,t)
\Bigr].
\end{align}
A practical issue arises when $\pi$ is sharply concentrated around the middle-context region. If each $t_i$ in a mini-batch is drawn independently from $\pi$, some batches can contain too many samples from the same context regime, which increases the variance of the gradient estimator and may destabilize optimization. 
To mitigate this, we utilize \emph{stratified time sampling}, which is widely used in prior works~\citep{sahoo2024simple,sahoo2025diffusionduality}.
Intuitively, stratified sampling divides the probability mass of the target time distribution into $B$ equal-probability strata and draws one sample from each stratum.
This makes each mini-batch cover the target time distribution more evenly than i.i.d. sampling, while preserving the expectation of the batch estimator.

To our knowledge, no prior work has applied stratified sampling to bell-shaped time sampling distributions, so we describe it here in detail. Let $F_{\pi}$ denote the CDF of $\pi$. For a batch of size $B$, instead of i.i.d. sampling $t_i {\sim} \pi$, we first stratify the unit interval:
\[
\hat{u}_i=\tfrac{i-1}{B}, \qquad \epsilon_i \sim \mathrm{U}\big([0,\tfrac{1}{B})\big), \qquad u_i=\hat{u}_i+\epsilon_i,
\]
for $i\in[B]$, and then map through the inverse CDF, $t_i^{\mathrm{strat}} = F_{\pi}^{-1}(u_i)$.
This preserves the expectation of the batch estimator while reducing its variance compared with naive sampling. Note that all experiments in our paper utilized stratified sampling.

\section{Limitations}\label{sec:limitations}
We showed that bell-shaped time sampling is effective not only for from-scratch training, but also for billion-scale CPT with ARM initialization.
In Sec.~\ref{sec:training_dynamics} and Appendix~\ref{sec:ablation_t_sampling}, we further analyzed the training dynamics of a wide range of bell-shaped distributions on LM1B.
Nevertheless, the optimal bell-shaped time distribution may depend on dataset characteristics and model scale, and a more systematic analysis of this dependence remains an important direction.
Still, our experiments suggest that a Gaussian distribution with mean $0.5$ and standard deviation $0.1$ works robustly across most settings.
Another promising future direction is to investigate whether bell-shaped time sampling can also benefit post-training methods, such as reinforcement learning.

\section{Qualitative Examples}\label{sec:qualitative_examples}

In this section, we provide qualitative generation examples from the instruction-following models evaluated in Sec.~\ref{sec:result_cpt}. Specifically, we compare the outputs of the base MDM and the bell-shaped time sampling trained MDM after applying the same ShareGPT supervised fine-tuning procedure. These qualitative examples indicate that, after the same number of training steps, the base MDM has not yet acquired the ability to perform conditional generation, whereas our MDM trained with bell-shaped time sampling already demonstrates this ability.

\begin{figure*}[t]
\centering
\small

\begin{subfigure}{\textwidth}
\centering

\begin{tcolorbox}[
    colback=white,
    colframe=black,
    boxrule=0.8pt,
    arc=2pt,
    left=6pt,
    right=6pt,
    top=6pt,
    bottom=6pt,
]

\textbf{(a) Salary-negotiation email generation}

\vspace{0.4em}

\begin{tcolorbox}[
    colback=gray!4,
    colframe=gray!45,
    boxrule=0.5pt,
    arc=2pt,
    left=4pt,
    right=4pt,
    top=4pt,
    bottom=4pt,
    title=\textbf{User Prompt},
    fonttitle=\small
]
You need to write an email to negotiate your salary.
\end{tcolorbox}

\vspace{0.4em}

\begin{minipage}[t]{0.485\linewidth}
\begin{tcolorbox}[
    colback=red!2,
    colframe=red!45!black,
    boxrule=0.5pt,
    arc=2pt,
    left=4pt,
    right=4pt,
    top=4pt,
    bottom=4pt,
    title=\textbf{Base MDM},
    fonttitle=\small
]
\ttfamily\scriptsize
However, to understand how you need to negotiate your salary, you need to understand the following steps.\\[0.4em]

To negotiate your salary, you need to need to understand the following of steps steps:\\[0.4em]

1. negotiate a fair salary: negotiating a fair salary is a salary that is fair to the parties and the b.\\
2. negotiate a fair salary that is a and the a a salary that is fair to the parties a and the b.\\
3. negotiate a fair salary that is fair to the parties a and the b.\\
4. negotiate fair salary5. negotiate the fair salary that is fair to the parties and the b\ldots
\end{tcolorbox}
\end{minipage}
\hfill
\begin{minipage}[t]{0.485\linewidth}
\begin{tcolorbox}[
    colback=blue!2,
    colframe=blue!45!black,
    boxrule=0.5pt,
    arc=2pt,
    left=4pt,
    right=4pt,
    top=4pt,
    bottom=4pt,
    title=\textbf{Gaussian time trained MDM},
    fonttitle=\small
]
\ttfamily\scriptsize
Sure, here's an example of an email that you could use to negotiate your salary:\\[0.4em]

Subject: Request for a Salary Raise\\[0.4em]

Dear [Name],\\[0.4em]

I am writing to request a salary raise. I understand that you are a full-time employee and you are entitled to a salary salary. However, I would like to request that you consider a salary raise.\\[0.4em]

I understand that you are a valuable employee and I would like to ensure that you are able to achieve your full potential. I would like to schedule an effort to negotiate your salary.\\[0.4em]

Thank you for your consideration\ldots
\end{tcolorbox}
\end{minipage}

\end{tcolorbox}

\end{subfigure}

\vspace{0.7em}

\begin{subfigure}{\textwidth}
\centering

\begin{tcolorbox}[
    colback=white,
    colframe=black,
    boxrule=0.8pt,
    arc=2pt,
    left=6pt,
    right=6pt,
    top=6pt,
    bottom=6pt,
]

\textbf{(b) Professional feedback email generation}

\vspace{0.4em}

\begin{tcolorbox}[
    colback=gray!4,
    colframe=gray!45,
    boxrule=0.5pt,
    arc=2pt,
    left=4pt,
    right=4pt,
    top=4pt,
    bottom=4pt,
    title=\textbf{User Prompt},
    fonttitle=\small
]
Draft a professional email seeking your supervisor's feedback on the `Quarterly Financial Report' you prepared. Ask specifically about the data analysis, presentation style, and the clarity of conclusions drawn. Keep the email short and to the point.
\end{tcolorbox}

\vspace{0.4em}

\begin{minipage}[t]{0.485\linewidth}
\begin{tcolorbox}[
    colback=red!2,
    colframe=red!45!black,
    boxrule=0.5pt,
    arc=2pt,
    left=4pt,
    right=4pt,
    top=4pt,
    bottom=4pt,
    title=\textbf{Base MDM},
    fonttitle=\small
]
\ttfamily\scriptsize
Additionally, please encourage the reader to read the rest of the report.\\[0.4em]

Now, here's the revised version of the email with the changes:\\[0.4em]

Subject: Your supervisor's feedback on the requested:\\
, the Quarterly Financial Report'\\[0.4em]

Subject: Feedback on the `Quarterly Financial Report'\\
To:\\
:: Your Supervisor's Name,\\
\ldots\\
{[Your Supervisor's Name]}\\
Best regards,\\
{[Your Name,\\
Your Supervisor Name]}\ldots
\end{tcolorbox}
\end{minipage}
\hfill
\begin{minipage}[t]{0.485\linewidth}
\begin{tcolorbox}[
    colback=blue!2,
    colframe=blue!45!black,
    boxrule=0.5pt,
    arc=2pt,
    left=4pt,
    right=4pt,
    top=4pt,
    bottom=4pt,
    title=\textbf{Gaussian time trained MDM},
    fonttitle=\small
]
\ttfamily\scriptsize
Subject:\\[0.4em]

Dear ,\\[0.4em]

I am writing to request your feedback on the `Quarterly Financial Report' prepared for the upcoming quarter. I would appreciate your feedback on the data analysis, presentation style, and clarity of the conclusions drawn.\\[0.4em]

I would also appreciate your feedback on the presentation of the data and the overall clarity of the financial report.\\[0.4em]

Thank you for your time and consideration\ldots
\end{tcolorbox}
\end{minipage}

\end{tcolorbox}

\end{subfigure}

\vspace{-0.3em}
\caption{
\textbf{Qualitative comparisons on professional email generation tasks.}
Across both email-writing tasks, the baseline MDM frequently degenerates into malformed formatting and repetitive text fragments. In contrast, the Gaussian time trained MDM generates substantially more coherent responses with recognizable email structure, appropriate formatting, and improved instruction following, despite minor repetition artifacts.
}
\label{fig:email_generation_combined}
\end{figure*}
\begin{figure*}[t]
\centering
\small

\begin{subfigure}{\textwidth}
\centering

\begin{tcolorbox}[
    colback=white,
    colframe=black,
    boxrule=0.8pt,
    arc=2pt,
    left=6pt,
    right=6pt,
    top=6pt,
    bottom=6pt,
]

\textbf{(a) Method extraction from scientific abstracts}

\vspace{0.4em}

\begin{tcolorbox}[
    colback=gray!4,
    colframe=gray!45,
    boxrule=0.5pt,
    arc=2pt,
    left=4pt,
    right=4pt,
    top=4pt,
    bottom=4pt,
    title=\textbf{User Prompt},
    fonttitle=\small
]
Extract the method that has been used in the research from the abstract of the paper. Give the exact sentences from the abstract that discuss the method.
\end{tcolorbox}

\vspace{0.35em}

\begin{tcolorbox}[
    colback=gray!1,
    colframe=gray!35,
    boxrule=0.4pt,
    arc=2pt,
    left=4pt,
    right=4pt,
    top=4pt,
    bottom=4pt,
    title=\textbf{Input Abstract},
    fonttitle=\small
]
\scriptsize
We present a state-of-the-art speech recognition system developed using end-to-end deep learning. Our architecture is significantly simpler than traditional speech systems, which rely on laboriously engineered processing pipelines; these traditional systems also tend to perform poorly when used in noisy environments. In contrast, our system does not need hand-designed components to model background noise, reverberation, or speaker variation, but instead directly learns a function that is robust to such effects. We do not need a phoneme dictionary, nor even the concept of a ``phoneme.'' Key to our approach is a well-optimized RNN training system that uses multiple GPUs, as well as a set of novel data synthesis techniques that allow us to efficiently obtain a large amount of varied data for training.
\end{tcolorbox}

\vspace{0.4em}

\begin{minipage}[t]{0.485\linewidth}
\begin{tcolorbox}[
    colback=red!2,
    colframe=red!45!black,
    boxrule=0.5pt,
    arc=2pt,
    left=4pt,
    right=4pt,
    top=4pt,
    bottom=4pt,
    title=\textbf{Base MDM},
    fonttitle=\small
]
\ttfamily\scriptsize
The paper discusses a speech recognition method developed using end-to-end deep learning that is simpler than traditional speech systems. The. The. The.. The... The. processing noisy environments.. The. The. The The.. The. The The..\ldots
\end{tcolorbox}
\end{minipage}
\hfill
\begin{minipage}[t]{0.485\linewidth}
\begin{tcolorbox}[
    colback=blue!2,
    colframe=blue!45!black,
    boxrule=0.5pt,
    arc=2pt,
    left=4pt,
    right=4pt,
    top=4pt,
    bottom=4pt,
    title=\textbf{Gaussian time trained MDM},
    fonttitle=\small
]
\ttfamily\scriptsize
``We present a state-of-the-art speech recognition system developed using end-to-end deep learning.''\\[0.4em]

``In contrast, our system does not need hand-designed components to model background noise, reverberation, or speaker variation, but instead directly learns a function that is robust to such effects.''\\[0.4em]

``Key to our approach is a well-optimized RNN training system that uses multiple GPUs, as well as a set of novel data synthesis techniques that allow us to efficiently obtain a large amount of varied data for training.''
\end{tcolorbox}
\end{minipage}

\end{tcolorbox}

\end{subfigure}

\vspace{0.7em}

\begin{subfigure}{\textwidth}
\centering

\begin{tcolorbox}[
    colback=white,
    colframe=black,
    boxrule=0.8pt,
    arc=2pt,
    left=6pt,
    right=6pt,
    top=6pt,
    bottom=6pt,
]

\textbf{(b) Open-ended opinion generation}

\vspace{0.4em}

\begin{tcolorbox}[
    colback=gray!4,
    colframe=gray!45,
    boxrule=0.5pt,
    arc=2pt,
    left=4pt,
    right=4pt,
    top=4pt,
    bottom=4pt,
    title=\textbf{User Prompt},
    fonttitle=\small
]
What do you think about ChatGPT?
\end{tcolorbox}

\vspace{0.4em}

\begin{minipage}[t]{0.485\linewidth}
\begin{tcolorbox}[
    colback=red!2,
    colframe=red!45!black,
    boxrule=0.5pt,
    arc=2pt,
    left=4pt,
    right=4pt,
    top=4pt,
    bottom=4pt,
    title=\textbf{Base MDM},
    fonttitle=\small
]
\ttfamily\scriptsize
* Do you think ChatGPT isPT?\\
* Chat?\\
* Do you think ChatGPT is a GPT?\\[0.4em]

* Do you think ChatGPT is a Chat?\\
* Do you think think ChatGPT is a Chat?\\
* Do you think ChatGPT is a GPT?\\
* Do you think ChatGPT is ChatGPT is a GPT?\\
* Do you think ChatGPT is a GPT?\\
* Do Do you think ChatGPT is a GPT?\ldots
\end{tcolorbox}
\end{minipage}
\hfill
\begin{minipage}[t]{0.485\linewidth}
\begin{tcolorbox}[
    colback=blue!2,
    colframe=blue!45!black,
    boxrule=0.5pt,
    arc=2pt,
    left=4pt,
    right=4pt,
    top=4pt,
    bottom=4pt,
    title=\textbf{Gaussian time trained MDM},
    fonttitle=\small
]
\ttfamily\scriptsize
I am impressed with ChatGPT. ChatGPT is a powerful AI model that can handle a wide range of tasks, including language translation, text generation, and sentiment analysis. It is also able to learn from large amounts of data and adapt to new situations.\\[0.4em]

I am particularly impressed with ChatGPT's ability to learn quickly, adapt to new tasks, and provide high-quality responses.\\[0.4em]

ChatGPT is a large language model designed to generate text based on user input and assist with a wide variety of applications\ldots
\end{tcolorbox}
\end{minipage}

\end{tcolorbox}

\end{subfigure}

\vspace{-0.3em}
\caption{
\textbf{Qualitative comparisons between the baseline MDM and Gaussian time trained MDM.}
Across both structured extraction and open-ended generation tasks, the baseline model frequently degenerates into repetitive or malformed text, whereas the Gaussian-trained model produces substantially more coherent and instruction-following responses.
}
\label{fig:qualitative_combined}
\end{figure*}
\clearpage
\section{Assets and Licenses}
We provide licenses for datasets and benchmarks in Table~\ref{tab:asset_licenses}.
\begin{table}[t]
\centering
\small
\caption{Existing datasets and benchmarks used in this work. ``Not explicitly specified'' means that we could not find an explicit dataset license from the official provider or public dataset card.}
\label{tab:asset_licenses}
\begin{tabular}{p{0.28\linewidth} p{0.23\linewidth} p{0.41\linewidth}}
\hline
\textbf{Asset} & \textbf{Usage} & \textbf{License / usage terms} \\
\hline

LM1B~\citep{chelba2013lm1b}
& Pretraining / validation
& Not explicitly specified on the official benchmark page; derived from WMT 2011 News Crawl data. \\

OpenWebText~\citep{Gokaslan2019owt}
& Pretraining / validation
& CC0-1.0 for dataset packaging. \\

FineWeb-Edu~\citep{lozhkov2024fineweb-edu}
& Continual pretraining
& ODC-By 1.0; CommonCrawl Terms of Use apply. \\

\hline


WinoGrande~\citep{sakaguchi2021winogrande}
& Downstream SFT / evaluation
& CC-BY for dataset, version not specified in official README; Apache-2.0 for code \\

HellaSwag~\citep{zellers2019hellaswag}
& Downstream SFT / evaluation
& MIT. \\

Social IQA~\citep{sap2019social}
& Downstream SFT / evaluation
& CC-BY-4.0. \\

PIQA~\citep{bisk2020piqa}
& Downstream SFT / evaluation
& Academic Free License v3.0 (AFL-3.0) \\


OpenBookQA~\citep{mihaylov2018can}
& Downstream SFT / evaluation
& Apache-2.0 for official repository; dataset license not separately specified. \\


TriviaQA~\citep{joshi2017triviaqa}
& Downstream SFT / evaluation
& Apache-2.0 for code and data \\

ROCStories~\citep{mostafazadeh2016corpus}
& Downstream SFT / evaluation
& CC-BY-4.0. \\

\hline

PTB~\citep{marcus1993building}
& Zero-shot PPL evaluation
& LDC User Agreement for Non-Members. \\

WikiText~\citep{merity2016pointer}
& Zero-shot PPL evaluation
& CC-BY-SA-4.0. \\

LAMBADA~\citep{paperno2016lambada}
& Zero-shot PPL / downstream evaluation
& CC-BY-4.0. \\

AG News~\citep{zhang2015character}
& Zero-shot PPL evaluation
& Research / non-commercial use. \\

Scientific Papers: PubMed / arXiv~\citep{cohan2018discourse}
& Zero-shot PPL evaluation
& Source-specific OA licenses; code repository Apache-2.0. \\

\hline

\textsc{ShareGPT-GPT4}
& Instruction-following SFT
& CC-BY-4.0 for the commonly used \texttt{sharegpt\_gpt4} release. \\

AlpacaEval 2.0~\citep{li2023alpacaeval}
& Instruction-following evaluation
& CC-BY-NC-4.0 per HF dataset metadata; Apache-2.0 for code. Note: HF script contains an inconsistent CC-BY-4.0 field. \\

MT-Bench~\citep{zheng2023judging}
& Instruction-following evaluation
& Apache-2.0 via FastChat. \\

\hline
\end{tabular}
\end{table}



\end{document}